%% file: iclr2026_conference.tex
\documentclass{article} 
\usepackage{iclr2026_conference,times}

\input{math_commands.tex}

\usepackage{hyperref}
\usepackage{url}
\usepackage{amsmath}
\usepackage{amsfonts}
\usepackage{amssymb}
\usepackage{graphicx}
\usepackage{float}
\usepackage{mathrsfs}
\usepackage{stmaryrd}
\usepackage{caption}
\usepackage{subcaption}
\usepackage{mathabx}
\usepackage{rotating}
\usepackage{enumitem}
\usepackage{graphicx}
\usepackage{multirow}
\usepackage{amsthm}
\usepackage{xcolor}
\usepackage{mdframed}
\usepackage{varwidth} 
\usepackage{changepage}
\usepackage{etoolbox}
\usepackage{colortbl}
\usepackage[table]{xcolor}
\usepackage{wrapfig}
\usepackage[export]{adjustbox}

\usepackage{pifont}
\usepackage{tikz}
\usetikzlibrary{tikzmark, decorations.pathreplacing, positioning, arrows.meta}
\pgfdeclarelayer{nodelayer}
\pgfdeclarelayer{edgelayer}
\pgfdeclarelayer{background}
\pgfdeclarelayer{main}
\pgfsetlayers{nodelayer, edgelayer, background,main}
\input{s.tikzstyles}

\newcommand{\coloredsquare}[1]{
  \textcolor[HTML]{#1}{\rule{.8em}{.8em}}
}

\newcommand{\ie}{\textit{i.e.} } 

\newcommand\inlineeqno{\stepcounter{equation}\ (\theequation)}
\newcommand{\atright}[1]{
  \unskip 
  \hfil 
  \makebox[0pt][r]{#1}
  \hfilneg 
  \hspace{0pt}
  \par 
}

\newtheorem*{rep@theorem}{\rep@title}
\newcommand{\newreptheorem}[2]{
\newenvironment{rep#1}[1]{
 \def\rep@title{#2 \ref{##1}}
 \begin{rep@theorem}}
 {\end{rep@theorem}}}
\makeatother
\newtheorem{theorem}{Theorem}
\newtheorem*{theoremprime}{Theorem 3'}
\newtheorem{assumption}{Assumption}
\newtheorem{definition}{Definition}
\newtheorem{proposition}{Proposition}
\newreptheorem{theorem}{Theorem}

\title{Disentangled Representation Learning through Unsupervised Symmetry Group Discovery}

\iclrfinalcopy
\author{Barthélémy Dang-Nhu, Louis Annabi \& Sylvain Argentieri \\
Sorbonne Université, CNRS,\\
Institut des Systèmes Intelligents et de Robotique, ISIR,\\
F-75005 Paris, France \\
\texttt{\{dangnhu,annabi,argentieri\}@isir.upmc.fr}
}

\begin{document}

\maketitle

\begin{abstract}
Symmetry-based disentangled representation learning leverages the group structure of environment transformations to uncover the latent factors of variation. Prior approaches to symmetry-based disentanglement have required strong prior knowledge of the symmetry group's structure, or restrictive assumptions about the subgroup properties. In this work, we remove these constraints by proposing a method whereby an embodied agent autonomously discovers the group structure of its action space through unsupervised interaction with the environment. We prove the identifiability of the true symmetry group decomposition under minimal assumptions, and derive two algorithms: one for discovering the group decomposition from interaction data, and another for learning Linear Symmetry-Based Disentangled (LSBD) representations without assuming specific subgroup properties. Our method is validated on three environments exhibiting different group decompositions, where it outperforms existing LSBD approaches.

\end{abstract}
\section{Introduction}

An important property of a representation is its disentanglement, as it enables a form of interpretability~\citep{higgins2017betavae}, fairness~\citep{locatello2019fairness}, improved transferability~\citep{dranet, bengio_meta-transfer_2019}, and the ability to directly manipulate the latent space~\citep{kim_disentangling_2019, chen_infogan_2016}. For this reason, many unsupervised disentangled representation learning methods have been proposed, initially relying on Variational Autoencoders (VAEs)~\citep{kim_disentangling_2019, kumar_variational_2018,higgins2017betavae} or Generative Adversarial Networks (GANs)~\citep{chen_infogan_2016}. ~\citet{locatello_challenging_2019} showed that unsupervised disentanglement requires additional prior knowledge or inductive biases. Thus, several approaches, relying on different additional assumptions, address the question of unsupervised disentangled representation learning.\\
Learning disentangled representations relies on the assumption that there exist true underlying factors of variation in the environment~\citep{bengio2013representation}, and aims to infer them from available observations. The symmetry-based approach of~\citet{higgins_towards_2018} proposes to achieve such disentanglement by exploiting the subgroup decomposition of the group of environment transformations, called symmetries. Each subgroup is associated with a specific part of the representation. When a symmetry from a particular subgroup is applied, only the corresponding part of the representation varies. ~\citet{caselles-dupre_symmetry-based_2019} demonstrated that symmetry-based disentanglement is only possible when access is granted to transitions (initial observation, transformation, resulting observation). Several notable works follow this approach~\citep{quessard_learning_2020, tonnaer_quantifying_2022,keurti2023homomorphism}. However, they rely on restrictive assumptions regarding the nature of the symmetry group or prior knowledge about its structure. This paper aims to overcome these limitations by designing algorithms and providing proofs for the autonomous discovery of the symmetry group structure, and its exploitation for disentangled representation learning.\\
\newpage
Our contributions are as follows:
\begin{itemize}[leftmargin=2em]
    \item We prove, under certain assumptions, the identifiability of the ground-truth group decomposition of the symmetry group from a dataset of transitions.
    \item We derive from this theorem an algorithm for the discovery of the symmetry group decomposition.
    \item We introduce a novel method for learning a LSBD representation directly from a group decomposition, without imposing any structural assumptions on the subgroups, and we provide theoretical guarantees of disentanglement under specific assumptions.
    \item We combine these two algorithms and show experimentally that the full method outperforms other LSBD methods on several datasets with different group structures.
\end{itemize}

\section{Preliminaries}\label{sec:preliminaries}

\begin{figure}[htbp]
  \centering
  \begin{minipage}[b]{0.65\textwidth}
    \centering
    \resizebox*{!}{0.1\textheight}{\input{tikz/expe_fl.tikz}}
    \caption{Colored Flatland environment. The group of symmetries can be decomposed as $G = G_x \times G_y \times G_c$ corresponding respectively to the cyclic groups of translations on the horizontal axis/vertical axis, and in a list of predefined colors. The agent has access to several symmetries (or actions) $\mathcal{G}_x = \{x^+, x^-\} \subset G_x$, $\mathcal{G}_y = \{y^+, y^-\} \subset G_y$, and $\mathcal{G}_c = \{c^+, c^-\} \subset G_c$.}
    \label{fig:flatland}
  \end{minipage}
  \hfill
  \begin{minipage}[b]{0.30\textwidth}
    \centering
    \resizebox{\textwidth}{!}{\input{tikz/equiv.tikz}}
    \caption{Equivariance property.}
    \label{fig:equivariance}
    \vspace{.8cm}
  \end{minipage}
\end{figure}

We consider the framework of Linear Symmetry-Based Disentanglement (LSBD)~\citep{higgins_towards_2018}, which provides a formal definition of disentanglement suitable for deriving identifiability results and guiding the design of representation learning algorithms. Let $\mathcal{W}$ denote the set of possible environmental states. We define a generative process $b: \mathcal{W} \rightarrow \mathcal{X}$ that maps a state to an observation, and an encoder $h: \mathcal{X} \rightarrow \mathcal{Z}$ that maps observations into a latent representation. The overall mapping is then given by $f = h \circ b: \mathcal{W} \rightarrow \mathcal{Z}$. We assume that $b$ is an intrinsic (and unknown) property of the environment, while $h$ is agent-specific and can be learned. 

We further assume the existence of a symmetry group $G$ acting on $\mathcal{W}$. A key assumption is that $G$ satisfies the standard group axioms: the existence of an identity element, closure under composition, and the existence of inverses. This group structure enables the definition of a group action $\cdot_{\mathcal{W}} : G \times \mathcal{W} \rightarrow \mathcal{W}$, which maps each pair $(g, w) \in G \times \mathcal{W}$ to a transformed world state $w' \in \mathcal{W}$ resulting from the application of $g$. The agent is endowed with an action set $\mathcal{G} \subset G$ that contains only a subset of the full group. Crucially, $\mathcal{G}$ is not required to form a group itself, in particular, the agent's actions may not be reversible, and the identity element of $G$ may not be included in $\mathcal{G}$.\\
We also assume that the group $G$ admits a decomposition into a direct product of subgroups, i.e., $G = G_1 \times \cdots \times G_K$. For example, in the Flatland environment illustrated in Figure~\ref{fig:flatland}, the symmetry group $G$ can be decomposed into three subgroups corresponding to cyclic groups of horizontal translations, vertical translations, and color shifts.

\begin{definition}[Linear Symmetry Based Disentanglement]
\label{def:sbda}
A representation $h$ is said to be symmetry-based disentangled (SBD) with respect to $\langle \mathcal W,b,\prod_k G_k\rangle$ if:
\begin{enumerate}
    \item There exists a group action $\cdot_{\mathcal Z}: G \times \mathcal Z \rightarrow \mathcal Z$,
    \item Equivariance holds: $\forall g \in G, w \in \mathcal W$, we have $g \cdot_{\mathcal Z} f(w) = f(g \cdot_{\mathcal W} w)$,
    \item There exists a decomposition $\mathcal Z = \mathcal Z_1 \oplus \cdots \oplus \mathcal Z_K$ and group actions $\cdot_k : G_k \times Z_k \rightarrow Z_k$ such that \[
    (g_1, \dots, g_K) \cdot_{\mathcal Z} (z_1, \dots, z_K) = (g_1 \cdot_1 z_1, \dots, g_K \cdot_K z_K),
    \]
    \item The function $h$ is injective
\end{enumerate}
Moreover, the representation is said to be linearly disentangled (LSBD) if $\cdot_{\mathcal Z}$ is linear, \ie there exists a representation $\rho: G \rightarrow GL(\mathcal Z)$ such that $g \cdot_{\mathcal Z} z = \rho(g) z$.
\end{definition}

The original definition provided by~\citet{higgins_towards_2018} does not explicitly state the fourth condition requiring the encoder $h$ to be injective. However, this constraint is implicitly assumed within the LSBD framework; without it, any constant mapping would trivially satisfy the LSBD criteria.
\citet[Theorem 1]{caselles-dupre_symmetry-based_2019} demonstrated that learning an LSBD representation is impossible from observations alone, they proved that multiple distinct world sets and group actions can produce the same set of observations. However, these environments differ in their transitions. Consequently, they proposed leveraging transitions of the form $(x, g, x')$ rather than relying solely on passive observations $x$. This perspective naturally aligns with the reinforcement learning setting, where agents can actively interact with the environment by performing actions that induce state transitions. Accordingly, in the remainder of this work, we refer to symmetries $g$ as actions. Mathematical background and a symbol table can be found in Appendix~\ref{sec:maths}.\\

\section{Related Work}
\label{sec:related}

Several methods have been proposed to learn LSBD representations from transitions, all relying on auto-encoder architectures. \emph{Forward-VAE}~\citep{caselles-dupre_symmetry-based_2019} augments the evidence lower bound (ELBO) of a VAE with a latent-space action loss. Disentanglement is encouraged by constraining the matrices $\rho(g)$ to follow a predefined structure, which requires prior knowledge of the subgroup decomposition of the symmetry group, as well as the minimal number of latent dimensions assigned to each subgroup.
Another method, proposed by~\citet{quessard_learning_2020}, referred to as \emph{$SO$-Based Disentangled Representation Learning} (SOBDRL), aims to learn representations with a prediction loss that aim to infer the next observation $x'$ from $(x,g)$. The action matrices are parameterized as elements of the special orthogonal group $SO(d)$, the disentanglement is encouraged with a regularization term that  minimizes the number of latent dimensions involved in each transformation, encouraging transformations constrained to $SO(2)$.
\emph{LSBD-VAE}, introduced by~\citet{tonnaer_quantifying_2022}, relies on the $\Delta$-VAE architecture~\citep{rey_diffusion_2020}, which supports latent spaces defined over arbitrary manifolds. In this framework, both the group decomposition $G = G_1 \times \cdots \times G_K$ and its representation $\rho$ are assumed to be known a priori. This prior knowledge allows the model to align the latent geometry with the group structure and to incorporate an action-aligned loss term, in the spirit of Forward-VAE.
\emph{Homomorphism AutoEncoder} (HAE), proposed by~\citet{keurti2023homomorphism}, assumes that $G$ is a Lie group and that the agent has access to $\varphi(g)$, where $\varphi$ is an unknown non-linear mapping. The action representation is learned from this mapping by jointly predicting both current and future states in the observation and latent spaces. Disentanglement is encouraged by enforcing a block-diagonal structure on the action matrices.

Some other methods aim to learn LSBD representation solely from the observation based on the metric of the ground-truth Riemannian manifold~\citep{pfau2020disentangling}, the cardinality of each cyclic group~\citep{yang2022towards} or the commutativity of the whole symmetry group~\citep{zhu2021commutative}.

We observe that all state-of-the-art methods based on transitions rely on assumptions regarding the structure of the symmetry group or its subgroups. In contrast, the goal of this work is to relax these assumptions by introducing a symmetry-based disentangled representation learning approach that does not require any prior knowledge of the group decomposition.

\section{Methods}

We suppose that the available actions $\mathcal G$ are a subset of the whole group action $G$ and that there is a dataset $\mathcal D$ of transitions $(x,g,x')$ where $g\in\mathcal G$ are the indices of the actions taken by the agent. We aim to learn an LSBD representation $h$. Our method consists of three steps:
\begin{enumerate}[leftmargin=2em]
    \item We learn an entangled representation \ie a representation satisfying only points 1, 2 and 4 of Definition~\ref{def:sbda} to learn an action representation $\rho:G\rightarrow GL(\mathcal Z)$ and an encoder $h:\mathcal X\rightarrow \mathcal Z$.
    \item From $\rho$ and $h$ we compute a decomposition $G=G_1\times \cdots\times G_K$ by regrouping actions using a custom pseudo-distance based on group theory.
    \item From this decomposition we learn a disentangled representation.
\end{enumerate}

\subsection{(Step 1) Learn an entangled representation}

Our objective is to learn an encoder $h: \mathcal{X} \rightarrow \mathcal{Z} = \mathbb{R}^d$ and an action representation $\rho_\psi: \mathcal{G} \rightarrow \mathbb{R}^{d \times d}$ satisfying the equivariance property defined in Definition~\ref{def:sbda}. As there is no prior knowledge about th action matrices, each matrix $\rho_\psi(g)$ is directly parameterized by $d^2$ learnable scalars, resulting in a total of $|\mathcal{G}| \times d^2$ parameters.
To perform this step, we introduce a method referred to as \emph{Action-based VAE} (A-VAE), which builds upon the variational autoencoder (VAE) framework~\citep{Kingma2014, rezende2014stochastic}. The goal is to map each observation $x \in \mathcal{X}$ to a latent representation in $\mathcal{Z} = \mathbb{R}^d$. Let $\tau = (x, g, x')$ denote a transition, and let $z$ and $z'$ be the corresponding latent representations of $x$ and $x'$, respectively. The model architecture is illustrated in Figure~\ref{fig:gm} and defined as follows:
\begin{wrapfigure}{r}{0.35\textwidth}
    \vspace{-1.5em}
    \begin{center}
        \resizebox{0.3\textwidth}{!}{\input{tikz/pgm.tikz}}
    \end{center}
    \vspace{-1em}
    \caption{Graphical model\\ \ }
    \label{fig:gm}
    \vspace{-6em}
\end{wrapfigure}
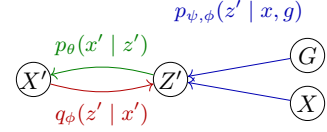

\begin{itemize}
    \item \textcolor{green!50!black}{$p_\theta(X'|z') = \mathcal{N}\big(\mu_\theta(z'), Diag(\sigma_\theta(z')^2)\big)$,}\atright{\inlineeqno}
    \item \textcolor{blue!70!black}{$p_{\psi, \phi}(Z'|x, g) = \mathcal{N}\big(\rho_\psi(g)\mu_\phi(x), Id\big)$,}\atright{\inlineeqno}
    \item \textcolor{red!70!black}{$q_\phi(Z'|x') = \mathcal{N}\big(\mu_\phi(x'),Diag(\sigma_\phi(x')^2)\big)$.}\atright{\inlineeqno}
\end{itemize}

In contrast to the standard VAE, we condition the prior distribution over $Z'$ on both the past observation $x$ and the action $g$. To maximize the expected log-likelihood $\mathbb{E}[\log p(\tau)]$ with respect to the model parameters $\theta$, $\phi$, and $\psi$, we derive the corresponding evidence lower bound (ELBO) for our graphical model. As shown in Appendix~\ref{sec:elbo}, we obtain (up to an additive constant):
\begin{equation}
\begin{aligned}
    \log p(\tau) \ge
    &-\frac 12\left\| \rho_\psi(g)\mu_\phi(x)-\mu_\phi(x')\right\|^2
    -\frac 12 \|\sigma_\phi(x')\|^2
    +\sum_i\log\sigma_\phi(x')_i&&\Bigl\}\text{ action part}\\
    &-\mathbb E_{z'\sim q_\phi(z'|x')}\left[\sum_i\log\sigma_ \theta(z')_i +\left\|\dfrac {x'-\mu_\theta(z')}{\sigma_\theta(z')}\right\|^2\right]&&\Bigl\}\text{ reconstruction part}
\end{aligned}
\end{equation}

Analogously to $\beta$-VAE~\citep{higgins2017betavae}, we introduce a weighting coefficient to balance the two components of the objective, resulting in the loss function $\mathcal{L} = \mathcal{L}_{\text{REC}} + \lambda_{\text{ACT}} \mathcal{L}_{\text{ACT}}$. Each of the three conditional distributions in the model is implemented using deep neural networks trained via backpropagation. The model parameters $\theta$, $\phi$, and $\psi$ are optimized to maximize the ELBO. As in standard VAEs, we apply the reparameterization trick to enable gradient-based optimization through the reconstruction term. In practice, the standard deviations $\sigma_\theta$ and $\sigma_\phi$ are fixed. Details of the neural network architectures are provided in Appendix~\ref{sec:hyperparams}.

\subsection{(Step 2) Learn the group structure}

Once the action representation $\rho_\psi$ and the encoder $h = \mu_\theta$ have been learned, we aim to leverage them to recover the group decomposition $G = G_1 \times \cdots \times G_K$. By abuse of notation, we will treat the direct factors $G_i$ as subgroups of $G$.

\subsubsection{Assumptions}

\begin{assumption}\label{hyp:inj}
    The environment is fully observable i.e. the observation function $b:\mathcal W\rightarrow \mathcal X$ is injective.
\end{assumption}

It is a strong assumption, however it is necessary as we have the following result:
\begin{theorem}\label{th:inj}
    For a SBD representation to exist, it is necessary for the observation function $b$ to be injective (up to an interaction equivalence class).
\end{theorem}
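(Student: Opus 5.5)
The key observation is that the encoder only ever sees observations: $f = h\circ b$. If $b$ is not injective, then $f$ cannot be injective on $\mathcal W$. The equivariance condition (point 2 of Definition~\ref{def:sbda}) is stated at the level of $f$, so it will force a compatibility condition on the group action. We then show this condition means that states with the same observation are indistinguishable by any interaction, i.e., they lie in the same interaction equivalence class.

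The first step is to make "interaction equivalence" precise. We will say $w \sim w'$ if $b(g\cdot_{\mathcal W} w) = b(g\cdot_{\mathcal W} w')$ for every $g \in G$. Taking $g=e$ shows this already implies $b(w)=b(w')$. The statement "$b$ is injective up to an interaction equivalence class" then becomes: $b(w)=b(w')$ implies $w\sim w'$. Equivalently, $b$ factors injectively through $\mathcal W/\!\sim$.

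Next, assume an SBD representation $h$ exists, and let $w,w'$ satisfy $b(w)=b(w')$. Then $f(w)=h(b(w))=h(b(w'))=f(w')$. For any $g\in G$, equivariance gives
\[
f(g\cdot_{\mathcal W} w) = g\cdot_{\mathcal Z} f(w) = g\cdot_{\mathcal Z} f(w') = f(g\cdot_{\mathcal W} w').
\]
This means $h\big(b(g\cdot_{\mathcal W} w)\big) = h\big(b(g\cdot_{\mathcal W} w')\big)$. Since $h$ is injective (point 4), we get $b(g\cdot_{\mathcal W} w)=b(g\cdot_{\mathcal W} w')$ for all $g$, which is exactly $w\sim w'$. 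This is the necessity claim.

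We also check that $\sim$ is an equivalence relation compatible with the action: $w\sim w'$ implies $g\cdot w\sim g\cdot w'$, because $b(g'g\cdot w) = b(g'g\cdot w')$ for all $g'$. So $G$ acts on the quotient, and $b$ descends to an injective map on $\mathcal W/\!\sim$. This justifies treating Assumption~\ref{hyp:inj} as holding without loss of generality after quotienting: equivalent states produce identical transition data $(x,g,x')$, so no agent can separate them.

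The main obstacle is not the computation, which is short, but choosing the definition of interaction equivalence. It must be weak enough to follow from the existence of an SBD representation, yet meaningful enough that "up to equivalence" is not vacuous. I would use the definition above, quantified over all of $G$ because equivariance is required for all $g\in G$. I would also remark that if only $\mathcal G$ were used, one would quantify over the monoid generated by $\mathcal G$ instead, and the same argument would go through.
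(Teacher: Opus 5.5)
Your proof is correct and follows essentially the same route as the paper. Both use the same definition of interaction equivalence and the same chain $b(w)=b(w') \Rightarrow f(w)=f(w') \Rightarrow f(g\cdot_{\mathcal W} w)=f(g\cdot_{\mathcal W} w') \Rightarrow b(g\cdot_{\mathcal W} w)=b(g\cdot_{\mathcal W} w')$, via equivariance and injectivity of $h$. The only differences are presentational: the paper phrases this as a proof by contradiction about $\widetilde b$ on the quotient $\widetilde{\mathcal W}$, and it additionally proves that SBD-ness transfers in both directions between $\langle \mathcal W,b,\prod_k G_k\rangle$ and the quotient, which the necessity claim itself does not require.
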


The definition of the \textit{interaction equivalence class} and the proof of the theorem are provided in Appendix~\ref{sec:inj}. The key idea is that components of the world state that do not influence the agent’s interaction can be discarded, yielding an equivalent environment from the agent's perspective. In this reduced environment, the observation function must be injective for a SBD representation to exist. Although this assumption is not always stated explicitly, it is in fact a necessary condition for all SBD representation learning algorithms and is not specific to our method.\\
The next two assumptions are assumptions specific to the proposed algorithm, and are intended to replace the stronger prior assumption commonly made in the SBD literature consisting in providing prior knowledge of the group decomposition. We first assume that each action belongs to a unique subgroup $G_i$. We refer to this property as disentanglement of the action set with respect to $\prod_k G_k$. It is a strong assumption but we demonstrate empirically in Appendix~\ref{sec:hyp} that related SBD methods make a similar implicit assumption.

\begin{assumption}\label{hyp:dist}
    $\mathcal{G}$ is disentangled with respect to $G = \prod_k G_k$. That is, $\mathcal G=\mathcal G_1\cup\cdots\cup\mathcal G_K$ with $\forall k ,\ \mathcal G_k\subset G_k$.
\end{assumption}

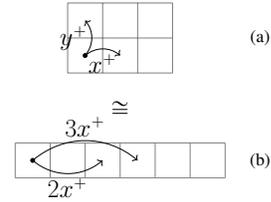
\begin{wrapfigure}{r}{0.32\textwidth}
    \vspace{-0.5em}
    \begin{center}
        \resizebox{0.25\textwidth}{!}{\input{tikz/cong.tikz}}
    \end{center}
    \caption{Two isomorphic group actions satisfying Assumption~\ref{hyp:dist}.}
    \label{fig:cong}
    \vspace{-0.5em}
\end{wrapfigure}

We argue that this assumption alone is not sufficient to recover the correct decomposition. To illustrate this point, consider two distinct environments analogous to Flatland shown Figure~\ref{fig:cong}: (a) a $2\times 3$ cyclic grid \ie $G^a=\mathbb Z/2\mathbb Z \times \mathbb Z/3\mathbb Z$ with actions $\mathcal G^a=\{x^+\}\cup\{y^+\}$ and (b) a $6\times 1$ cyclic grid \ie $G^b=\mathbb Z/6\mathbb Z$ with actions $\mathcal G^b=\{2x^+,3x^+\}$. Both environments satisfy Assumption~\ref{hyp:dist} and can share the same representation, as there exists an isomorphism from $G^a$ to $G^b$ that maps each element of $\mathcal{G}^a$ to a corresponding element in $\mathcal{G}^b$.
From the agent’s perspective, these two situations are indistinguishable in the absence of additional assumptions. Ideally, we seek an assumption that both \emph{covers} a wide range of practical scenarios, i.e. action sets $\mathcal{G}$, and enables a \emph{computationally tractable} procedure for recovering the group decomposition. Among the various options considered, we adopt the following assumption, as it offers a favorable trade-off between situation coverage and computational feasibility:

\begin{assumption}\label{hyp:compo}
For all $g,g'\in\mathcal G$ with $g\ne g'$, if they belong to the same subgroup then there exists $u\in\mathcal G$ and $m\in\llbracket1,M\rrbracket$ such that we have either $g=u^mg'$, $g=g'u^m$, $g'=gu^m$ or $g'=u^mg$.
\end{assumption}

Combined with Assumption~\ref{hyp:dist}, it is straightforward to show that the implication of Assumption~\ref{hyp:compo} is in fact an equivalence. As a result, we obtain a simple and practical criterion for determining whether two actions belong to the same subgroup. In terms of situation coverage, as soon as $M \geq 2$, Assumption~\ref{hyp:compo} holds in common cases such as when $\mathcal{G}_i$ contains an action and its inverse, when $\mathcal{G}_k = G_k$, or when $\mathcal{G}_k = G_k^*$. In practice, the action sets considered in the experimental sections of state-of-the-art SBDRL algorithms typically fall into one of these categories. In the scenario illustrated in Figure~\ref{fig:cong}, Assumption~\ref{hyp:compo} allows us to assume that situation (b) will never occur, our method will thus assume that the environment corresponds to case (a).

\subsubsection{Algorithm}

We now introduce a method to recover the group decomposition i.e. to cluster the available actions into subgroups. Given an encoding function $h : \mathcal{X} \rightarrow \mathbb{R}^d$ and a matrix $A \in \mathbb{R}^{d \times d}$, we define the following semi-norm:
\begin{equation}
\|A\|_h = \mathbb E_{x}\left[\left\|Ah(x)\right\|\right].
\end{equation}

From this and Assumption~\ref{hyp:compo}, we define the following pseudo-distance to determine whether two actions belong to the same subgroup. We write $A_g$ instead of $\rho_\psi(g)$ for simplicity and readability:
\begin{equation}
d_G(g, g') = \min_{\substack{u\in\mathcal G\\\ m\in\llbracket 1,M\rrbracket}} \min\Big\{\|A_g- A_u^m A_{g'}\|_h; \|A_g- A_{g'} A_u^m\|_h; \| A_{g'}- A_u^m A_g\|_h; \| A_{g'} -  A_g A_u^m\|_h \Big\}.
\end{equation}

\begin{theorem}\label{th:conv}
    If the Assumption~\ref{hyp:inj} to~\ref{hyp:compo} are satisfied, the dataset contains all the possible transitions, $\mathcal W$ is finite and the A-VAE loss converges toward its minimum, then at some point of the training, two available actions will belong to the same subgroup if and only if their distance with respect to $d_G$ is below a specific threshold $\eta$ computed from $h$ and $\rho_\psi$.
\end{theorem}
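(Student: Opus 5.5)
The plan is to compare the ideal case, where the A-VAE loss is exactly minimal, with the case where it is only approximately minimal. In the exact case, same-subgroup pairs have $d_G$ exactly $0$ and cross-subgroup pairs have $d_G$ bounded below by a positive quantity. A continuity argument then carries this separation over to the approximate case. Finiteness of $\mathcal W$ is what makes the two gaps quantifiable: the expectation defining $\|\cdot\|_h$ becomes a finite average, and the set of candidate triples $(u,m,\text{ordering})$ is finite.

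\textbf{Step 1 (characterise the minimisers).} When the loss is at its minimum, the action term forces $A_g h(b(w)) = h(b(g\cdot_{\mathcal W} w))$ for every $w\in\mathcal W$ and $g\in\mathcal G$, because every transition is in the dataset. The reconstruction term, together with Assumption~\ref{hyp:inj}, forces $h\circ b$ to be injective on the finite set $\mathcal W$. Iterating the equivariance relation shows that any product of the $A$'s acts on the encoded points $\{f(w)\}$ exactly as the corresponding product of group elements acts on $\mathcal W$. Thus $\|P-Q\|_h = \frac{1}{|\mathcal W|}\sum_w \|f(p\cdot w)-f(q\cdot w)\|$ for words $P,Q$ representing $p,q$.

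\textbf{Step 2 (ideal gap).} If $g,g'$ lie in the same subgroup, Assumption~\ref{hyp:compo} gives for instance $g=u^m g'$, so $A_g - A_u^m A_{g'}$ vanishes on every $f(w)$ and $d_G(g,g')=0$. If $g\in G_i$ and $g'\in G_j$ with $i\neq j$, I argue by contradiction. Take any $u\in\mathcal G$, which by Assumption~\ref{hyp:dist} lies in a single $G_k$. An identity such as $g=u^m g'$ in the direct product would require $g'$ to have a trivial $G_j$-component, unless $k=j$, in which case the $G_i$-component of $g$ would have to be trivial. Either way the identity fails, provided the actions are non-identity elements. I need to check this side condition, or discard trivial actions beforehand.

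Given that, $g\neq u^m g'$ as group elements. If the action of $G$ on $\mathcal W$ is faithful, then some $w$ has $g\cdot w \neq u^m g'\cdot w$. Injectivity of $f$ then yields the bound $\|A_g-A_u^mA_{g'}\|_h \ge \delta/|\mathcal W|$, where $\delta=\min_{w\neq w'}\|f(w)-f(w')\|>0$. Faithfulness is a mild extra condition; otherwise I would quotient by the kernel, as in the interaction-equivalence construction behind Theorem~\ref{th:inj}. Taking the minimum over the finitely many candidates gives $d_G(g,g')\ge \delta/|\mathcal W|$. So any $\eta$ in $(0,\delta/|\mathcal W|)$ works, for example $\eta=\delta/(2|\mathcal W|)$, which is computable from $h$ and $\rho_\psi$ as required by Theorem~\ref{th:conv}.

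\textbf{Step 3 (from the limit to "some point of training").} This is the main obstacle: the loss only converges, and the parameters need not converge. I would avoid parameter convergence by working directly with the action residual. If the loss tends to its minimum, then $\epsilon_t=\max_{g,w}\|A_g f_t(w)-f_t(g\cdot w)\|\to 0$. A telescoping argument bounds the error for a word of length at most $M+1$ by $C_t\,\epsilon_t$. Here $C_t$ depends on $\max_{g\in\mathcal G}\|A_g\|$ restricted to the span of the codes, and this is the delicate point. I would control $C_t$ by using the fact that $\mathcal W$ is finite: the $A_g$ only need to be bounded on $\mathrm{span}\{f_t(w)\}$. I would also use the invariance of the problem under rescaling $h$, which lets me normalise the codes.

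With the words controlled, same-subgroup distances are $O(\epsilon_t)$. Cross-subgroup distances are at least $\delta_t/|\mathcal W| - O(\epsilon_t)$, where $\delta_t$ is the minimal code separation. I would also need $\delta_t$ to stay bounded away from $0$ relative to the scale of the codes. For this I would rely on the reconstruction term, which prevents distinct states from collapsing.

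Once $\epsilon_t$ is small enough relative to $\delta_t$, the threshold $\eta_t=\delta_t/(2|\mathcal W|)$ separates the two cases. This completes the plan for the statement.
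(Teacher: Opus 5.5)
\textbf{There is a genuine gap in Step 3, which is where all the work lies.} Your Steps 1--2 are fine as heuristics. Your averaging over $\mathcal W$ needs only one $w$ with $g\cdot w\neq u^mg'\cdot w$, which is actually more careful than the paper's pointwise claim $\|f(g\cdot w)-f(u^mg'\cdot w)\|\ge\delta$ for every $w$.

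The telescoping bound gives same-subgroup distances of order $\varepsilon_t(1+\sum_{i=0}^{M}r_t^i)$, with $r_t=\max_{g\in\mathcal G}\vvvert A_g\vvvert$. Cross-subgroup distances are bounded below only up to the same error term. So you need $r_t$ under control while $\varepsilon_t\to 0$.

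Neither tool you propose provides this. Operator norms do not change when the codes are rescaled. Finiteness of $\mathcal W$ does not bound them either, even on the span of the codes. For example, $A=\begin{pmatrix}1&K\\0&-1\end{pmatrix}$ is an involution that exactly swaps the codes $(0,1)$ and $(K,-1)$ and fixes $(1,0)$. These codes span $\mathbb R^2$ and have minimal separation $\sqrt2$, yet $\vvvert A\vvvert\ge K$ can be arbitrarily large.

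Since the loss does not penalise $r_t$, it can approach its infimum while $r_t^M\varepsilon_t$ does not tend to zero. The paper closes this with an ingredient external to the loss. The action-matrix coefficients are bounded by construction (tanh parameterisation), so $r$ is uniformly bounded and $\eta=\varepsilon(1+\sum_{i=0}^Mr^i)\to 0$.

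\textbf{The lower bound on $\delta_t$ is also only asserted.} That the reconstruction term ``prevents distinct states from collapsing'' gives $\delta_t>0$ at each time. It does not give a bound that beats the error term uniformly in $t$.

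The A-VAE loss is \emph{not} invariant under rescaling $h$, because the latent noise $\sigma$ is fixed, and this is exactly what the paper exploits. If two codes are at distance $\Delta$, both Gaussian densities are bounded below on the unit ball around their midpoint by a constant depending only on $\Delta$, $\sigma$ and $d$. By the triangle inequality, the decoder cannot be close to both $x_1$ and $x_2$ there. Hence vanishing reconstruction error forces $\delta_t\to+\infty$, and so $\delta>2\eta$ eventually.

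The same observation shows that your ideal case is never realised. With $\sigma>0$ and two distinct observations, the reconstruction term is strictly positive for every finite encoder. So the infimum is not attained, and the continuity framing of your opening paragraph has no limiting configuration to start from. The separation must be proved quantitatively along training.

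Finally, the condition you aim for, $\delta_t$ bounded below relative to the scale of the codes, is not the relevant one, and it is not what reconstruction provides. What you need is the scale-free condition $\varepsilon_t(1+\sum_{i=0}^{M}r_t^i)=o(\delta_t/|\mathcal W|)$. Bounded $r$ together with $\delta_t\to+\infty$ delivers it.
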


Based on Theorem~\ref{th:conv}, we design a clustering algorithm that groups together actions $g$ and $g'$ whenever $d_G(g, g') \leq \eta$. The choice of the threshold $\eta$, the details of the algorithm, and the proof are provided in Appendix~\ref{sec:clustering}.

Once the group decomposition has been recovered, a suitable disentangled representation learning algorithm can be applied. For example, if $\mathcal{G} = \{g_1, g_1^{-1}\} \cup \{g_2, g_2^{-1}\}$, then $G$ is isomorphic to a subgroup of $SO(2) \times SO(2)$, and Forward-VAE~\citep{caselles-dupre_symmetry-based_2019} can be employed with an appropriate parameterization. However, as discussed in Section~\ref{sec:related}, existing LSBD methods still rely on some form of prior knowledge about the group structure. In the following section, we address this limitation by introducing a new disentangled representation learning algorithm that does not require such prior information.

\subsection{(Step 3) Learn a disentangled representation}
Now that we have the symmetry group decomposition, we aim to find a linear disentangled representation \ie a decomposition $\mathcal Z=\mathcal Z_1\oplus \cdots\oplus\mathcal Z_K$ and an action representation $\rho=\rho_1\oplus \cdots\oplus\rho_K$ such that for each action $g=(g_1,\dots,g_K)\in G$ and latent factor $z=(z_1, \dots, z_K)$ we have $\rho(g_1,\dots,g_K)(z_1,\dots,z_K)=(\rho_1(g_1)z_1,\dots,\rho_K(g_K)z_K)$. 

This definition allows the $\mathcal Z_i$ to be any sub-vector spaces of $\mathcal Z$ as long as they form a direct sum, they are not required to be orthogonal. In our method, we additionally choose to search for representations where the disentanglement aligns with Cartesian axes of the latent space $\mathcal Z\mathcal =\mathcal Z_1\times \cdots\times \mathcal Z_K$. This choice is motivated by the fact that, under most widely accepted definitions of disentanglement, each latent dimension is expected to encode information about at most one ground-truth factor of variation~\citep{wang2024disentangled}. Consequently $\rho(g_1,\dots,g_K)(z_1,\dots,z_K)=\rho(g)z$ with $z=concat(z_1,\dots,z_K)$ and $\rho(g)=diag(\rho_1(g_1),\dots,\rho_K(g_K))$.

\begin{wrapfigure}{r}{0.25\textwidth}
    \vspace{-1cm}
  \begin{center}
         \resizebox{0.25\textwidth}{!}{\input{tikz/pification.tikz}}
  \end{center}
    \caption{Masking used to build disentangled action matrices}
    \label{fig:pification}
    \vspace{-.5cm}
\end{wrapfigure}
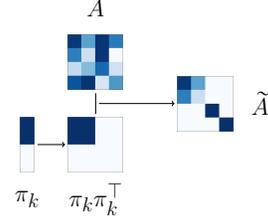

Thanks to Assumption~\ref{hyp:dist}, each action is known to belong to a unique subgroup. Consequently, for any $g \in G_k$ and $k' \ne k$, we have $\rho_{k'}(g)$ equal to the identity transformation. In matrix terms, this implies that each action is represented by the identity matrix, except for a single block along the diagonal, as illustrated in Figure~\ref{fig:pification} with the matrix $\widetilde{A}$
(in practice, the indices of the matrix may be permuted, however for the sake of clarity, we illustrate only the case in which the active dimensions are adjacent).
Learning the structure of these matrices amounts to assigning each latent dimension $i$ to a unique subgroup $G_k$. Let $\pi_k \in \{0,1\}^d$ denote the binary indicator vector encoding the set of dimensions assigned to the $k$-th subgroup, such that $\sum_k \pi_{k,i} = 1$.

To enforce the desired block structure in the action matrices, we apply the mask $\pi_k \pi_k^\top$ to unstructured action matrices $A$ as illustrated in Figure~\ref{fig:pification}. Let $k(g)$ denote the index of the subgroup to which the action $g$ belongs, and let $\odot$ denote the element-wise product. The structured action matrix $\widetilde{A}_g$ is then defined as:
\begin{equation}
\widetilde A_g= \pi_{k(g)}\pi_{k(g)}^\top \odot  A_g + (1 - \pi_{k(g)}\pi_{k(g)}^\top )\odot I    
\end{equation}

To learn the vectors $\pi_k$, we employ a continuous relaxation. Specifically, we use $d$ softmax operations to ensure that $\pi_{k,i} \in [0,1]$ with $\sum_k \pi_{k,i} = 1$. In order to promote disentanglement, we introduce an additional term in the A-VAE loss function that encourages the vectors $\pi_k$ to be close to be binary. A natural approach is to minimize the entropy $\mathcal{H}(\pi) = \sum_i \mathcal{H}(\pi_{:,i})$. However, empirical observations show that directly minimizing this entropy causes it to collapse to zero before the other loss components begin to decrease, leading to a random dimension assignment. To address this issue, we define the disentanglement loss as $\mathcal L_{DIS}=\sum_i|\mathcal H(\pi_{:,i})-C|$, where $C$ is a target entropy value that is gradually annealed from its maximum to zero during training. We refer to this method as the \emph{Group-Masked Action-based VAE} (GMA-VAE). The following result, proven in Appendix~\ref{annex:proof_th_disengangled}, formalizes the disentanglement guarantee:

\begin{theorem}\label{th:disentangled}
    If Assumptions~\ref{hyp:inj} and~\ref{hyp:dist} are satisfied, the dataset contains all the transitions and $G$ is finite, then the encoders minimizing the GMA-VAE loss are LSBD representations with respect to $\langle \mathcal W,b,\prod_k \langle\mathcal G_k\rangle\rangle$ with $\langle\mathcal G_k\rangle$ representing the subgroup generated by $\mathcal G_k$. 
\end{theorem}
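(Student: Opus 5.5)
The plan is to first pin down what "minimizing the GMA-VAE loss" means. The loss is $\mathcal L_{\text{REC}}+\lambda_{\text{ACT}}\mathcal L_{\text{ACT}}+\mathcal L_{DIS}$, taken at the end of the annealing, where $C=0$. Since $\sigma_\theta,\sigma_\phi$ are fixed, the action term reduces to $\frac12\|\widetilde A_g h(x)-h(x')\|^2$ and the disentanglement term to $\sum_i\mathcal H(\pi_{:,i})$.

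The first step is to show that the infimum is attained by an exactly equivariant, injective, block-structured encoder. I would build one explicitly. Write $H=\langle\mathcal G_1\rangle\times\cdots\times\langle\mathcal G_K\rangle$; it acts on $\mathcal W$ because each $\langle\mathcal G_k\rangle\subset G_k$ and these subgroups commute. Since $G$ is finite, take $\mathcal Z_k=\mathbb R^{|\langle\mathcal G_k\rangle|}$ carrying the regular (permutation) representation of $\langle\mathcal G_k\rangle$. Choose an embedding of $\mathcal W$ that is equivariant and injective; one option is the orbit-wise map $w\mapsto\bigotimes$-free concatenation $\bigl(e_{\text{coords}}\bigr)$, built by picking orbit representatives and sending $h'\cdot w_0\mapsto(e_{h'_1},\dots,e_{h'_K})$ scaled by an orbit-dependent constant. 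This can fail to be injective when stabilizers are nontrivial, so I would instead use $\mathbb R^{|\mathcal W|}$-valued permutation blocks, handled carefully below.

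Next, Assumption~\ref{hyp:inj} makes $b$ injective, so $h=f\circ b^{-1}$ is well defined on $b(\mathcal W)$. Assumption~\ref{hyp:dist} lets us set each $A_g$, for $g\in\mathcal G_k$, equal to the $k$-th block with $\pi_k$ binary. With this choice the action loss and the entropy loss are both zero, and the reconstruction term reaches its minimum because a decoder can invert an injective $h$ on the finite set. Hence every minimizer has $\mathcal L_{DIS}=0$, which forces each $\pi$ to be binary, and $\mathcal L_{\text{ACT}}=0$ on all transitions. Here I use that the dataset contains all transitions and that the reconstruction optimum is still reached.

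The second step is to turn these conditions into LSBD. With $\pi$ binary, define $\mathcal Z_k=\mathrm{span}\{e_i:\pi_{k,i}=1\}$. For every $g\in\mathcal G_k$ we get $h(b(g\cdot w))=\widetilde A_g h(b(w))$ for all $w$, where $\widetilde A_g$ acts as the identity outside $\mathcal Z_k$. Define $\rho$ on words in the generators by products of the $\widetilde A_g$. Matrices from different blocks commute, because they are block-diagonal with identity elsewhere. Well-definedness on $H$, and the fact that $\rho$ is a homomorphism, hold on the image $f(\mathcal W)$. Two words equal in $H$ act identically on $\mathcal W$, so by equivariance their matrices agree on $f(\mathcal W)$. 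I would therefore restrict $\mathcal Z$ to $\mathrm{span} f(\mathcal W)$, which is invariant, and decompose it compatibly with the blocks; alternatively I would define the action $\cdot_{\mathcal Z}$ only on the image.

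Finiteness gives inverses: each $\widetilde A_g$ permutes the finite set $f(\mathcal W)$, so some power of it is the identity there. Injectivity of $h$ should follow from reconstruction optimality, since a zero-variance-optimal decoder requires distinct $x$ to have distinct codes. Once all this is in place, the four conditions of Definition~\ref{def:sbda} hold for $\prod_k\langle\mathcal G_k\rangle$.

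I expect the main obstacle to be the well-definedness and linearity of $\rho$ on the whole latent space, as opposed to only on the image $f(\mathcal W)$, together with making the block decomposition invariant. I would handle it by working on $V=\mathrm{span} f(\mathcal W)$ and noting that each $\widetilde A_g$ maps $V$ to $V$. The delicate point is showing that the $\mathcal Z_k$-coordinates split $V$ as $\bigoplus_k(V\cap\ldots)$. If this fails, I would define the action on the full coordinate blocks by extending linearly. I would also argue that the existence of a zero-loss minimizer is part of the minimization hypothesis, since we need a construction like the regular-representation one to show that zero loss is achievable.
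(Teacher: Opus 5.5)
Your Step~1 would fail, and the paper does not attempt it. It asks for an exactly equivariant, injective encoder with binary masks. Such an encoder is already an LSBD representation for $\prod_k\langle\mathcal G_k\rangle$, and these need not exist under the theorem's hypotheses. Take $\mathcal W=\{w_0,w_1\}$ with $b$ injective, $G=\mathbb Z/2\mathbb Z\times\mathbb Z/2\mathbb Z$ acting by $(s,t)\cdot w_i=w_{i+s+t \bmod 2}$, $\mathcal G_1=\{(1,0)\}$ and $\mathcal G_2=\{(0,1)\}$; Assumptions~\ref{hyp:inj} and~\ref{hyp:dist} hold and $G$ is finite. Write $P_k$ for the coordinate projection onto $\mathcal Z_k$. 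A binary mask makes $\widetilde A_{(1,0)}$ the identity on block $2$, so $f(w_1)=\widetilde A_{(1,0)}f(w_0)$ forces $P_2f(w_1)=P_2f(w_0)$. Symmetrically $P_1f(w_1)=P_1f(w_0)$, hence $f(w_0)=f(w_1)$. The same argument with an abstract decomposition shows that no SBD representation exists at all here. So no regular-representation or permutation-block construction can work in general. Your orbit-wise map is in fact ill-defined, not just non-injective, when stabilizers are nontrivial. The paper simply \emph{assumes} that the GMA-VAE losses have reached their global minimum $0$, and the theorem is only true in that reading. That is the fallback in your last sentence, and it should be your starting hypothesis. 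Relatedly, with $\sigma_\phi$ fixed and positive the reconstruction term is never exactly $0$, so no decoder ``inverts'' $h$. The term only tends to $0$ as the codes separate, which is what Proposition~\ref{prop:delta} quantifies. Injectivity should be drawn from a bound of that kind, not from a zero-variance decoder. The simplest version: if $h(x_1)=h(x_2)$, the two reconstruction terms sum to at least $\tfrac12\|x_1-x_2\|^2$.

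Granting zero loss, your Step~2 is the paper's proof. Binary masks make each $\widetilde A_g$ block-diagonal and the identity off $\mathcal Z_{k(g)}$; equivariance on generators extends to words; finiteness of $G$ makes every element of $\langle\mathcal G_k\rangle$ a positive word; and one restricts to $\mathcal V=\mathrm{span}\,f(\mathcal W)$. Your well-definedness argument (words equal in $H$ agree on $f(\mathcal W)$, hence on $\mathcal V$) is cleaner than the paper's arbitrary choice of one word per element. The splitting you flag is a genuine issue: $\mathcal V$ need not equal $\bigoplus_k(\mathcal V\cap\mathcal Z_k)$. Consider the regular action of $\mathbb Z/2\mathbb Z\times\mathbb Z/2\mathbb Z$ on four states with $f(s,t)=(u_s,v_t)$, where $u_0,u_1$ are independent in $\mathcal Z_1$ and $v_0,v_1$ are independent in $\mathcal Z_2$. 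Then $\dim\mathcal V=3$ but $\dim\bigoplus_k(\mathcal V\cap\mathcal Z_k)=2$, so the paper's identification $\mathrm{span}\{f(w)_k\}=\mathcal V\cap\mathcal Z_k$ does not hold in general. Your fallback of extending to the full blocks $\mathcal Z_k$ does not repair this. Products of the $\widetilde A_g$ satisfy the group relations only where equivariance constrains them; for example, $\widetilde A_g^{\,n}$ need not be the identity on coordinates that $f$ does not use. Work instead on $\mathcal V'=\bigoplus_k P_k\mathcal V\supseteq\mathcal V$. It is invariant because block-diagonal matrices commute with the $P_k$. If two words $M,M'$ in $\mathcal G_k$ represent the same element, then $MP_kf(w)=P_kMf(w)=P_kM'f(w)=M'P_kf(w)$. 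Hence the action is well defined, linear and disentangled on $\mathcal V'$ by construction.
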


\section{Results}
\subsection{Experiments}
\textbf{Metrics}: 
To evaluate the disentanglement, we use the Independence (Inde) metric~\citep{painter_linear_2020} that was specifically designed for the LSBD framework; we will also use classical disentanglement metrics: $\beta$-VAE~\citep{higgins2017betavae}, Mutual Information Gap (MIG)~\citep{chen_isolating_2019}, DCI disentanglement metric~\citep{eastwood_framework_2018}, Modularity (Mod)~\citep{ridgeway_learning_2018} and SAP~\citep{kumar_variational_2018}. All these metrics take values between $0$ and $1$ and are meant to be maximized.

\textbf{Algorithms}: We categorize the baseline methods into three classes: (1) \emph{Supervised methods} where the action representation $\rho$ is given. The only supervised method is LSBD-VAE~\citep{tonnaer_quantifying_2022}. (2) \emph{Self-supervised methods} where $\rho$ is learned as in SOBDRL~\citep{quessard_learning_2020}. We introduce a modified LSBD-VAE in which the action representation $\rho$ is learned rather than provided, we refer to this variant as method LSBD-VAE$^*$. As also reported in~\citet{tonnaer_quantifying_2022}, we were unable to obtain satisfactory results with Forward-VAE~\citep{caselles-dupre_symmetry-based_2019} on our datasets, therefore it is not included among the baselines. An other method is HAE~\citep{keurti2023homomorphism} that is specifically designed for Lie groups and therefore is only used in Section~\ref{sec:lie}. (3) \emph{Unsupervised methods} which rely solely on observations rather than transitions. This category includes classical disentanglement approaches:  $\beta$-VAE~\citep{higgins2017betavae}, Factor-VAE~\citep{kim_disentangling_2019} and DIP-VAE I/II~\citep{kumar_variational_2018}.

\textbf{Latent dimension}: For A-VAE we arbitrarily chose a latent dimension of 13, for the LSBD methods we chose the minimal dimension depending on the method and the symmetry group. Those minimal dimensions are discussed in Appendix~\ref{sec:dim}.

\textbf{Environments}: 
Similarly to \emph{Flatland}~\citep{flatland}, our first environment consists of a disk moving along the $x$ and $y$ axes over a black background as illustrated Figure~\ref{fig:flatland}. Additionaly to the groups acting on the position of the disk, a third group acts on the color feature and can be either a cyclic shift of the RGB channels, corresponding to $G_C = \mathbb{Z}/3\mathbb{Z}$ with $\mathcal{G}_C = \{c^-, c^+\}$, or a full permutation group over the RGB channels, i.e., $G_C = \mathfrak{S}_3$ with $\mathcal{G}_C = \mathfrak{S}_3^*$.
The second environment is based on the \emph{COIL dataset}~\citep{coil}, which contains images of objects captured from multiple viewpoints. Each observation consists of $n$ adjacent objects. Each object $i \in \llbracket 1, n \rrbracket$ can be rotated through $k_i$ discrete angles, forming a cyclic rotation group $G_{R_i} = \mathbb{Z} / k_i\mathbb{Z}$, with the action set $\mathcal{G}_{R_i} = \{r_i^-, r_i^+\}$. In addition, the objects can be permuted via the symmetric group $G_{\mathfrak{S}} = \mathfrak{S}_n$.
The third environment use the \emph{3DShapes dataset}~\citep{3dshapes18}, which consists of rendered images of a 3D object placed in a colored room. The data is generated from six discrete ground-truth factors: wall hue, object hue, background hue, object scale, object shape, and viewing angle. For each factor $i$, we define a cyclic symmetry group $G_i = \mathbb{Z}/k_i\mathbb{Z}$ and an action set consisting of two shifts, $\mathcal{G}_i = \{g_i^-, g_i^+\}$, corresponding to increments and decrements along the factor axis.
The final dataset is \emph{MPI3D}~\citep{gondal2019transfer}, which consists of realistic images of a robotic arm moving an object. Among the various factors of variation available in the dataset, we retain only the horizontal and vertical rotation angles of the arm. Both are modeled as cyclic groups, $G_H = \mathbb{Z}/40\mathbb{Z}$ and $G_V = \mathbb{Z}/40\mathbb{Z}$, with corresponding action sets $\mathcal{G}_H = G_H^{*}$ and $\mathcal{G}_V = G_V^{*}$.

\subsection{Action clustering}\label{sec:expecluster}

To evaluate the action clustering performance of Step~2, we use the Flatland environment with cyclic color shifts (FLC) and color permutations (FLP), as well as the COIL dataset with two (COIL2) and three (COIL3) objects. Our algorithm successfully recovers the ground-truth group decomposition in 100\% of runs. The average group distances across random seeds are reported in Appendix~\ref{sec:dg}.
In these experiments, the datasets include all possible transitions, and the available actions are simple (e.g., an action and its inverse). To assess the robustness of our method, we consider more challenging settings with both complex action sets and limited transition coverage. For this purpose, we use the COIL environment with three or four objects and random action sets $\mathcal{G}$ satisfying Assumptions~\ref{hyp:dist} and~\ref{hyp:compo}, those environments are given Appendix~\ref{sec:data}. In this setting, for each state $w \in \mathcal{W}$, we randomly sample $n_a \le |\mathcal{G}|$ available actions to be used in the dataset. The results show that, as soon as $n_a \ge 2$, the method consistently recovers the correct group decomposition. Importantly, the same hyperparameters are used across all of these latter experiments.

\subsection{Disentanglement} 

\begin{figure}[htbp]
     \centering
     \begin{subfigure}[b]{0.19\textwidth}
         \centering
         \caption{FLC}
         \includegraphics[width=\textwidth]{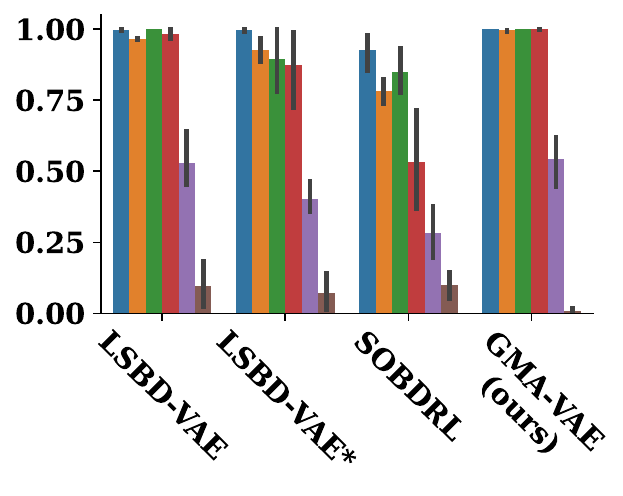}
     \end{subfigure}
     \begin{subfigure}[b]{0.199\textwidth}
         \centering
         \caption{FLP}
         \includegraphics[width=\textwidth]{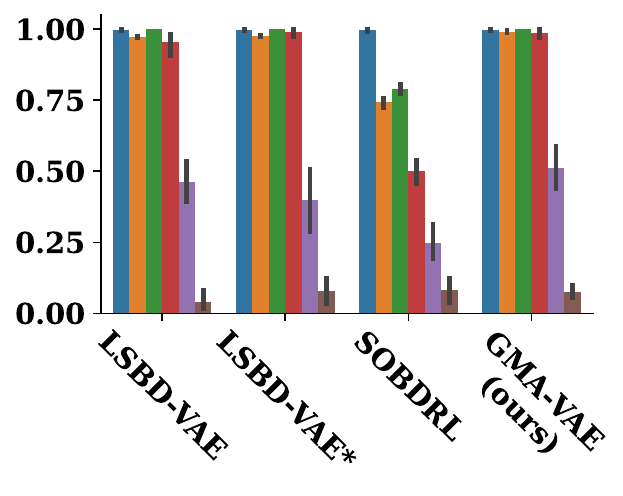}
     \end{subfigure}
     \begin{subfigure}[b]{0.19\textwidth}
         \centering
         \caption{COIL2}
         \includegraphics[width=\textwidth]{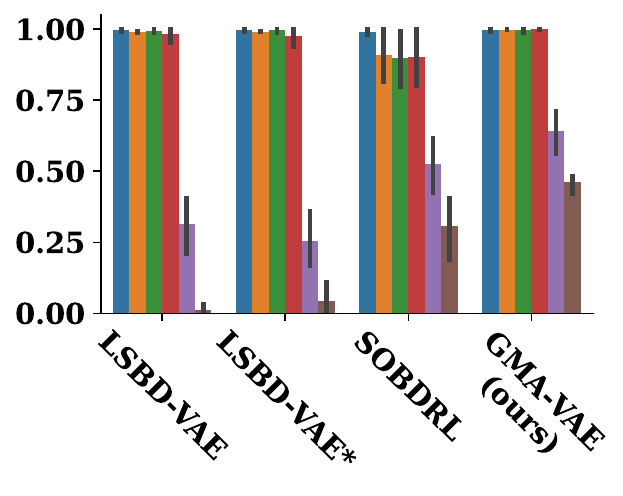}
     \end{subfigure}
     \begin{subfigure}[b]{0.19\textwidth}
         \centering
         \caption{COIL3}
         \includegraphics[width=\textwidth]{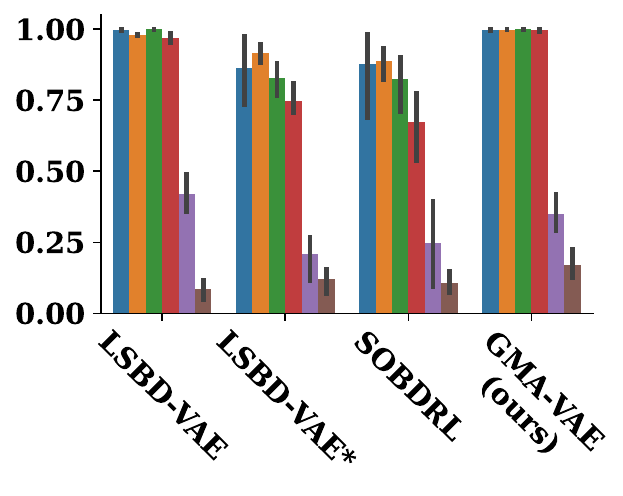}
     \end{subfigure}
     \begin{subfigure}[b]{0.19\textwidth}
         \centering
         \caption{3DShapes}
         \includegraphics[width=\textwidth]{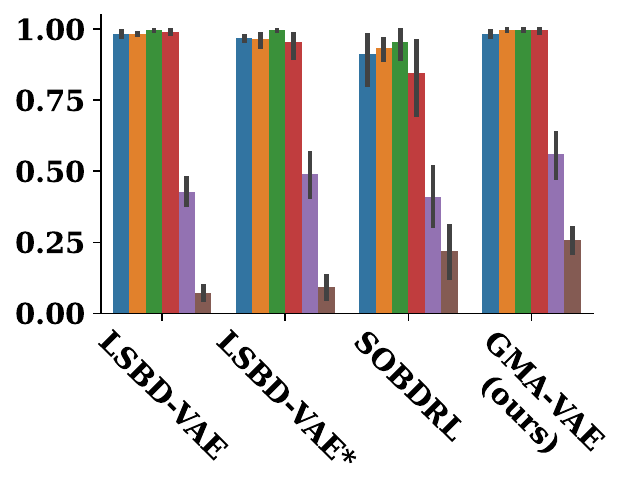}
     \end{subfigure}
     \coloredsquare{3274a1} $\beta$-VAE\hspace{.5cm}
     \coloredsquare{e1812c} Inde\hspace{.5cm}
     \coloredsquare{3a923a} Mod\hspace{.5cm}
     \coloredsquare{c03d3e} DCI\hspace{.5cm}
     \coloredsquare{9372b2} SAP\hspace{.5cm}
     \coloredsquare{845b53} MIG
     \caption{Median of disentanglement metrics. Vertical lines indicates the 25th to 75th percentile}
    \label{fig:dislib}
\end{figure}

To evaluate the disentanglement we use the same environments as before, the disentangled results are shown Figure~\ref{fig:dislib}. For more clarity we do not present the disentanglement of unsupervised methods as they perform significantly worse than LSBD methods. Detailed results including unsupervised methods are available in Appendix~\ref{sec:res:dist}. 

The first observation is that all methods perform poorly in MIG and SAP as these two metrics require each ground truth factor of variation to be encoded in a unique dimension. However, linear disentanglement mostly requires features to be encoded in at least two dimensions as discussed in Appendix~\ref{sec:dim}. The only exception is COIL2 as the permutation group $\mathfrak S_2$ can be encoded in only one dimension with our method. The second observation is that our method performs almost perfectly for the other metrics and yields a disentanglement comparable to the supervised method LSBD-VAE.

\subsection{Long-term prediction}

\begin{wrapfigure}{r}{0.3\textwidth}
     \center
     \vspace{-1cm}
     \begin{subfigure}[b]{.3\textwidth}
         \centering
         \caption{COIL2}
         \includegraphics[width=\textwidth]{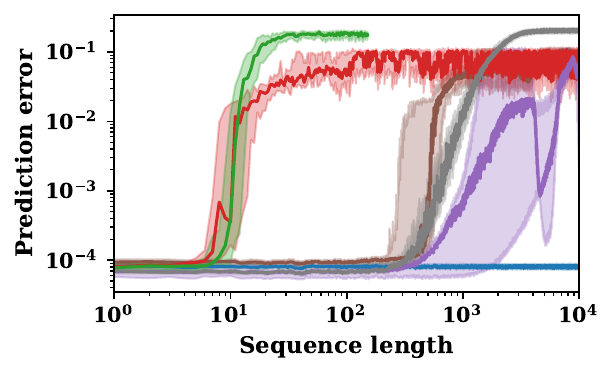}
     \end{subfigure}
     \\
     \begin{subfigure}[b]{.3\textwidth}
         \centering
         \caption{COIL3}
         \includegraphics[width=\textwidth]{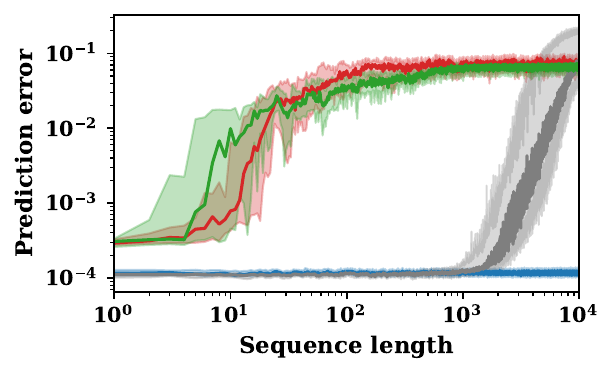}
     \end{subfigure}
     \\
     \begin{varwidth}{\linewidth}
         \scriptsize
         \coloredsquare{1f77b4} LSBD-VAE\hfill
         \coloredsquare{8c564b} LSBD-VAE$^*$\\
         \coloredsquare{d62728} SOBDRL (entangled)\\
         \coloredsquare{9467bd} SOBDRL (disentangled)\\
         \coloredsquare{7f7f7f} GMA-VAE (ours)\hfill
         \coloredsquare{2ca02c} A-VAE (ours)
     \end{varwidth}
    \caption{Median of long-term prediction error, the shaded area indicates the 25th to 75th percentile}
    \label{fig:longterm_prediction}
     \vspace{-.7cm}
\end{wrapfigure}

We aim to investigate the effect of disentanglement on long-term prediction accuracy. To this end, we use the trained models to predict a final observation given an initial observation and a sequence of actions. For the COIL2 dataset, SOBDRL fails to consistently learn a disentangled representation. We therefore separate the seeds into two groups: those where disentanglement is achieved and those where it is not. On the COIL3 dataset, SOBDRL is unable to disentangle the representation at all, as the method is not suited to permutation-based symmetries. We also omit the results of LSBD-VAE$^*$ on COIL3, as it fails to consistently produce accurate predictions even for single-step transitions.

Figure~\ref{fig:longterm_prediction} shows the prediction error as a function of the sequence length. We observe a drop in the prediction error of SOBDRL on COIL2. This behavior comes from the fact that disentangled actions of SOBDRL are $SO(2)$ rotations. In one of the seeds, the action that swaps the two objects has a larger angular error than the other actions, causing the corresponding latent dimensions to diverge first. However, due to the cyclic nature of $SO(2)$, the accumulated angular errors eventually cancel out, completing a full rotation and temporarily restoring the correct latent representation.\

Overall, three types of behavior emerge from the results. First, entangled self-supervised methods (A-VAE and SOBDRL) achieve good short-term predictions but quickly diverge as the sequence length increases. In particular, the A-VAE curve ends early because the latent representations eventually diverge to NaN values. Second, disentangled self-supervised methods (GMA-VAE,  SOBDRL and LSBD-VAE$^*$) achieve significantly better long-term predictions. Finally, the supervised method LSBD-VAE achieves perfect prediction performance regardless of sequence length. This is explained by the fact that, with access to ground-truth action matrices, the model satisfies exactly $A_g A_{g'} = A_{gg'}$, making multi-step prediction no more difficult than single-step prediction.

\subsection{Generalization}

To assess how disentanglement impacts generalization, we train each model on COIL2 and COIL3 using restricted datasets. We first consider the \textit{independant and identicaly distributed} (iid) setting, in which the training and test sets follow the same distribution. Specifically, for each state, we uniformly sample $n_a = |\mathcal{G}|/2$ actions to include in the training data.  The second experiment assesses the \textit{out-of-distribution} (ood) generalization capabilities of the models. In this setting, the training set is restricted to transitions in which only the right-most object is allowed to rotate. We evaluate the prediction error on both seen and unseen transitions, the results are reported in Table~\ref{table:iid} using the format seen~/~unseen prediction error.  We highlight in bold the methods for which the error increases by less than 5\% between seen and unseen transitions. In both experiments, we observe that all disentangled methods generalize well, while most entangled methods exhibit poor generalization, particularly in the \textit{ood} setting.

\begin{table}[htbp!]
\centering
\caption{\textit{iid} and \textit{ood} prediction error, the format used is seen~/~unseen prediction error}
\makebox[0pt][c]{
\small
\input{tables/iidood}}
\label{table:iid}
\end{table}

\subsection{Lie groups}\label{sec:lie}
\begin{wrapfigure}{r}{0.3\textwidth}
    \vspace{-.5cm}
    \centering
    \begin{subfigure}{.3\textwidth}
        \caption{MPI3D}
        \includegraphics[width=\textwidth]{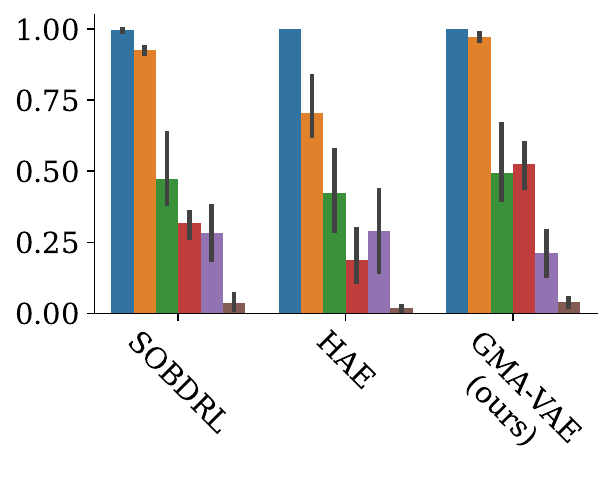}
        \label{fig:mpi3d_clean}
    \end{subfigure}\\

    \begin{subfigure}{.3\textwidth}
        \caption{Noisy MPI3D}
        \includegraphics[width=\textwidth]{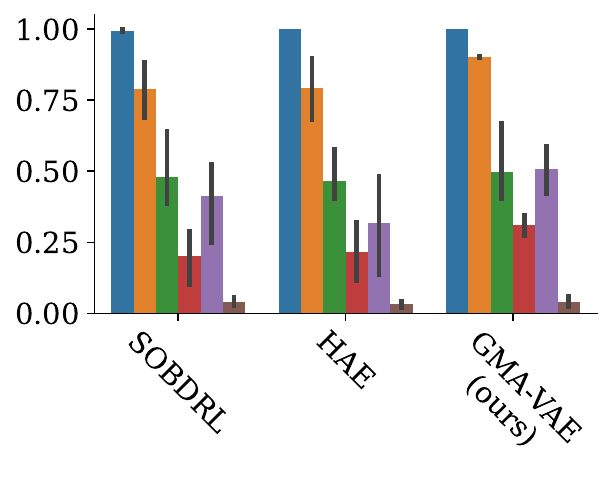}
        \label{fig:mpi3d_noisy}
    \end{subfigure}\\
    \begin{varwidth}{\linewidth}
        \scriptsize
        \hspace{1.1em}
        \begin{minipage}{0.3\textwidth}
            \coloredsquare{3274a1} $\beta$-VAE\\
            \coloredsquare{c03d3e} DCI
        \end{minipage}
        \begin{minipage}{0.3\textwidth}
            \coloredsquare{e1812c} Inde\\
            \coloredsquare{9372b2} SAP
        \end{minipage}
        \begin{minipage}{0.3\textwidth}
            \coloredsquare{3a923a} Mod\\
            \coloredsquare{845b53} MIG
        \end{minipage}
    \end{varwidth}
   \caption{Median of disentanglement metrics on MPI3D}
   \label{fig:mpi3d}
    \vspace{-1.1cm}
\end{wrapfigure}

Here, we aim to investigate how our proposed method can be extended to continuous groups, such as Lie groups. The action clustering procedure of Step~2 cannot be applied to continuous groups since it relies on a finite clustering process. Nevertheless, when the group decomposition is assumed to be known, GMA-VAE can still be used to learn a disentangled representation, as the proof of Theorem~\ref{th:disentangled} can be adapted to continuous symmetry groups:

\begin{theoremprime}
If Assumptions~\ref{hyp:inj} and~\ref{hyp:dist} hold, the dataset contains all transitions, and the monoid generated by  each $\mathcal G_k$ is $G_k$, then the encoders that minimize the GMA-VAE loss are LSBD representations with respect to $\langle \mathcal{W}, b, \prod_k G_k \rangle$.
\end{theoremprime}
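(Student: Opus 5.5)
The plan is to follow the structure of the proof of Theorem~\ref{th:disentangled} and change only the step where finiteness of $G$ was used to pass from the generating set $\mathcal G_k$ to the whole subgroup. First, I characterize the minimizers of the GMA-VAE loss. With fixed variances, the action term $\frac12\|\widetilde A_g h(x)-h(x')\|^2$ is nonnegative. The disentanglement term $\mathcal L_{DIS}$ vanishes once $C$ has been annealed to zero and the $\pi_k$ are binary. The reconstruction term is minimized by any injective encoder together with a decoder inverting it on $h(b(\mathcal W))$, which is possible by Assumption~\ref{hyp:inj}. I then show that these three terms can vanish simultaneously by exhibiting one admissible configuration: take any LSBD representation of $\prod_k G_k$ whose blocks are axis-aligned, for instance a block-diagonal orthogonal representation on functions of $\mathcal W$. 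The global minimum is therefore reached only when three conditions hold: the $\pi_k$ are binary, $h$ is injective, and $\widetilde A_g h(x)=h(x')$ for every transition in the dataset. Because the dataset contains all transitions, the last condition reads $\widetilde A_g f(w)=f(g\cdot_{\mathcal W} w)$ for all $g\in\mathcal G$ and $w\in\mathcal W$.

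Second, I extend the action from $\mathcal G$ to $G$. Define $\rho(g):=\widetilde A_g$ for $g\in\mathcal G_k$. An element $g\in G_k$ is a finite word $g=u_1\cdots u_n$ with $u_i\in\mathcal G_k$, by the monoid hypothesis, and I set $\rho(g):=\widetilde A_{u_1}\cdots\widetilde A_{u_n}$. By induction on $n$, $\rho(g) f(w)=f(g\cdot w)$. This is the only place where the argument differs from the finite case: there $\langle \mathcal G_k\rangle$ coincides with the monoid generated by $\mathcal G_k$ because every element has finite order, and here the hypothesis supplies this directly, so inverses never have to be written as positive powers.

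Third, well-definedness is the main obstacle. Two words representing the same $g$ may give different matrices on $\mathcal Z$. They agree, however, on $V=\mathrm{span}\,f(\mathcal W)$, because both equal $f(g\cdot w)$ on each $f(w)$. I will therefore define $\rho(g)$ as the restriction to $V$, or equivalently compose with a projection onto $V$ that keeps the block structure. This yields a genuine homomorphism $G\to GL(V)$: invertibility follows from $g^{-1}\in G_k$ being a word as well, and multiplicativity from concatenation of words. The axis-aligned block structure must still survive the restriction. My approach is to take $\mathcal Z_k=\mathrm{span}\{e_i:\pi_{k,i}=1\}$ and the coordinate projections $p_k$. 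For $g\in G_k$, $\rho(g)$ is the identity on the coordinates outside $\mathcal Z_k$. So, with $p_k$ applied to $f(w)$, $g\in G_k$ acts only on $p_k f(w)$, and it leaves $p_{k'}f(w)$ unchanged for $k'\neq k$. Commutation between different factors holds exactly on $V$ because the matrices occupy disjoint coordinate blocks. Hence $g=(g_1,\dots,g_K)$ acts componentwise through $\rho_k(g_k)$ on the $\mathcal Z_k$ components, with $\rho_k$ restricted to $\mathrm{span}\,p_k f(\mathcal W)$. Together with the injectivity of $h$, this verifies points 1 to 4 of Definition~\ref{def:sbda} with a linear action, so the minimizing encoders are LSBD with respect to $\langle\mathcal W,b,\prod_k G_k\rangle$.
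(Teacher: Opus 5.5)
Your Steps 2 and 3 follow the paper's route. The paper obtains Theorem~3' by adapting the proof of Theorem~\ref{th:disentangled}: the finiteness argument ($\langle\mathcal G_k\rangle=\langle\mathcal G_k\rangle_+$) is replaced by the monoid hypothesis, equivariance is extended along words, well-definedness comes from restricting to $\mathcal V=\mathrm{span}\,f(\mathcal W)$, and axis-aligned blocks are read off the binary masks.

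The gap is in Step 1. The paper never shows that the loss can vanish; it \emph{assumes} the losses have reached their global minimum $0$. Your attempt to derive this fails on two counts.
First, with $\sigma_\phi$ fixed and positive, the latent sample $z'$ has full support. So the reconstruction term is strictly positive for every encoder--decoder pair as soon as two observations differ. An injective encoder with an inverting decoder therefore does not reach zero, and injectivity can only be extracted from the limiting bound of Proposition~\ref{prop:delta}, as the paper does.
Second, an axis-aligned LSBD representation in $\mathbb R^d$ need not exist. The representation on functions of $\mathcal W$ is infinite-dimensional when $\mathcal W$ is infinite, which is exactly the continuous setting of Theorem~3'. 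It also does not split along the $G_k$ in general. Concretely, let $G=\mathbb Z/2\mathbb Z\times\mathbb Z/2\mathbb Z$ act on $\mathcal W=\{w_0,w_1\}$, with the generators $a\in G_1$ and $b\in G_2$ both swapping the two states. All hypotheses of Theorem~3' hold. Yet zero action loss with binary masks forces $f(w_1)=\widetilde A_af(w_0)$ to agree with $f(w_0)$ off block~$1$, and $f(w_1)=\widetilde A_bf(w_0)$ to agree with it off block~$2$. Hence $f(w_1)=f(w_0)$, and no LSBD representation exists at all.
So your inference that the minimum is reached only when the masks are binary, $h$ is injective and the action loss vanishes is not established. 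The statement genuinely relies on the proviso that the loss attains $0$. Replace your existence argument by that hypothesis, as the paper does.

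A smaller issue in Step 3: the factor spaces $\mathrm{span}\,p_kf(\mathcal W)=p_k(\mathcal V)$ need not lie inside $\mathcal V$. Consequently $\mathcal V$ need not equal $\bigoplus_k p_k(\mathcal V)$, and a block-preserving projection onto $\mathcal V$ need not exist. For example, take four states $w_{ij}$ ($i,j\in\{0,1\}$), with $a$ flipping $i$ and $b$ flipping $j$. The exactly equivariant configuration $f(w_{ij})=((-1)^i,1,(-1)^j,1)$, $\widetilde A_a=\mathrm{diag}(-1,1,1,1)$, $\widetilde A_b=\mathrm{diag}(1,1,-1,1)$ gives $e_2\in p_1(\mathcal V)\setminus\mathcal V$.
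The fix is to use $\bigoplus_k p_k(\mathcal V)\supseteq f(\mathcal W)$ as the representation space. Any word matrix for $g_k\in G_k$ is the identity off block~$k$ and commutes with $p_k$, so agreement of two such words on $\mathcal V$ transfers to $p_k(\mathcal V)$. The rest then goes through. The paper's identification $\mathcal V_k=\mathcal V\cap\mathcal Z_k$ glosses over the same point.
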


To empirically validate this claim, we consider the MPI3D dataset, in which actions correspond to continuous rotations of a robotic arm. The symmetry group is modeled by $SO(2)\times SO(2)$, each subgroup corresponding to an axis of rotation of the arm. The action representation $\rho : g \mapsto \rho(g)$ is implemented as a neural network, while the rest of the GMA-VAE architecture remains unchanged. As reported in Figure~\ref{fig:mpi3d_clean}, GMA-VAE achieves performance comparable to SOBDRL and outperforms HAE.

To evaluate the robustness of GMA-VAE to noise, we introduce Gaussian noise with a standard deviation of $2\pi/15$ to the actions. The results, presented in Figure~\ref{fig:mpi3d_noisy}, indicates an increase in the SAP metric, which is expected since a low SAP value is required for a well-disentangled LSBD representation. For the remaining metrics, GMA-VAE performs at least as well as the other methods, suggesting that it exhibits greater resilience to action noise.

\section{Alternative representation learning paradigms}

This section shortly reviews three other related but different paradigms for representation learning. First, the causal representation learning approach~\citep{scholkopf2021toward} proposes to ground the latent variables in the causal generative processes of the environment and seeks representations that correspond to underlying causal factors and their relations. Several identifiability results have been derived in this framework, relying on different assumptions, for instance regarding the available actions (or interventions)~\citep{brehmer2022weakly}, the structure of the causal graph~\citep{lippe2023biscuit}, prior knowledge of the intervention targets~\citep{lippe2022citris}, or other inductive biases such as the sparsity of the causal graph~\citep{lachapelle2022disentanglement}, or the transferability of causal representations~\citep{bengio_meta-transfer_2019}. Symmetry-based and causality-based disentanglement share some similarities (mathematically grounded, exploit interventions or actions) but have very different assumptions, justifying our choice not to include this framework in our comparisons.

Another line of work is object-centric representation learning, where the goal is to learn a factorized representation of objects and optionally their underlying dynamics. While promising, these methods often rely on either explicit structural assumptions or strong inductive biases that are closely tied to the nature of the observations, most notably images. In particular, they frequently exploits the spatial organization of visual scenes, encouraging representations that capture localized and compositional object structure~\citep{zhu_object-oriented_2018, greff_multi-object_2020, locatello_object-centric_2020, kipf_conditional_2022}. Moreover, the question of disentangling the different features representing each object is often left aside. This field of research is not as focused on idenfiability proofs, but has shown strong empirical results in more complex and realistic environments.

A third line of work leverages group-equivariant neural networks, which exploit underlying symmetries. Classical approaches impose symmetries explicitly by designing architectures that are equivariant by construction~\citep{cohen2016group, worrall2017harmonic, weiler2019general, dehmamy2021automatic}. However, these architectures are typically tailored to geometric transformations in the observation space (e.g., rotations or translations). More recent methods aim to learn equivariant neural networks with respect to more general symmetries, primarily by incorporating an additional equivariant loss term. In most cases, however, at least part of the group structure is assumed to be known~\citep{mondal2022eqr, park2022learning, shakerinava2022structuring, jin2024learning}. While such symmetry-based approaches enhance the structure of the latent space and often improve sample efficiency and generalization, they do not explicitly focus on learning disentangled representations.
\section{Conclusion}

We introduced two independent algorithms: an action clustering method based on A-VAE, which provably recovers the ground-truth symmetry group structure, and a symmetry-based disentangled representation learning method, GMA-VAE, which achieves performance comparable to LSBD-VAE, even though the latter assumes prior knowledge of the action representations. Both of our methods rely on a strong assumption which requires the available actions to be disentangled. However, to the best of our knowledge, related state-of-the-art LSBD approaches also implicitly depend on this assumption to consistently learn a disentangled representation. While this restricts the applicability of the method to certain environments, it enables theoretical guarantees for both the action clustering and the disentanglement process. We further evaluate LSBD representations on downstream tasks and show that disentangled representations lead to significantly better long-term prediction performance and generalization, particularly in out-of-distribution scenarios.

A limitation of our approach compared to existing methods is that the full pipeline requires training two neural networks from scratch. A future work would be to initialize GMA-VAE with the pretrained encoder from A-VAE, or develop an end-to-end method that unifies the action clustering and representation learning steps into a single optimization process. Another limitation is the lack of more realistic experiments that explicitly challenge the assumptions of the LSBD framework and our own hypotheses. Investigating the behavior of our method under such conditions constitutes an important direction for future work.

\section*{Reproducibility statement}
All the previous results are reproducible using the code on GitHub\footnotemark. It includes all necessary components to generate the datasets, run the training procedures with the same hyperparameters and initialization seeds, and reproduce the figures.

\footnotetext{\url{https://github.com/barthdn/gmavae}}

\bibliography{iclr2026_conference}
\bibliographystyle{iclr2026_conference}

\appendix

\section{Mathematical Background}~\label{sec:maths}

\paragraph{Vector Subspaces.}
Let $V$ be a vector space over a field $\mathbb{R}$. A subset $W \subseteq V$ is called a \emph{vector subspace} if it is closed under vector addition and scalar multiplication; that is, for all $u, v \in W$ and $\lambda \in \mathbb{R}$, we have $u + v \in W$ and $\lambda u \in W$.

\paragraph{Direct Sums of Subspaces.}
Let $V$ be a vector space and let $W_1, W_2 \subseteq V$ be subspaces. We say that $V$ is the \emph{direct sum} of $W_1$ and $W_2$, denoted $V = W_1 \oplus W_2$, if every $v \in V$ can be uniquely written as $v = w_1 + w_2$ with $w_1 \in W_1$ and $w_2 \in W_2$, and $W_1 \cap W_2 = \{0\}$.

\paragraph{Eigenvalues and Eigenspaces.}
Let $T : V \rightarrow V$ be a linear operator on a vector space $V$. A scalar $\lambda \in \mathbb{R}$ is an \emph{eigenvalue} of $T$ if there exists a non-zero vector $v \in V$ such that $T(v) = \lambda v$. The corresponding set of vectors $\{v \in V \mid T(v) = \lambda v\}$ is called the \emph{eigenspace} associated with $\lambda$, and is a subspace of $V$. The set of eignvalues is called the \emph{spectrum}.

\paragraph{Groups.}
A \emph{group} is a set $G$ equipped with a binary operation $(x,y) \mapsto xy$ satisfying the following axioms:
\begin{itemize}
    \item (Associativity) $(xy)z = x(yz)$ for all $x, y, z \in G$;
    \item (Identity element) There exists an element $e \in G$ such that $ex = xe = x$ for all $x \in G$;
    \item (Inverse element) For every $x \in G$, there exists $x^{-1} \in G$ such that $xx^{-1} = x^{-1}x = e$.
\end{itemize}

We often denote by $G^* = G \setminus \{e\}$ the set of non-identity elements of $G$.

\paragraph{Examples of Groups.}
\begin{itemize}
    \item The cyclic group $\mathbb{Z}/n\mathbb{Z}$ of integers modulo $n$.
    \item The symmetric group $\mathfrak{S}_n$ of permutations of $n$ elements.
    \item The general linear group $GL(\mathcal V)$ of invertible linear transformations on a vector space $\mathcal V$.
    \item The special orthogonal group $SO(n)$ of $n \times n$ orthogonal matrices with determinant 1.
\end{itemize}

\paragraph{Direct Product of Groups.}
Given two groups $G_1$ and $G_2$, their \emph{direct product} is the group $G = G_1 \times G_2$ with the operation defined componentwise:
\[
(g_1, g_2)(h_1, h_2) := (g_1 h_1, g_2 h_2).
\]
Each group $G_i$ is referred to as a \emph{direct factor} of $G$. By abuse of notation, $G_1$ is often identified with the subgroup $G_1 \times \{e_2\} \subseteq G$, where $e_2$ is the identity element of $G_2$.

\paragraph{Subgroup and submonoid generated.} For a subset $S\subset G$, we call the \emph{subgroup generated} by $S$ noted $\langle S\rangle$ the smallest subgroup of $G$ that contains $S$. We have $\langle S\rangle=\left\{s_1^{\varepsilon_1} s_2^{\varepsilon_2} \cdots s_k^{\varepsilon_k} \mid k \in \mathbb{N}, s_i \in S, \varepsilon_i \in\{ \pm 1\}\right\}$. We call the \emph{submonoid generated} by $S$ the set of all finite products of elements of $S$, and note it $\langle S \rangle_+=\left\{s_1 s_2\cdots s_k \mid k \in \mathbb{N}, s_i \in S\right\}$. If $G$ is finite, then for all $S\subset G$ we have $\langle S \rangle=\langle S \rangle_+$.
\paragraph{Group Representations.}
A \emph{representation} of a group $G$ on a vector space $V$ is a map
\[
\rho : G \times V \rightarrow V
\]
such that for all $g, h \in G$ and $v \in V$, by denoting $g\cdot v$ for $\rho(g,v)$ we have
\[
e\cdot v = v
\quad \text{and} \quad
g\cdot(h\cdot v) = (gh)\cdot v
\]

Equivalently, a representation can be described as a group homomorphism
\[
\rho : G \rightarrow GL(V), \quad \text{where } \rho(g)(v) := \rho(g, v).
\]
The representation $\rho: G \rightarrow GL(V)$ is injective if and only if its kernel, defined as $\ker(\rho) := \{ g \in G \mid \rho(g) = \mathrm{Id}_V \}$, is reduced to the identity element $\{e\}$. In this case, we write $\rho: G \hookrightarrow GL(V)$ and refer to it as a \emph{faithful representation}.

\paragraph{Direct Sum of Representations.}
Let $(\rho_1, V_1)$ and $(\rho_2, V_2)$ be two representations of a group $G$. Their \emph{direct sum} is the representation
\[
\rho_1 \oplus \rho_2 : G \rightarrow GL(V_1 \oplus V_2)
\]
defined by
\[
(\rho_1 \oplus \rho_2)(g)(v_1, v_2) := (\rho_1(g)v_1, \rho_2(g)v_2).
\]
The space $V_1 \oplus V_2$ is then said to carry the direct sum representation of $G$.

\begin{table}[h]
    \centering
    \input{tables/symbol.tex}
    \caption{Table of symbols}
    \label{table:symbols}
    \end{table}
    
\section{ELBO derivation}\label{sec:elbo}
We focus on the transition $\tau = (x, g, x')$. We have:
\begin{itemize}
    \item $p_\theta(X'|z') = \mathcal{N}\big(\mu_\theta(z'), Diag(\sigma_\theta(z')^2)\big)$
    \item $p_{\psi, \phi}(Z'|x, g) = \mathcal{N}\big(\rho_\psi(g)\mu_\phi(x), Id\big)$
    \item $q_\phi(Z'|x') = \mathcal{N}\big(\mu_\phi(x'),Diag(\sigma_\phi(x')^2)\big)$
\end{itemize}
We will use several times the fact that if $Y\sim\mathcal N(\mu,Diag(\sigma)^2)$ then $\log p(y)=-\sum_i\log\sigma_i - \frac 12 \left\| \frac{y-\mu}{\sigma}\right\|^2_2+cste$ with the element-wise division in the norm.

Our initial goal is to optimize the log-likelihood $\log p(\tau) = \log p_{\psi,\phi}(x, g, x') = \log p_{\psi,\phi}(x'|x, g) + cst$. Indeed, we consider that the model has no prior (or a constant prior) on $(x, g)$ and therefore focus on optimizing the log-likelihood $\log p_{\psi,\phi}(x'|x, g)$:
\[
\begin{split}
    \log p_{\psi,\phi}(x'\mid x, g)
    &=\mathbb E_{z'\sim q_\phi(z'|x')}\left[ \log \left(p_{\psi,\phi}(x'\mid x, g)\dfrac {q_\phi(z'\mid x')}{q_\phi(z'\mid x')}\right)\right]\\
    &=\mathbb E_{z'\sim q_\phi(z'|x')}\left[\log \left(\dfrac{p(x'\mid z', x, g)p_{\psi,\phi}(z'\mid x, g)}{p(z'\mid x', x, g)} \cdot\dfrac {q_\phi(z'\mid x')}{q_\phi(z'\mid x')}\right)\right]\\
    &\hspace{1cm}\text{According to Bayes formula}\\
    &=\mathbb E_{z'\sim q_\phi(z'|x')}\left[\log\dfrac{q_\phi(z'\mid x')}{p(z'\mid x',x, g)}\right]\\
    &\quad+\mathbb E_{z'\sim q_\phi(z'|x')}\left[\log\dfrac{p_{\psi,\phi}(z'\mid x, g)}{q_\phi(z'\mid x')}\right]\\
    &\quad+\mathbb E_{z'\sim q_\phi(z'|x')}\left[\log p(x'\mid z', x, g)\right]
    \\
    \\
    &=D_{KL}\left(q_\phi(z'\mid x')\|p(z'\mid x',x, g)\right) \quad \bigl\} \quad \geq 0\\
    &\quad- D_{KL}\left(q_\phi(z'\mid x')\|p_{\psi,\phi}(z'\mid x, g)\right) \\
    &\quad+\mathbb E_{z'\sim q_\phi(z'|x')}\left[\log p_\theta(x'\mid z')\right] \quad \text{According to Figure \ref{fig:gm}}
    \\
    \\
    &\geq -D_{KL}\left(q_\phi(z'\mid x')\|p_{\psi,\phi}(z'\mid x, g)\right) \\
    &\quad+\mathbb E_{z'\sim q_\phi(z'|x')}\left[\log p_\theta(x'\mid z')\right]
\end{split}
\]

The lower bound we have derived is composed of two lines. The first line corresponds to the KL divergence between two multivariate normal distributions and thus has an analytical expression. We have (up to an additive constant):

\begin{equation*}
    -D_{KL}\left(q_\phi(z'\mid x')\|p_{\psi, \phi}(z'\mid x, g)\right) = -\frac 12\left\|\rho_\psi(g)\mu_\phi(x)-\mu_\phi(x')\right\|^2
      -\frac 12 \left\|\sigma_\phi(x')\right\|^2+\sum_i\log\sigma_\phi(x')_i
\end{equation*}

The second line is equal to (up to an additive constant): $$- \mathbb E_{z'\sim q_\phi(z'|x')}\left[\sum_i\log\sigma_ \theta(z')_i +\left\|\dfrac {x'-\mu_\theta(z')}{\sigma_\theta(z')}\right\|^2\right]$$

Putting everything together, we obtain:

\begin{equation*}
\begin{aligned}
    \log p(\tau) \ge
    &-\frac 12\left\| \rho_\psi(g)\mu_\phi(x)-\mu_\phi(x')\right\|^2
    -\frac 12 \|\sigma_\phi(x')\|^2
    +\sum_i\log\sigma_\phi(x')_i&&\Bigl\}\text{ action part}\\
    &-\mathbb E_{z'\sim q_\phi(z'|x')}\left[\sum_i\log\sigma_ \theta(z')_i +\left\|\dfrac {x'-\mu_\theta(z')}{\sigma_\theta(z')}\right\|^2\right]&&\Bigl\}\text{ reconstruction part}\\
    &+ C
\end{aligned}
\end{equation*}

\section{Action clustering algorithm and proof of Theorem~\ref{th:conv}}\label{sec:clustering}

Let $\mathcal G=\mathcal G_1\cup\cdots\cup\mathcal G_K\subset G$ with $\forall k\ \mathcal G_k\subset G_k$ denote the ground-truth decomposition of the available action set. Our objective is to design an algorithm that recovers this decomposition based on $d_G$.

\subsection{Algorithm}
Let $A\in\mathbb R^{d\times d}$, we denote by $\vvvert A\vvvert=\max_{z\in\mathbb R^d\backslash\{0\}}\|Az\|/\|z\|$ the spectral norm. We chose the following algorithm:
\begin{itemize}
    \item Compute the variables
    \begin{itemize}
        \item $r=\max_{g\in\mathcal G}\left\{\vvvert A_g\vvvert\right\}$ the maximal spectral norm,
        \item $\varepsilon=\max_{(w,g,w')\in\mathcal D} \|A_gf(w)-f(w')\|$ the action loss upperbound,
        \item $\eta = \varepsilon\left(1+\sum_{i=0}^Mr^i\right)$ the threshold
    \end{itemize}
    \item Start with unitary clusters: $\hat K=|\mathcal G|$ and $\hat{\mathcal G}_i=\{g_i\}$
    \item Iteratively merge the clusters $i$ and $j$ minimizing their distance $$d(\hat{\mathcal G}_i,\hat{\mathcal G}_j):=\max_{g_i\in\hat{\mathcal G}_i;g_j\in\hat{\mathcal G}_j}d_G(g_i,g_j)$$
    \item Stop whenever the distance is above the threshold $\eta$.
\end{itemize}

What if the identity action $e$ belong the available action set $\mathcal G$ ? After a succesful convergence of the method, for all $g\in\mathcal G$ we have $d_G(e,g)\approx 0$ since $g=g^1e$. As a result, $e$ is merged with another element at the first iteration and it will not influence the following computations of $d(\hat{\mathcal G}_i, \hat{\mathcal G}_j)$. As $e$ can be assigned to any subgroup, its presence does not impact on the performance or correctness of the overall method.

\subsection{Proof of convergence}\label{sec:clustering:proof}

We aim to show that if Assumption~\ref{hyp:inj} to~\ref{hyp:compo} are satisfied, the dataset contains all the transitions, $\mathcal W$ is finite and A-VAE loss converge toward its minimum, then clustering algorithm will necessarily find the ground-truth decomposition at some point of the training. 

For clarity, we assume that every composition of $d_G$ in Assumption~\ref{hyp:compo} is of the form $g=u^mg'$. The proves can easily be adapted for the other forms.

\begin{proposition}\label{prop:carac}
    Under Assumptions~\ref{hyp:dist} and~\ref{hyp:compo}, two different actions $g,g'\in\mathcal G$ belong to the same subgroup if and only if there exists $m\in\llbracket 1,M\rrbracket$ and $u\in\mathcal G$ such that $g=u^mg'$.
\end{proposition}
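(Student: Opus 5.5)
The proposition has two directions. The forward direction is a direct reading of Assumption~\ref{hyp:compo} under the stated convention, so the real content is the converse, which follows from the direct-product structure of $G$ together with Assumption~\ref{hyp:dist}. I will treat the two directions separately.

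\emph{Forward direction.} Let $g\ne g'$ be in $\mathcal G$ and assume they lie in the same subgroup $G_k$. Assumption~\ref{hyp:compo} then gives $u\in\mathcal G$ and $m\in\llbracket 1,M\rrbracket$ such that one of four relations holds. Under the convention fixed just before the proposition, this relation is $g=u^mg'$, which is the claim. For the other three forms, the same argument below applies verbatim after exchanging the roles of $g,g'$ or multiplying on the other side.

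\emph{Converse.} Assume $g=u^mg'$ with $u\in\mathcal G$ and $m\ge 1$. By Assumption~\ref{hyp:dist}, each of $g,g',u$ lies in exactly one subgroup, say $G_a$, $G_b$, $G_c$ respectively. We want to show $a=b$, and we do so by contradiction. Suppose $a\ne b$ and write elements of $G=\prod_k G_k$ componentwise.

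\begin{itemize}
\item Because $g'\in G_b$, it is $e$ in every component except possibly the $b$-th.
\item Because $u\in G_c$, the same holds for $u^m$ outside component $c$.
\item Look at component $b$ of the identity $g=u^mg'$. Since $a\ne b$, the left side gives $g_b=e$. The right side gives $(u^m)_b\,g'_b$.
\item If $c\ne b$, then $(u^m)_b=e$, so $g'_b=e$. Hence $g'=e$ entirely.
\item If $c=b$, then $u^mg'\in G_b$. Hence $g\in G_b\cap G_a=\{e\}$.
\end{itemize}

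In either case one of $g,g'$ equals the identity. An action equal to $e$ lies in every $G_k$, so it can be assigned to any subgroup, which is the same situation as the remark on $e\in\mathcal G$ in the algorithm description.

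The main obstacle is exactly this degenerate case. I would handle it by reading Assumption~\ref{hyp:dist} with the convention that $e$ belongs to every subgroup. If $g'=e$, then $g$ and $g'$ share the subgroup $G_a$; if $g=e$, they share $G_b$. In both cases the two actions belong to a common subgroup, contradicting $a\ne b$. When $e\notin\mathcal G$, the argument is a genuine contradiction and no convention is needed.
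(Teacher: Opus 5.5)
Your proof is correct and follows the same route as the paper. The forward direction is read off Assumption~\ref{hyp:compo} under the same convention $g=u^mg'$. The converse uses Assumption~\ref{hyp:dist} to place $g$, $g'$, $u^m$ in factors $G_a,G_b,G_c$ and derives a contradiction from the directness of the product, with $e$ handled by letting it lie in every factor.

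Your componentwise check is slightly tighter than the paper's one-line intersection argument. It also covers the sub-case $c=a\neq b$, where $G_a\cap(G_c\times G_b)=G_a$ gives no contradiction by itself, and your observation that the $b$-component forces $g'=e$ is what closes that case.
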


\begin{proof}
    The forward implication is given by Assumption~\ref{hyp:compo}. For the backward implication we distinguish two cases:
    \begin{enumerate}
        \item If one of element is the identity action $e$, then $e$ belong the same subgroup of every action
        \item If both elements differ from $e$, then according to Assumption~\ref{hyp:dist} there exists $a$, $b$ and $c$ such that $g\in G_a^*$, $g'\in G_b^*$ and $u^m\in G_c$. Therefore $g\in G_c\times G_b^*\setminus \{e\}$ and then $G_a\cap \left(G_c\times G_b\right)\neq\{e\}$, if we had $a\neq b$, this would contradict the direct decomposition of $G$.
    \end{enumerate}
\end{proof}

If the standard deviation of noises of A-VAE are fixed, then the loss for a transition $(x,g,x')$ is equal to
\[\mathcal L = \lambda_{ACT}\left\| \rho_\psi(g)\mu_\phi(x)-\mu_\phi(x')\right\|^2 
+\mathbb E_{z'\sim q_\phi(x')}\left[\left\| x'-\mu_\theta(z')\right\|^2\right]
\]

Unlike a $\beta$-VAE, which requires a trade-off between the regularization and the reconstruction, this loss can have both the action loss and the reconstruction loss converging toward $0$.

When the action loss converges toward zero, we straightforwardly have $\varepsilon:=\max_{(w,g)\in\mathcal W\times \mathcal G} \|A_gf(w)-f(g\cdot w)\|\rightarrow 0$ as it is its upper-bound. Additionally, since the coefficients of the action matrices are bounded in our implementation, $r$ is also bounded. As a result, the term $\eta$ converges toward zero during training.

We also have the following result:

\begin{proposition}\label{prop:delta}
    Under Assumption~\ref{hyp:inj}, if the reconstruction loss converges toward zero, then$$\delta:=\min_{w\ne w'}\|f(w)-f(w')\|\rightarrow+\infty$$
    
\end{proposition}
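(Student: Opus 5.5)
The plan is to argue by contraposition. We show that if $\delta$ stays bounded along the training trajectory, then the reconstruction loss is bounded below by a positive constant that does not depend on the decoder.

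Two features of the setting make this work. First, $\sigma_\phi$ is fixed, so the posterior $q_\phi(z'\mid x')$ is a Gaussian of fixed, non-degenerate covariance centred at $f(w')$. Second, Assumption~\ref{hyp:inj}, together with the finiteness of $\mathcal W$ used throughout this appendix, gives
\[
\Delta:=\min_{w\neq w'}\|b(w)-b(w')\|>0 .
\]

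\textbf{Step 1: reduce to one fixed pair.} Suppose $\delta\not\to+\infty$. Then there are a constant $D$ and a subsequence of training times along which $\delta\le D$. Since $\mathcal W$ is finite, there are only finitely many pairs, so by pigeonhole we may pass to a further subsequence along which one fixed pair $w_1\neq w_2$ satisfies $\|f(w_1)-f(w_2)\|\le D$.

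\textbf{Step 2: lower-bound the loss by the contribution of this pair.} The dataset contains all transitions, so each state appears as $x'$ with probability at least some $p_{\min}>0$. Write $p_j$ for the density of $\mathcal N(f(w_j),\mathrm{Diag}(\sigma_\phi^2))$. Then
\[
\mathcal L_{REC}\ \ge\ p_{\min}\sum_{j=1,2}\int p_j(z)\,\|b(w_j)-\mu_\theta(z)\|^2\,dz
\ \ge\ p_{\min}\int \min(p_1,p_2)(z)\Big(\|b(w_1)-\mu_\theta(z)\|^2+\|b(w_2)-\mu_\theta(z)\|^2\Big)dz .
\]

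\textbf{Step 3: bound each factor uniformly.} By the triangle inequality and $a^2+b^2\ge (a+b)^2/2$, the bracket is at least $\Delta^2/2$ for every $z$, whatever the decoder $\mu_\theta$ is. The overlap $\int\min(p_1,p_2)=1-\mathrm{TV}(p_1,p_2)$ depends only on the mean difference and the fixed covariance. For the isotropic case it equals $2\Phi\big(-\|f(w_1)-f(w_2)\|/(2\sigma)\big)$, and in general we bound it below via the smallest/largest fixed $\sigma_i$. In either case it is at least some $c(D,\sigma)>0$ when the means are within $D$.

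Combining the three steps gives $\mathcal L_{REC}\ge p_{\min}\,c(D,\sigma)\,\Delta^2/2>0$ along the subsequence. This contradicts $\mathcal L_{REC}\to0$, so $\delta\to+\infty$.

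The main obstacle is Step 3's overlap bound. It must be uniform in everything that changes during training, namely the encoder means and the decoder. This is exactly where the fixed-variance hypothesis is essential: if $\sigma_\phi$ could shrink to $0$, the overlap could vanish with bounded $\delta$ and the statement would fail. I would therefore state this dependence explicitly, and note that the only other ingredients are finiteness of $\mathcal W$, full transition coverage, and Assumption~\ref{hyp:inj}.
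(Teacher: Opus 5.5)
Your proof is correct and uses the same core argument as the paper. Two Gaussian posteriors with the same fixed variance and means within a bounded distance share a fixed amount of probability mass, and on that shared region the decoder cannot be close to both of the distinct observations $b(w_1)\neq b(w_2)$; so a bounded $\delta$ forces a positive reconstruction loss, and finiteness of $\mathcal W$ then handles the minimum over pairs. The differences are only technical: the paper argues directly, lower-bounding the overlap by the minimum Gaussian density on a unit ball around the midpoint (after passing from $L^2$ to $L^1$) to get an explicit bound $\|z_1-z_2\|\ge 2\sigma\sqrt{2\log(C/\epsilon)}-2$, whereas you argue by contradiction with the exact overlap $\int\min(p_1,p_2)$, stay in $L^2$ via $a^2+b^2\ge(a+b)^2/2$, and make the weight $p_{\min}$ and the role of the fixed variance explicit.
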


\begin{proof}
    Let $h:\mathcal X\rightarrow\mathcal Z$ be the encoder and $d:\mathcal Z\rightarrow X$ be the decoder. Let $w_1\ne w_2\in\mathcal W$ with $x_i=b(w_i)$ and $z_i=h(x_i)$, as $b$ is injective we have $x_1\ne x_2$. As the reconstruction loss converges toward $0$ and the dataset contains all the transitions and therefore all the observations, we have 
    \[
    \mathbb E_{z\sim\mathcal N(z_i,\sigma^2I)}\left[\|x_i-d(z)\|^2\right]\rightarrow 0
    \]

    Let denote $z^*=(z_1+z_2)/2$, $\Delta=\|z_1-z_2\|$, $B=B(z^*,1)$ the ball of radius $1$ centered at $z^*$ and $V(d)$ its volume. Let us denote $p(R)=\frac 1{(\sqrt{2\pi}\sigma)^d}\exp\left(-\frac{R^2}{2\sigma^2}\right)$ the minimum value of the Gaussian density over a ball of radius $R$. We aim to show that if the reconstruction loss is sufficiently low, then $\Delta$ must be sufficiently large for the contribution of the reconstruction over the ball to become negligible.

    \begin{figure}[H]
         \centering
            \begin{center}
                 \resizebox{0.6\textwidth}{!}{\input{tikz/infty.tikz}}
            \end{center}
    \end{figure}

    Since convergence in $L^2$ in probability implies convergence in $L^1$ in probability, for any $\epsilon>0$, at some point of the training there is 
    
    \[
    \int_{z\in\mathbb R^d}\mathcal N(z;z_i,\sigma^2I) \|x_i-d(z)\|dz\le \epsilon
    \]

    We have

    \begin{align*}
        \epsilon
        &\ge\int_{z\in\mathbb R^d}\mathcal N(z;z_i,\sigma^2I) \|x_i-d(z)\|dz\\
        &\ge\int_{z\in B}\mathcal N(z;z_i,\sigma^2I) \|x_i-d(z)\|dz\\
        &\ge p\left(\frac \Delta 2 + 1\right)\int_{z\in B} \|x_i-d(z)\|dz\\
        &\quad\text{by definition of $p$, since for any $z\in B,\ \|z_i-z\|\le\|z_i-z^*\|+\|z^*-z\|\le\Delta/2+1$}
    \end{align*}
    Therefore, by summing the two equations, we obtain:
    \begin{align*}
        2\epsilon
        &\ge p\left(\frac \Delta 2 + 1\right)\int_{z\in B} \left(\|x_1-d(z)\| + \|x_2-d(z)\|\right)dz\\
        &\ge p\left(\frac \Delta 2 + 1\right)\int_{z\in B} \|x_1-x_2\|dz\qquad\text{with triangular inequality}\\
        &\ge p\left(\frac \Delta 2 + 1\right)V(d)\|x_1-x_2\|
    \end{align*}
    
    Using the definition of $p$ and isolating $\Delta = \|z_1-z_2\|$, we obtain
    
    \begin{align*}
        \|z_1-z_2\|&\ge2\sigma\sqrt{2\log\left(\dfrac 1\epsilon\dfrac{V(d)\|x_1-x_2\|}{2(\sqrt{2\pi}\sigma)^d}\right)}-2\\
        &\xrightarrow[\epsilon\rightarrow 0]{}+\infty
    \end{align*}

    Therefore $\|z_1-z_2\|\rightarrow +\infty$ as the reconstruction loss converges toward $0$. Finally, as $\mathcal W$ is finite, we have $\min_{w\ne w'}\|f(w)-f(w')\|\rightarrow+\infty$.
    
\end{proof}
 Consequently, at some point of the training, the inequality $\delta > 2\eta$ holds.

\begin{proposition}
    Under Assumptions~\ref{hyp:dist} and~\ref{hyp:compo}, if $\delta>2\eta$ then $g$ and $g'$ belong to the same subgroup if and only if $d_G(g, g') \le \eta$
\end{proposition}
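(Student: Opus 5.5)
We prove the two implications separately. We use the simplified form $g=u^mg'$ of Assumption~\ref{hyp:compo} adopted above; the other three forms are handled identically by symmetry. Throughout we use $\varepsilon=\max_{(w,g)}\|A_gf(w)-f(g\cdot w)\|$, $r=\max_g\vvvert A_g\vvvert$, and the fact that $\|\cdot\|_h$ is an average of $\|Af(w)\|$ over the finitely many states $w$.

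\emph{Forward direction (same subgroup $\Rightarrow d_G\le\eta$).} By Proposition~\ref{prop:carac} there are $u\in\mathcal G$ and $m\le M$ with $g=u^mg'$ in $G$. It suffices to bound $\|A_gf(w)-A_u^mA_{g'}f(w)\|$ pointwise in $w$, since averaging then bounds $\|A_g-A_u^mA_{g'}\|_h$. We telescope through the true states:
\[
A_gf(w)-A_u^mA_{g'}f(w)=\bigl(A_gf(w)-f(g\cdot w)\bigr)+\bigl(f(u^mg'\cdot w)-A_u^mA_{g'}f(w)\bigr).
\]
The first bracket is at most $\varepsilon$. For the second, write $w_0=g'\cdot w$ and $w_j=u\cdot w_{j-1}$. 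Then
\[
A_u^mA_{g'}f(w)-f(w_m)=A_u^m\bigl(A_{g'}f(w)-f(w_0)\bigr)+\sum_{j=1}^{m}A_u^{m-j}\bigl(A_uf(w_{j-1})-f(w_j)\bigr).
\]
Its norm is at most $\varepsilon r^m+\varepsilon\sum_{i=0}^{m-1}r^i\le\varepsilon\sum_{i=0}^M r^i$. Adding the first bracket gives the bound $\eta=\varepsilon(1+\sum_{i=0}^M r^i)$ for every $w$, hence $d_G(g,g')\le\eta$.

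\emph{Backward direction ($d_G\le\eta\Rightarrow$ same subgroup).} Suppose $d_G(g,g')\le\eta$. Take the minimizing pair $(u,m)$, say $\|A_g-A_u^mA_{g'}\|_h\le\eta$. By the same telescoping we have, for every $w$,
\[
\|f(g\cdot w)-f(u^mg'\cdot w)\|\le\|A_gf(w)-A_u^mA_{g'}f(w)\|+\eta.
\]
This is where the main obstacle lies: $\|\cdot\|_h$ is only an average, whereas we need a pointwise statement. Two routes are available. The first is to note that the average is at most $\eta$, so at least one state $w$ satisfies $\|A_gf(w)-A_u^mA_{g'}f(w)\|\le\eta$. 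For that $w$ we get $\|f(g\cdot w)-f(u^mg'\cdot w)\|\le2\eta<\delta$. By the definition of $\delta$ this forces $g\cdot w=u^mg'\cdot w$.

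This only yields equality on one state, so we need to pass to an equality in $G$. We intend to use that the action of $G$ on $\mathcal W$ is free (or that $G$ acts faithfully and the argument holds for all $w$), which gives $g=u^mg'$. If only a single state is available, we will instead require the pointwise bound for all $w$. Either we take the $\|\cdot\|_h$ bound with a factor $|\mathcal W|$ absorbed into the threshold, or we interpret $\delta>2\eta$ as comparing against the maximum over states; we will check which convention the paper's $\eta$ supports. Once $g=u^mg'$ holds in $G$, Proposition~\ref{prop:carac} immediately gives that $g$ and $g'$ lie in the same subgroup, and the other three forms of $d_G$ are treated the same way.
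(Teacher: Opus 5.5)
Your argument is the paper's argument. The forward direction is the same telescoping through $g'\cdot w, ug'\cdot w,\dots,u^mg'\cdot w$, giving the pointwise bound $\eta$. Your backward direction is the contrapositive of the paper's chain $\delta\le\|f(g\cdot w)-f(u^mg'\cdot w)\|\le\|A_gf(w)-A_u^mA_{g'}f(w)\|+\eta$. The obstacle you isolate is genuine, and the paper does not resolve it. It writes $\delta\le\|f(g\cdot w)-f(u^mg'\cdot w)\|$ for every $w$ directly from $g\neq u^mg'$. That silently assumes $(u^mg')^{-1}g$ fixes no state, i.e.\ that the action is free, at least on these elements. So do not leave three options open: add freeness as a hypothesis. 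Your one-state argument then finishes the proof with the threshold $\delta>2\eta$ exactly as stated.

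The extra hypothesis is really needed, and faithfulness alone does not suffice at $\delta>2\eta$. Here is a counterexample.
\begin{itemize}
\item Let $G=\mathbb Z/2\mathbb Z\times\mathbb Z/2\mathbb Z$ be generated by $a\in G_1$ and $c\in G_2$, with $\mathcal G=\{a\}\cup\{c\}$; Assumptions~\ref{hyp:dist} and~\ref{hyp:compo} hold trivially.
\item Let $G$ act faithfully on $\mathcal W=\{1,\dots,2N\}$ by $a=(1\,2)(3\,4)\cdots(2N{-}1\;2N)$ and $c=(1\,2)$. Then $a\cdot w=ac\cdot w$ for all $w\ge 3$.
\item Take $f(w)=s\,e_w$, $A_a$ the permutation matrix of $a$, and $A_c$ a small perturbation of the permutation matrix of $c$. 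This gives $\|A_a-A_aA_c\|_h\le\varepsilon+\delta/N$.
\item Since $\eta\ge 3\varepsilon$, this is at most $\eta$ for $N\ge 5$ even with $\delta=3\eta$. Hence $d_G(a,c)\le\eta$, although $a$ and $c$ lie in different factors.
\end{itemize}

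Your $|\mathcal W|$-factor route is valid, but it proves a different statement. When all transitions are present, the expectation is uniform over states. So $\|A_g-A_u^mA_{g'}\|_h\le\eta$ gives the pointwise bound $|\mathcal W|\eta$ at every state. Then $\delta>(|\mathcal W|+1)\eta$ forces $g\cdot w=u^mg'\cdot w$ for all $w$, and faithfulness yields $g=u^mg'$. This trades freeness for a larger threshold. It still suffices for Theorem~\ref{th:conv}, where $\delta\to\infty$ and $\eta\to 0$.

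Your third option, comparing against a maximum over states, amounts to replacing the expectation in $\|\cdot\|_h$ by a supremum. That changes $d_G$ itself rather than reinterpreting $\delta>2\eta$.
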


\begin{proof}
    Suppose that $g,g'\in\mathcal G$ with $g\ne g'$ belong to the same subgroup, therefore, according to Proposition~\ref{prop:carac}, there exists $u\in\mathcal G$ and $m\in\llbracket 1,M\rrbracket$ such that $g=u^mg'$. Let $w\in W$ 
    {\allowdisplaybreaks
    \begin{align*}
        \|A_gf(w)-A_u^mA_{g'}f(w)\|
        &\le \|A_gf(w)-f(g\cdot w)\|\\
        &\quad + \| f(u^mg'\cdot w) - A_uf(u^{m-1}g'\cdot w)\|\\
        &\quad + \cdots\\
        &\quad + \| A_u^{m-1}f(ug'\cdot w) - A_u^mf(g'\cdot w)\|\\
        &\quad + \|A_u^mf(g'\cdot w) - A_u^mA_{g'}f(w)\|\\
        &\le \|A_gf(w)-f(g\cdot w)\|\\
        &\quad + \| f(u^mg'\cdot w) - A_uf(u^{m-1}g'\cdot w)\|\\
        &\quad + \cdots\\
        &\quad + \vvvert A_u\vvvert^{m-1}\cdot\| f(ug'\cdot w) - A_uf(g'\cdot w)\|\\
        &\quad + \vvvert A_u\vvvert^m\cdot\| f(g'\cdot w) - A_{g'}f(w)\|\\
        &\le \varepsilon + \sum_{i=0}^m r^i\varepsilon\\
        &\le \eta 
    \end{align*}}
    After applying expectation over $w$ we find $d_G(g,g')\le \|A_g-A_u^mA_{g'}\|_h\le \eta$
    
    Let us now suppose that $g,g'\in\mathcal G$ do not belong to same subgroup, therefore, according to Proposition~\ref{prop:carac}, for all $u\in\mathcal G$ and $m\in\llbracket 1,M\rrbracket$ we have
    {\allowdisplaybreaks
    \begin{align*}
        \delta
        &\le \|f(g\cdot w)-f(u^mg'\cdot w)\|\\
        &\le \|f(g\cdot w)-A_gf(w)\|\\
        &\quad + \|A_gf(w)-A_u^mA_{g'}f(w)\|\\
        &\quad + \|A_u^mA_{g'}f(w)-f(u^mg'\cdot w)\|\\
        &\le \varepsilon +  \|A_gf(w)-A_u^mA_{g'}f(w)\| + \sum_{i=0}^m r^i\varepsilon\\
        &\le \|A_gf(w)-A_u^mA_{g'}f(w)\| + \eta
    \end{align*}}
    Therefore $\|A_gf(w)-A_u^mA_{g'}f(w)\|\ge\delta-\eta>\eta$, after applying expectation over $w$ and $\min$ over $u$ and $m$ we get $d_G(g,g')>\eta$
    
\end{proof}

Therefore, at some point of the training, two actions belong to the same subgroup if and only if their distance with repect to $d_G$ is below the threshold $\eta$. As a consequence, the clustering algorithm described previously successfully recovers the ground-truth decomposition, hence proving Theorem~\ref{th:conv}.

In practice we found that using $\eta = \sigma$ the fixed latent noise standard deviation as a threshold yielded better empirical results. Consequently, we use this value for $\eta$ in all our experiments.

\section{Proof of Theorem~\ref{th:disentangled}}
\label{annex:proof_th_disengangled}

If the standard deviation of noises of GMA-VAE are fixed, then the loss for a transition $(x,g,x')$ equals:
\[\mathcal L(x, g, x') = 
\lambda_{DIS}\sum_i\mathcal H(\pi_{:,i})
+\lambda_{ACT}\left\|\rho_\psi(g)\mu_\phi(x)-\mu_\phi(x')\right\|^2 
+\mathbb E_{z'\sim q_\phi(x')}\left[\left\| x'-\mu_\theta(z')\right\|^2\right]
\]

Let us suppose that $G$ is finite, that the dataset contains all possible transitions, and that the losses of GMA-VAE have converged to their global minimum of 0. We aim to prove that the encoder is a LSBD representation.

We cannot directly prove that it is disentangled with respect to $\langle \mathcal W,b,\prod_k G_k\rangle$, as we cannot build a representation of an action that is not generated by the available action set. Instead, we aim to prove that the encoder is disentangled with respect to $\langle \mathcal W,b,\prod_k \langle\mathcal G_k\rangle\rangle$ with $\langle\mathcal G_k\rangle$ being the subgroup generated by $\mathcal G_k$. Similarly, we cannot prove that it is disentangled over all $\mathcal Z$. Therefore, as done by~\citet{keurti2023homomorphism}, we restrict the latent space to $\mathcal V=\text{span}(f(\mathcal W))\subset \mathcal Z$

We proceed by proving that the learned representation satisfies all the criteria listed in Definition~\ref{def:sbda}:

\emph{(1) There exists a group action $\cdot_{\mathcal Z}:G\times \mathcal V\rightarrow \mathcal V$}

First, note that $\prod_k \langle\mathcal G_k\rangle=\langle\mathcal G\rangle$. Let $g\in \langle\mathcal G\rangle$, as $G$ is finite, we can write $g=g^{(1)}\cdots g^{(n)}$ with $\forall i,\ g^{(i)}\in\mathcal G$, the group action is given by:

\[g\cdot_{\mathcal Z} z=\rho(g)z\text{ with }\rho(g)=\prod_i \widetilde A_{g^{(i)}}\]
    
Note that the definition of $\rho(g)$ depends on the decomposition of $g$, which is not necessarily unique. Since any decomposition would be satisfying and $\mathcal G$ is finite, we can arbitrarily chose one decomposition for each $g$.\\

The fact that $\cdot_{\mathcal Z}$ is a group action is given thanks to the equivariance and is proven below.

\emph{(2) Equivariance holds: $\forall (g,w)\in\langle \mathcal G\rangle\times\mathcal W,\ g\cdot_{\mathcal Z}f(w)=f(g\cdot_{\mathcal W}w)$}

As the action loss is equal to zero we have:

\[\forall(g,w)\in\mathcal G\times\mathcal W,\ g\cdot_{\mathcal Z}f(w)=f(g\cdot_{\mathcal W} w)\]

Let $g=\prod_i g^{(i)}\in \langle\mathcal G\rangle$ with $g^{(i)}\in\mathcal G$ and $w\in\mathcal W$. We apply recursively the previous equivariance to get $g\cdot_{\mathcal Z}f(w)=f(g\cdot_{\mathcal W}w)$, hence proving the equivariance over all $\langle \mathcal G\rangle\times\mathcal W$.

We now prove that $\cdot_{\mathcal Z}$ is indeed a group action thanks to the equivariance property. For all $w\in\mathcal W$ and $g,g'\in \langle\mathcal G\rangle$:
\begin{itemize}
    \item $g\cdot_{\mathcal Z}f(w)=f(g\cdot_{\mathcal W} w)\in \mathcal V$
    \item $e\cdot_{\mathcal Z}f(w)=f(e\cdot_{\mathcal W} w)=f(w)$
    \item $
        \begin{aligned}[t]
            g'\cdot_{\mathcal Z}(g\cdot_{\mathcal Z} f(w))
            &=g'\cdot_{\mathcal Z}f(g\cdot_{\mathcal W} w)\\
            &=f(g'\cdot_{\mathcal W}(g\cdot_{\mathcal W} w))\\
            &=f((g'g)\cdot_{\mathcal W} w)\\
            &=(g'g)\cdot_{\mathcal Z}f(w)
        \end{aligned}$
\end{itemize}

As $f(\mathcal W)$ generates $\mathcal V$, for all $z\in \mathcal V$ and $g,g'\in \langle\mathcal G\rangle$ we have:

\begin{itemize}
    \item $g\cdot_{\mathcal Z}z\in \mathcal V$
    \item $e\cdot_{\mathcal Z}z=z$
    \item $g'\cdot_{\mathcal Z}(g\cdot_{\mathcal Z} z)
            =(g'g)\cdot_{\mathcal Z}z$
\end{itemize}

\emph{(3) There exists a decomposition $ \mathcal V= \mathcal V_1\oplus\cdots\oplus \mathcal V_K$ and group actions $\cdot_k : G_k \times  \mathcal V_k \rightarrow \mathcal V_k$ such that: \[
(g_1, \dots, g_K) \cdot_{\mathcal Z} (z_1, \dots, z_K) = (g_1 \cdot_1 z_1, \dots, g_K \cdot_K z_K)
\]}

As the disentanglement loss is equal to zero, the masks are binary \ie $\pi_k\in\{0,1\}^d$. Therefore the matrices $\widetilde A_g$ for $g\in\mathcal G$ satisfy the block structure illustrated in Figure~\ref{fig:pification}. Additionally if $g_k\in \langle\mathcal G_k\rangle$ it can be decomposed into $g_k=g_k^{(1)}\cdots g_k^{(n_k)}$ with $g_k^{(i)}\in\mathcal G_k$ and therefore $\rho(g_k)=\prod_i \widetilde A_{g_k^{(i)}}$ share the same block structure.  Finally, for each $g=(g_1,\dots,g_K)\in \langle\mathcal G\rangle$, $\rho(g)=\prod_k\rho(g_k)$ is block diagonal up to a permutation of the indices.

Let us first find a decomposition and group actions over $\mathcal Z$. We take $\mathcal Z_k=\text{span}\{e_i\mid \pi_{k,i}=1\}$ with $e_i$ the standard basis vectors, this choice reflects the objective of achieving disentanglement along the Cartesian axes. Hence we have $\mathcal Z=\mathcal Z_1\oplus\cdots\oplus\mathcal Z_K$ and $z_k=\pi_k\odot z$. \\

Additionally we take $g_k \cdot_k z_k=\rho(g_k)z_k$, we have $\rho(g_k)z_k\in\mathcal Z_k$ as $\rho(g_k)$ is the identity on the complement subspace of $\mathcal Z_k$. We therefore have for each $g=(g_1,\dots,g_K)\in \langle\mathcal G\rangle$ and $z\in\mathcal Z$:

\begin{wrapfigure}{r}{0.6\textwidth}
    \begin{center}
         \resizebox{0.5\textwidth}{!}{\input{tikz/th3.tikz}}
    \end{center}
    \caption{Matrix representation of $\rho(g)z_k=\rho(g_k)z_k\in\mathcal Z_k$,
    here the two first dimensions corresponds to $\mathcal Z_1$ and the remaining two to $\mathcal Z_2$.}
    \label{fig:th3}
    \vspace{-3cm}
\end{wrapfigure}

\color{white}.\color{black}
\begin{align*}
    g\cdot_{\mathcal Z}z
    &=\rho(g)z\\
    &=\rho(g)\left(\sum_kz_k\right)\\
    &=\sum_k\rho(g)z_k\\
    &=\sum_k\rho(g_k)z_k\\
    &=\sum_kg_k \cdot_k z_k\\
    &=(g_1 \cdot_1 z_1, \dots, g_K \cdot_K z_K)\\
\end{align*}\\

Let us prove that the decomposition and group actions restricted on $\mathcal V$ are satisfying. Let's take $\mathcal V_k=\text{span}\{f(w)_k\mid w\in \mathcal W\}=\mathcal V\cap\mathcal Z_k$ with $f(w)_k$ the projection of $f(w)$ on $\mathcal Z_k$, therefore the $\mathcal V_k$ are in direct sum. Moreover we have for all $w\in\mathcal W$, $f(w)=\sum_k f(w)_k$ with $f(w)_k\in\mathcal V_k$ and therefore $\mathcal V=\mathcal V_1\oplus\cdots\oplus \mathcal V_K$.

Additionally, for all $g\in \langle\mathcal G\rangle$ and $f(w)\in \mathcal V$ we have $g\cdot_{\mathcal Z} f(w)=f(g\cdot_{\mathcal W}w)$ meaning that for all component $k$ we have $g_k\cdot_k z_k=f(g\cdot_{\mathcal W}w)_k\in \mathcal V_k$. Finally, as previously done for $\cdot_{\mathcal Z}$, it can be shown that $\cdot_k:G_k\times \mathcal V_k\rightarrow \mathcal V_k$ are group actions.

\emph{(4) The representation $h$ is injective:}

As the reconstruction loss is equal to zero and Assumption~\ref{hyp:inj} holds, the distance between the encoding of two world states has a lower bound as shown in the proof of Proposition~\ref{prop:delta}.

\emph{(5) The group action $\cdot_{\mathcal Z}$ is linear:}

As highlighted by~\citet{keurti2023homomorphism}, the restriction of $\rho$ on $\mathcal V$ written $\rho_{\mathcal V}:G\rightarrow GL(\mathcal V)$ is a morphism and we have $g\cdot_{\mathcal Z} z=\rho_{\mathcal V}(g)z$. This can be proven similarly to the argument used to show that $\cdot_{\mathcal Z}$ is a group action. Hence proving the representation is linear on $\mathcal V$.
    
\section{Minimal linear representation dimension}\label{sec:dim}

Let $G$ a group, we seek the minimal dimension such that there exists an injective morphism into invertible matrices  $\delta(G)=\min\{d\mid\exists\rho:G\hookrightarrow GL(\mathbb R^d)\}$. We also seek $\delta_{SO}(G)$ the minimal dimension for special orthogonal matrices  $\delta_{SO}(G)=\min\{d\mid\exists\rho:G\hookrightarrow SO(d)\}$.

We get the following results:

\begin{center}
    \begin{tabular}{c|cc}
        $G$&$\delta$&$\delta_{SO}$\\
        \hline\hline
        $\mathbb Z$&1&2\\
        \hline
        $\mathbb Z/2\mathbb Z$&1&2\\
        $\mathbb Z/n\mathbb Z$, $n\ge 3$&2&2\\
        \hline
        $\mathfrak S_2$&1&2\\
        $\mathfrak S_3$&2&3\\
        $\mathfrak S_4$&3&3\\
        $\mathfrak S_n$, $n\ge 5$&$n-1$& $n-1$ or $n$\\
    \end{tabular}
\end{center}

\subsection{Finite cyclic group ie Rotation group }

\begin{proof}
    Let $G$ a finite cyclic group of cardinal $n\ge 2$ \ie $G\cong \mathbb Z/n\mathbb Z$. There are two cases to consider:
    
    \underline{$n=2$}: Then $G=\{e,g\}$ with $g^2=e$. Therefore ${\delta(G)=1}$ because the isomorphism $\rho=\{e\mapsto (1);\ g\mapsto (-1)\}$ is satisfying. Furthermore $\delta_{SO}(G)>1$ as $SO(1)=\{(1)\}$ can only express the trivial group.
    
    \underline{$n>2$}: Suppose $\delta(G)=1$, then for each element $g\in G$ there exists a scalar $\lambda_g\in\mathbb R$ such that $\rho(g)=(\lambda_g)$. Therefore for all $g$ and $k\ge0$ we have $g^k\in G$ and then $\lambda_g^k\in\{\lambda_{g'}\mid g'\in G\}$ which is finite set. Therefore $\lambda_g\in\{-1,1\}$, consequently for all $g\in G$, $\rho(g^2)=(\lambda_g^2)=(1)$ and then $g^2=e$. This is contradictory, therefore $\delta(G)\ge 2$.
    
    We show that $\delta(G)=2$: let $h$ be a generator of  $G$ \ie for all $g\in G$, there exists $k_g\in\llbracket 0,n\llbracket $ such that $h^{k_g}=g$. The following rotation matrix is a satisfying morphism
    $\rho(g)=
    \left(\begin{array}{cc}
         \cos(2\pi k_g/n)&-\sin(2\pi k_g/n)  \\
         \sin(2\pi k_g/n)&\cos(2\pi k_g/n)
    \end{array}\right)$. Finally $\delta(G)=2$ and therefore $\delta_{SO}(G)=2$
\end{proof}

\subsection{Infinite cyclic group}

\begin{proof}
    Let $G$ be an infinite cyclic group \ie $G\cong\mathbb Z$. Let $h$ be a generator $G$, therefore for all $g\in G$, there exists $k_g\in\mathbb Z$ such that $g=h^{k_g}$. For any $x>0,\ x\neq 1$, the representation $\rho :g\in G\mapsto (x^{k_g})$ is satisfying. Therefore $\delta(G)=1$.
    
    As previously, we have $\delta_{SO}(G)>1$ and for any $\theta\not\in 2\pi\mathbb Q$, the representation
    $\rho(g)=
    \left(\begin{array}{cc}
         \cos(k_g\theta)&-\sin(k_g\theta)  \\
         \sin(k_g\theta)&\cos(k_g\theta)
    \end{array}\right)\in SO(2)$ is satisfying. Finally $\delta_{SO}(G)=2$.
\end{proof}

\subsection{Permutation group}

\begin{proof}
        
    Let $G=\mathfrak S_n$ with $n\ge 2$ the permutation group.
    
    \underline{Generalities}: We know that there exists an injective morphism $\rho:\mathfrak S_n\hookrightarrow O(n-1)$ thanks to its standard representation, therefore $\delta(\mathfrak S_n)\le n-1$. Furthermore we can inject $O(n-1)$ into $SO(n)$ using
    $g\mapsto
    \left(\begin{array}{cc}
         \rho(g)&0  \\
         0&\det\rho(g) 
    \end{array}\right)$. And then $\delta_{SO}(\mathfrak S_n)\le n$.
    
    \underline{$n=2$}: We have $\mathfrak S_2\cong \mathbb Z/2\mathbb Z$, as previously $\delta(\mathfrak S_2)=1$ and $\delta_{SO}(\mathfrak S_2)=2$.
    
    \underline{$n=3$}: Similarly to previously $\delta(\mathfrak S_3)>1$ and $\delta_{SO}(\mathfrak S_3)>1$. Moreover there is no injection $\mathfrak S_3\hookrightarrow SO(2)$ as $\mathfrak S_3$ would be commutative. Therefore $\delta(\mathfrak S_3)=2$ and $\delta_{SO}(\mathfrak S_3)=3$
    
    \underline{$n=4$}: $\mathfrak S_4$ is isomorphic to a subgroup of  $SO(3)$ as it can be seen as the symmetry group of the tetrahedron. Moreover their is no injection $\mathfrak S_4\hookrightarrow GL(\mathbb R^2)$ as $\mathfrak S_4$ would be cyclic or dihedral. Then $\delta(\mathfrak S_4)=\delta_{SO}(\mathfrak S_4)=3$.
    
    \underline{$n\ge 5$}: The minimal dimension for a linear representation in $\mathbb C$ of $\mathfrak S_n$ is $n-1$~\citep{RASALA1977132}. Therefore the minimal dimension in $\mathbb R$ is at least $n-1$ \ie $\delta(G)\ge n-1$. Consequently $\delta(\mathfrak S_n)=n-1$ and $\delta_{SO}(\mathfrak S_n)\in \{n-1,n\}$
\end{proof}

\section{Block-diagonal parameterization is not enough for linear disentanglement}\label{sec:imp}
Here, we aim to show that using a structured action parameterization, as done in SOBDRL~\citep{quessard_learning_2020}, LSBD-VAE$^*$ and HAE~\citep{keurti2023homomorphism} is not sufficient to obtain a disentangled representation, even when applying the regularization used in SOBDRL. Assume that the underlying group action is composed of two cyclic direct factors, i.e., $G = G_1 \times G_2$, and that $G$ is isomorphic to a subgroup of $SO(2) \times SO(2)$. Our goal is to learn a disentangled representation of this subgroup. Moreover, we require the learned representation to be injective as any constant morphism would be disentangled. A block-diagonal-based method may use this prior knowledge by parameterizing the action matrices with $\rho:g\in\mathcal G\mapsto R(\theta_g,\theta'_g):=
\left(\begin{array}{cc}
    R_2(\theta_g) & 0  \\
    0 & R_2(\theta'_g)
\end{array}\right)$ with $R_2(\theta)$ being the rotation matrix of angle $\theta$.

Suppose that the unknown ground-truth group decomposition consists of two cyclic direct factors with $n$ and $m$ elements, i.e., $G = \mathbb{Z}/n\mathbb{Z} \times \mathbb{Z}/m\mathbb{Z}$. Let $g_1$ and $g_2$ denote generators of each respective factor. For the action representation to be disentangled, we would ideally want something like $\rho(g_1) = (2\pi/n, 0)$ and $\rho(g_2) = (0, 2\pi/m)$, so that each generator only affects a single latent subspace. However, when using this type of parameterization as a disentanglement criterion, even with the regularization term introduced in SOBDRL, it remains impossible to guarantee disentanglement of the action representation. Below, we present examples of entangled representations that satisfy the imposed parameterization but fail to be disentangled.

\underline{Case n°1}: The representation such that $\rho(g_1)=R (2\pi/n, 0)$ and $\rho(g_2)=R(2\pi/m,2\pi/m)$ is an injective homomorphism, it was encountered during our LSBD-VAE$^*$ experiments. This representation is entangled with respect to the LSBD framework.

\begin{proof}
Suppose there exists a decomposition $\mathbb R^4=\mathcal Z_1\oplus \mathcal Z_2$ satisfying the disentangled definition, then $\rho(g_2)$ would be the identity function on $Z_1$ \ie for all $z_1\in \mathcal Z_1$, $\rho(g_2)z_1=z_1$. If $\mathcal Z_1\ne \{0\}$ then $1$ is an eigenvalue of $\rho(g_2)$ which is impossible as its spectrum is $\{e^{2i\pi/m},e^{-2i\pi/m}\}$. Therefore $\mathcal Z_1=\{0\}$, the same reasoning can be applied on $\mathcal Z_2$, therefore $\mathcal Z_1\oplus\mathcal Z_2=\{0\}\ne \mathbb R^4$.
\end{proof}

\underline{Case n°2} If $n$ and $m$ are coprime numbers, then the representation such that $\rho(g_1)=R(2\pi /n, 0)$ and $\rho(g_2)=R(2\pi/m, 0)$ is injective as

\[
G=\dfrac{\mathbb Z}{n\mathbb Z}\times\dfrac{\mathbb Z}{m\mathbb Z}
\cong \dfrac{\mathbb Z}{nm\mathbb Z}
\]

This type of representation was encountered with SOBDRL as it minimizes its disentanglement criterion. However this representation is entangled with respect to the LSBD framework

\begin{proof}
Suppose there exists a decomposition $\mathbb R^4=\mathcal Z_1\oplus\mathcal  Z_2$ respecting the definition, then $\rho(g_1)$ would be the identity on $\mathcal Z_2$ \ie for all $z_2\in\mathcal  Z_2$, $\rho(g_1)z_2=z_2$ and then $\mathcal Z_2$ is a subspace of the eigenspace of $\rho(g_1)$ associated with the eigenvalue $1$. This eigenspace corresponds to the two last dimension: $\mathbb R^4$ \ie $\mathcal Z_2\subset \{0\}\times\{0\}\times \mathbb R^2$. Similarly for $\rho(g_2)$, we have $\mathcal Z_1\subset \{0\}\times\{0\}\times \mathbb R^2$ and therefore $\mathbb R^4 \neq \mathcal Z_1\oplus \mathcal Z_2$.
\end{proof}

\underline{Case n°3}: If $n=m=pq$ a composite number with $p$ and $q$ coprimes, the representation $\rho(g_1)=R(2\pi/p,2\pi/q)$ and $\rho(g_2)=R(2\pi/q,2\pi/p)$ is injective. It is like switching the two $\mathbb Z/q\mathbb Z$ components of each direct factor.

\[
G \cong
\overbrace{
\left( \dfrac{\mathbb{Z}}{p\mathbb{Z}} \times 
\tikz[remember picture, baseline=(A.base)] 
\node (A) {$\dfrac{\mathbb{Z}}{q\mathbb{Z}}$}; 
\right)
}^{G_1}
\times
\overbrace{
\left( \dfrac{\mathbb{Z}}{p\mathbb{Z}} \times 
\tikz[remember picture, baseline=(B.base)] 
\node (B) {$\dfrac{\mathbb{Z}}{q\mathbb{Z}}$}; 
\right)
}^{G_2}
\]
\begin{tikzpicture}[remember picture, overlay]
  \draw[<->, thick, bend right=40]
    (A.south) to[bend right=40] node[below=4pt]{} (B.south);
\end{tikzpicture}

\vspace{.5cm}

As for case n°1, this representation is entangled with respect to the LSBD framework

\section{Proof of Theorem~\ref{th:inj}} \label{sec:inj}
\textbf{Reminders}:
\begin{itemize}[leftmargin=2em]
    \item $\mathcal W$ the world state set
    \item $\mathcal X$ the observation set
    \item $\mathcal Z=\mathbb R^d$ latent space
    \item $b:\mathcal W\rightarrow \mathcal X$ the observation function
    \item $h: \mathcal X\rightarrow \mathcal Z$ the encoding function
    \item $f=h\circ b: \mathcal W\rightarrow \mathcal Z$
    \item $h$ is disentangled with respect to $\langle \mathcal W,b,\prod_kG_k\rangle$ if: 
    \begin{enumerate}
        \item There exists $\cdot _\mathcal Z: G\times \mathcal Z\rightarrow \mathcal Z$
        \item Equivariance holds:  $\forall g \in G, w \in \mathcal W$, $g \cdot_\mathcal Z f(w)=f(g \cdot_\mathcal W w)$
        \item There exists $\mathcal Z=\mathcal Z_1\oplus\cdots\oplus \mathcal Z_K$ and $\cdot_k:G_k\times \mathcal Z_k\rightarrow \mathcal Z_k$ such that $$(g_1,\dots,g_K)(z_1,\dots,z_K)=(g_1\cdot_1 z_1,\dots ,g_K\cdot_K z_K)$$
        \item $h$ is injective
    \end{enumerate}
\end{itemize}

Suppose there exists a SBD representation with respect to $\langle \mathcal W,b,\prod_kG_k\rangle$, we aim to prove that we can build $\langle \widetilde W,\widetilde b,\prod_k G_k\rangle$ from $\langle W,b,\prod_kG_k\rangle$ such that (1) the SBD representations with respect to $\langle \widetilde W,\widetilde b,\prod_k G_k\rangle$ are exactly the SBD representations with respect to $\langle  W, b,\prod_k G_k\rangle$, (2) the observation function $\widetilde b$ is injective and (3) $\langle W, b, \prod_kG_k \rangle$ and $\langle \widetilde W, \widetilde b,\prod_k G_k \rangle$ yield the same sensorimotor interaction with the agent, and are thus indistinguishable.

\subsection{Interaction equivalence}

We would like to reduce $\mathcal W$ to indistinguishable cases, we therefore define the following equivalence relation:
\begin{definition}\label{def:eq}
Two world states $w_1$ and $w_2$ are called interaction equivalent if and only if:
\begin{equation*}
    \forall g\in G, b(g\cdot_{\mathcal W} w_1)=b(g\cdot_{\mathcal W} w_2)
\end{equation*}
We denote this relation between the two world states as $w_1 \sim w_2$.
\end{definition}

This definition implies that two states $w_1$ and $w_2$ that yield the same observation but can be distinguished by interacting with the environment are not equivalent. Therefore, it does not cover situations such as object occlusions. Based on this interaction equivalence relation, we define a new set composed of the equivalence classes:

\begin{definition}
We call interaction equivalent world state set and denote by $\widetilde{\mathcal W}$ the set of interaction equivalence classes.
\begin{equation*}
    \widetilde{\mathcal{W}}=\{ [w] \mid w\in \mathcal W\}
\end{equation*}
where $[w]$ denotes the equivalence class of $w$ according to Definition~\ref{def:eq}.
\end{definition}
\color{purple}

\color{black}

We now turn to defining appropriate functions using the interaction equivalent world state set.

\begin{proposition}
    If $w_1 \sim w_2$, then $b(w_1) = b(w_2)$.
\end{proposition}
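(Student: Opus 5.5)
The statement follows directly from Definition~\ref{def:eq} by specializing the universally quantified group element to the identity. By definition, $w_1 \sim w_2$ means that $b(g\cdot_{\mathcal W} w_1) = b(g\cdot_{\mathcal W} w_2)$ for every $g \in G$. Since $G$ is a group, it contains an identity element $e$. I will instantiate the defining condition at $g = e$.

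The only remaining ingredient is the identity axiom of a group action, namely $e\cdot_{\mathcal W} w = w$ for all $w \in \mathcal W$. This is part of the standing assumption in Section~\ref{sec:preliminaries} that $\cdot_{\mathcal W}$ is a group action; see also the representation axioms recalled in Appendix~\ref{sec:maths}. Substituting it gives
\[
b(w_1) = b(e\cdot_{\mathcal W} w_1) = b(e\cdot_{\mathcal W} w_2) = b(w_2),
\]
which is the claim.

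There is essentially no obstacle. The one point to check is that the quantifier in Definition~\ref{def:eq} ranges over the whole group $G$, not only over the agent's action set $\mathcal G$. The action set may not contain $e$, in which case the argument would fail. Since the definition is stated with $g \in G$, the identity is available and the proof goes through. This proposition is what makes the induced map $\widetilde b([w]) := b(w)$ well defined on $\widetilde{\mathcal W}$, and that map is the next step in the construction.
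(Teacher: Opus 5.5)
Your proof is correct and is the same as the paper's, which also instantiates Definition~\ref{def:eq} at $g = e$. You spell out the identity axiom $e\cdot_{\mathcal W} w = w$ and check that the quantifier ranges over all of $G$ rather than only $\mathcal G$, which makes that one-line argument fully explicit.
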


\begin{proof}
    This can be easily derived from Definition \ref{def:eq} and taking $g = e$ the identity element.
\end{proof} 

This proposition allows us to define a new observation function on the interaction equivalent world state set $\widetilde{\mathcal{W}}$:

\begin{definition}\label{def:eq_obs}
We call interaction equivalent observation function and denote by $\widetilde b$ the function $\widetilde{b}: \widetilde{\mathcal{W}} \to \mathcal X$ such that:
\begin{equation*}
    \forall [w] \in \widetilde{\mathcal W},\ \widetilde b([w]) = b(w)
\end{equation*}
\end{definition}

\begin{proposition}
    If $w_1 \sim w_2$, then $\forall g\in G, g\cdot_{\mathcal W}w_1 \sim g\cdot_{\mathcal W}w_2$.
\end{proposition}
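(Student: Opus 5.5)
The plan is to unfold Definition~\ref{def:eq} and use the group action axioms for $\cdot_{\mathcal W}$, namely compatibility $h\cdot_{\mathcal W}(g\cdot_{\mathcal W} w) = (hg)\cdot_{\mathcal W} w$ and closure of $G$ under multiplication. Fix $w_1 \sim w_2$ and $g \in G$. To show $g\cdot_{\mathcal W} w_1 \sim g\cdot_{\mathcal W} w_2$, we must verify that for every $g' \in G$,
\[
b\bigl(g'\cdot_{\mathcal W}(g\cdot_{\mathcal W} w_1)\bigr) = b\bigl(g'\cdot_{\mathcal W}(g\cdot_{\mathcal W} w_2)\bigr).
\]

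The argument has three steps. First, take an arbitrary $g' \in G$ and rewrite each side using compatibility, giving $g'\cdot_{\mathcal W}(g\cdot_{\mathcal W} w_i) = (g'g)\cdot_{\mathcal W} w_i$ for $i=1,2$. Second, observe that $g'g \in G$ by closure. Applying the hypothesis $w_1\sim w_2$ with the element $g'g$ then yields $b((g'g)\cdot_{\mathcal W} w_1) = b((g'g)\cdot_{\mathcal W} w_2)$. Third, since $g'$ was arbitrary, the two sides agree for all $g' \in G$, which is precisely the relation $g\cdot_{\mathcal W} w_1 \sim g\cdot_{\mathcal W} w_2$.

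There is no real obstacle. The only point requiring care is that the equivalence relation quantifies over the whole group $G$, and not merely over the agent's action set $\mathcal G$. This matters because $\mathcal G$ need not be closed under composition, and the argument relies on $g'g$ being an admissible test element.

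This proposition is what makes the induced action $g\cdot_{\widetilde{\mathcal W}}[w] := [g\cdot_{\mathcal W} w]$ well defined on $\widetilde{\mathcal W}$, which the construction that follows will use.
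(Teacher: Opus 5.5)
Your proof is correct and follows the paper's argument exactly: fix an arbitrary $g'\in G$, use $g'g\in G$ together with the hypothesis $w_1\sim w_2$, and rewrite $(g'g)\cdot_{\mathcal W} w_i = g'\cdot_{\mathcal W}(g\cdot_{\mathcal W} w_i)$ via the compatibility axiom. Your remark that the relation quantifies over the whole group $G$ rather than $\mathcal G$ is accurate and matches Definition~\ref{def:eq}.
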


\begin{proof}
    Let $w_1$ and $w_2$ be two world states from $\mathcal W$ such that $w_1 \sim w_2$. Let $g \in G$ be a symmetry. We aim to prove that $g\cdot_{\mathcal W}w_1 \sim g\cdot_{\mathcal W}w_2$.
    
    Let $g' \in G$ be a symmetry. Since $G$ is a group, it follows that the composition $g' g \in G$. According to Definition~\ref{def:eq}: 
    \begin{equation*}
        b(g'g \cdot_{\mathcal W} w_1) = b(g'g \cdot_{\mathcal W} w_2)
    \end{equation*}
    And, by definition of a group action:
    \begin{equation*}
        b(g' \cdot_{\mathcal W} (g \cdot_{\mathcal W} w_1)) = b(g' \cdot_{\mathcal W} (g \cdot_{\mathcal W} w_2))
    \end{equation*}
    According to Definition~\ref{def:eq}, we thus have $g\cdot_{\mathcal W}w_1 \sim g\cdot_{\mathcal W}w_2$.
\end{proof}

This proposition allows us to define a new group action on the interaction equivalent world state set $\widetilde{\mathcal{W}}$:

\begin{definition}\label{def:eq_group_action}
 We call interaction equivalent group action and denote by $\cdot_{\widetilde{\mathcal W}}$ the function $\cdot_{\widetilde{\mathcal W}}: G \times \widetilde{\mathcal W} \to \widetilde{\mathcal W}$ such that:
 \begin{equation*}
     \forall g \in G, \forall [w] \in \widetilde{\mathcal W}, g\cdot_{\widetilde{\mathcal W}}[w] = [g\cdot_{\mathcal W} w]
 \end{equation*}
\end{definition}

\begin{proposition}
    $\cdot_{\widetilde{\mathcal W}}$ is a group action.
\end{proposition}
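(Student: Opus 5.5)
We have to check three things: that $\cdot_{\widetilde{\mathcal W}}$ is well defined, that the identity acts trivially, and that the action is compatible with the group law. Well-definedness is already provided by the previous proposition. If $[w_1]=[w_2]$, i.e.\ $w_1\sim w_2$, then $g\cdot_{\mathcal W}w_1\sim g\cdot_{\mathcal W}w_2$ for every $g\in G$. Hence $[g\cdot_{\mathcal W}w]$ does not depend on the chosen representative $w$ of the class, and Definition~\ref{def:eq_group_action} indeed gives a map $G\times\widetilde{\mathcal W}\to\widetilde{\mathcal W}$. This is the only point that needs care, since the axioms must be stated at the level of classes, and it has just been settled.

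For the identity axiom, take any class $[w]\in\widetilde{\mathcal W}$ with representative $w$. Then $e\cdot_{\widetilde{\mathcal W}}[w]=[e\cdot_{\mathcal W}w]=[w]$, because $\cdot_{\mathcal W}$ is a group action, so $e\cdot_{\mathcal W}w=w$.

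For compatibility, let $g,g'\in G$. Applying the definition twice, using the fact that $g\cdot_{\mathcal W}w$ is a valid representative of $g\cdot_{\widetilde{\mathcal W}}[w]$ (this is where well-definedness is used), and then the compatibility of $\cdot_{\mathcal W}$, we get
\begin{equation*}
g'\cdot_{\widetilde{\mathcal W}}\big(g\cdot_{\widetilde{\mathcal W}}[w]\big)=g'\cdot_{\widetilde{\mathcal W}}[g\cdot_{\mathcal W}w]=[g'\cdot_{\mathcal W}(g\cdot_{\mathcal W}w)]=[(g'g)\cdot_{\mathcal W}w]=(g'g)\cdot_{\widetilde{\mathcal W}}[w].
\end{equation*}
This completes the proof. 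There is no real obstacle here: every step transports an axiom of $\cdot_{\mathcal W}$ through the quotient map $w\mapsto[w]$.
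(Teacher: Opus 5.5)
Your proof is correct and follows the same route as the paper, which verifies the identity axiom and the compatibility axiom by transporting the corresponding properties of $\cdot_{\mathcal W}$ through $w\mapsto[w]$. The only difference is that you explicitly check well-definedness via the preceding proposition, whereas the paper uses that proposition implicitly just before Definition~\ref{def:eq_group_action}.
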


\begin{proof}
    We need to prove two properties:
    \begin{enumerate}
        \item $\forall [w] \in \widetilde{\mathcal W}, e \cdot_{\widetilde{\mathcal W}} [w] = [w]$
        \item $\forall (g, g') \in G, \forall [w] \in \widetilde{\mathcal W}, g' \cdot_{\widetilde{\mathcal W}} (g \cdot_{\widetilde{\mathcal W}} [w]) = (g'g) \cdot_{\widetilde{\mathcal W}} [w]$
    \end{enumerate}
    We start with the first property. Let $[w] \in \widetilde{\mathcal W}$ be an interaction equivalence class. We have:
    \begin{align*}
        e \cdot_{\widetilde{\mathcal W}} [w] &= [e\cdot_{\mathcal W} w] &&\text{according to Definition~\ref{def:eq_group_action}}\\
        &= [w] && \text{since $\cdot_{\mathcal W}$ is a group action itself}
    \end{align*}
    For the second property, suppose $(g, g') \in G$ and $[w] \in \widetilde{\mathcal W}$. We have:
    \begin{align*}
        g' \cdot_{\widetilde{\mathcal W}} (g \cdot_{\widetilde{\mathcal W}} [w]) &= g' \cdot_{\widetilde{\mathcal W}} [g \cdot_{\mathcal W} w] && \text{according to Definition~\ref{def:eq_group_action}} \\
        &= [g' \cdot_{\mathcal W} (g \cdot_{\mathcal W} w)] && \text{according to Definition~\ref{def:eq_group_action}} \\
        &= [(g'g) \cdot_{\mathcal W}] && \text{since $\cdot_{\mathcal W}$ is a group action} \\
        &= (g'g) \cdot_{\widetilde{\mathcal W}} [w]  && \text{according to Definition~\ref{def:eq_group_action}}
    \end{align*}
\end{proof}

In summary, we have proposed an equivalence relation for world states that allows us to define a new world state set regrouping equivalent states, as well as an observation function and a group action for this new world state set.

\subsection{Necessity of injectivity of the observation function}

We now address the main point of this appendix, which is to show that if $h$ is an SBD representation with respect to $\langle \mathcal W, b, \prod_i G_i\rangle$, then $\widetilde b$ has to be injective. We start by first showing that SBD-ness is implied in the interaction equivalent world. 

\begin{proposition}\label{prop:equivariance}
    If $h$ is SBD with respect to $\langle \mathcal W, b, \prod_i G_i\rangle$, then $h$ is SBD with respect to $\langle \widetilde{\mathcal W}, \widetilde{b}, \prod_i G_i\rangle$.
\end{proposition}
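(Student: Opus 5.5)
We verify the four conditions of Definition~\ref{def:sbda} for $\langle \widetilde{\mathcal W}, \widetilde b, \prod_i G_i\rangle$, keeping the same encoder $h$, latent space $\mathcal Z$, group action $\cdot_{\mathcal Z}$, and decomposition $\mathcal Z=\mathcal Z_1\oplus\cdots\oplus\mathcal Z_K$ with actions $\cdot_k$ that witness SBD-ness with respect to $\langle \mathcal W, b, \prod_i G_i\rangle$. Conditions 1, 3 and 4 involve only $h$, $\mathcal Z$, $\cdot_{\mathcal Z}$ and the $\cdot_k$, not the world. They therefore carry over verbatim. One caveat is that injectivity of $h$ is a property of $h$ on $\mathcal X$, and $\widetilde b(\widetilde{\mathcal W})=b(\mathcal W)$, so nothing changes there.

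The only real content is equivariance. Set $\widetilde f = h\circ\widetilde b$. By Definition~\ref{def:eq_obs}, $\widetilde f([w]) = h(b(w)) = f(w)$, and this is well defined by the proposition stating that $w_1\sim w_2$ implies $b(w_1)=b(w_2)$. For $g\in G$ and $[w]\in\widetilde{\mathcal W}$ we then compute
\begin{align*}
g\cdot_{\mathcal Z}\widetilde f([w]) &= g\cdot_{\mathcal Z} f(w) = f(g\cdot_{\mathcal W} w)\\
&= \widetilde f([g\cdot_{\mathcal W} w]) = \widetilde f(g\cdot_{\widetilde{\mathcal W}}[w]).
\end{align*}
The second equality is equivariance in the original world. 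The last uses Definition~\ref{def:eq_group_action}, which is legitimate because of the proposition that $\sim$ is preserved by $G$ and the proposition that $\cdot_{\widetilde{\mathcal W}}$ is a group action.

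The main obstacle is bookkeeping rather than mathematics: we must make sure every object is evaluated on representatives independently of the choice of $w\in[w]$. The computation above only ever passes through $f(w)=h(b(w))$, which depends on $[w]$ alone, so the independence is automatic. I would state this explicitly once and then conclude that $h$ is SBD with respect to the reduced environment. The converse direction, needed later for the claim that the two sets of SBD representations coincide, follows by reading the same chain of equalities backwards, because $f(w)=\widetilde f([w])$ for every $w$.
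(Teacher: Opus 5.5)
Your proof is correct and matches the paper's argument essentially verbatim: conditions 1, 3 and 4 carry over unchanged. Equivariance follows from the same chain of equalities through $h\circ b(w)$, $h\circ b(g\cdot_{\mathcal W} w)$, Definition~\ref{def:eq_obs} and Definition~\ref{def:eq_group_action}, with the minor nitpick that the last step only needs $\cdot_{\widetilde{\mathcal W}}$ to be well defined (i.e.\ that $\sim$ is stable under $G$), not that it is a group action.
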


\begin{proof}
    According to Definition~\ref{def:sbda}, SBD-ness with respect to $\langle \widetilde{\mathcal W}, \widetilde{b}, \prod_k G_k\rangle$ requires four properties:
    \begin{enumerate}
        \item There exists a group action $\cdot_{\mathcal Z}: G \times \mathcal Z \rightarrow \mathcal Z$,
        \item Equivariance holds: $\forall g \in G, [w] \in \widetilde{\mathcal W}$, we have $g \cdot_{\mathcal Z} h \circ \widetilde{b}([w]) = h \circ \widetilde{b}(g \cdot_{\widetilde{\mathcal W}} [w])$,
        \item There exists a decomposition $\mathcal Z = \mathcal Z_1 \times \cdots \times \mathcal Z_K$ and group actions $\cdot_k : G_k \times Z_k \rightarrow Z_k$ such that \[
        (g_1, \dots, g_K) \cdot_{\mathcal Z} (z_1, \dots, z_K) = (g_1 \cdot_1 z_1, \dots, g_K \cdot_K z_K),
        \]
        \item $h$ is injective.
    \end{enumerate}
    
    Properties 1, 3 and 4 are ensured by the fact that $h$ is SBD with respect to $\langle \mathcal W, b, \prod_k G_k\rangle$. We thus just need to prove the second property. Suppose $g \in G$ and $[w] \in \widetilde{\mathcal W}$, we have:
    
    \begin{align*}
        g \cdot_{\mathcal Z} h \circ \widetilde{b}([w]) &= g \cdot_{\mathcal Z} h \circ b(w) &&\text{according to Definition~\ref{def:eq_obs}} \\
        &= h \circ b(g \cdot_{\mathcal W} w) &&\text{since $h$ is SBD (equivariance property)} \\
        &= h \circ \widetilde{b}([g \cdot_{\mathcal W} w]) &&\text{according to Definition~\ref{def:eq_obs}} \\
        &= h \circ \widetilde{b}(g \cdot_{\widetilde{\mathcal W}} [w]) &&\text{according to Definition~\ref{def:eq_group_action}}
    \end{align*}
\end{proof}

\begin{proposition}
    Reciprocally, if $h$ is SBD with respect to $\langle \widetilde{\mathcal W}, \widetilde{b}, \prod_i G_i\rangle$, then $h$ is SBD with respect to $\langle \mathcal W, b, \prod_i G_i\rangle$.
\end{proposition}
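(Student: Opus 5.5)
The plan is to check the four conditions of Definition~\ref{def:sbda} for $\langle \mathcal W, b, \prod_k G_k\rangle$. I mirror the proof of Proposition~\ref{prop:equivariance} but run the equivariance computation in the opposite direction. Conditions 1, 3 and 4 concern only the latent group action $\cdot_{\mathcal Z}$, its factorisation $\mathcal Z=\mathcal Z_1\oplus\cdots\oplus\mathcal Z_K$ with actions $\cdot_k$, and the injectivity of $h$. None of these refers to the world state set or the observation function. So they transfer verbatim from SBD-ness with respect to $\langle \widetilde{\mathcal W}, \widetilde b, \prod_k G_k\rangle$: I keep the same $\cdot_{\mathcal Z}$, the same decomposition and the same $\cdot_k$.

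The only substantive step is equivariance on $\mathcal W$. Fix $g\in G$ and $w\in\mathcal W$. I write $b(w)=\widetilde b([w])$ by Definition~\ref{def:eq_obs}, which is well defined since $w_1\sim w_2$ implies $b(w_1)=b(w_2)$. This gives the chain
\begin{align*}
g\cdot_{\mathcal Z} h\circ b(w) &= g\cdot_{\mathcal Z} h\circ\widetilde b([w]) \\
&= h\circ\widetilde b(g\cdot_{\widetilde{\mathcal W}}[w]) \\
&= h\circ\widetilde b([g\cdot_{\mathcal W} w]) \\
&= h\circ b(g\cdot_{\mathcal W} w).
\end{align*}
The justifications are, in order: Definition~\ref{def:eq_obs}; equivariance in the reduced world; Definition~\ref{def:eq_group_action}; and Definition~\ref{def:eq_obs} again. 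This is exactly equivariance for $f=h\circ b$ on $\mathcal W$. The proposition follows, and together with Proposition~\ref{prop:equivariance} it establishes point (1) of the programme: the two environments have the same SBD representations.

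I expect no real obstacle. The one point needing care is that every object used is well defined on classes. $\widetilde b$ is well defined by the proposition that interaction-equivalent states share observations. $\cdot_{\widetilde{\mathcal W}}$ is a genuine group action by the propositions showing that $\sim$ is preserved by $g\cdot_{\mathcal W}$ and satisfies the action axioms. With these cited, the computation above is purely formal.
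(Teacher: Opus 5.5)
Your proposal is correct and matches the paper's proof essentially verbatim: the paper likewise observes that only equivariance needs checking, and it establishes it with the same four-step chain (Definition~\ref{def:eq_obs}, equivariance in the reduced world, Definition~\ref{def:eq_group_action}, Definition~\ref{def:eq_obs}). Your explicit remark on the well-definedness of $\widetilde b$ and $\cdot_{\widetilde{\mathcal W}}$ states a point that the paper only handles through its earlier propositions.
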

\begin{proof}
    As in the previous proof, it suffices to establish equivariance in order to prove disentanglement with respect to $\langle \mathcal W, b, \prod_i G_i\rangle$. Let $g\in G$ and $w\in\mathcal W$ :
    
    \begin{align*}
        g \cdot_{\mathcal Z} h \circ b(w) &= g \cdot_{\mathcal Z} h \circ \widetilde{b}([w])  &&\text{according to Definition~\ref{def:eq_obs}} \\
        &= h \circ \widetilde{b}(g \cdot_{\widetilde{\mathcal W}} [w]) &&\text{since $h$ is SBD (equivariance property)}\\
        &= h \circ \widetilde{b}([g \cdot_{\mathcal W} w]) &&\text{according to Definition~\ref{def:eq_group_action}}\\
        &= h \circ b(g \cdot_{\mathcal W} w) &&\text{according to Definition~\ref{def:eq_obs}} \\
    \end{align*}

\end{proof}

We can now introduce the main result of this appendix:

\begin{reptheorem}{th:inj}
If $h$ is SBD with respect to $\langle\mathcal W, b,\prod_k G_k\rangle$  then $\widetilde b$ is injective.
\end{reptheorem}

\begin{proof}

Suppose $h$ is SBD with respect to $\langle\mathcal W, b,\prod_k G_k\rangle$. According to Proposition~\ref{prop:equivariance}, $h$ is also SBD with respect to $\langle \widetilde{\mathcal W}, \widetilde b,\prod_k G_k\rangle$.

Let us assume that $\widetilde b$ is not injective and show that it leads to a contradiction.

Since $\widetilde b$ is not injective, there exist $[w_1] \neq [w_2]$ such that $\widetilde b([w_1]) = \widetilde b([w_2])$. Since $[w_1] \neq [w_2]$ then let $g \in G$ be the symmetry such that $b(g\cdot_{\mathcal W} w_1) \neq b(g \cdot_{\mathcal W} w_2)$. On the one hand:

\begin{align*}
    b(g\cdot_{\mathcal W} w_1) \neq b(g \cdot_{\mathcal W} w_2) &\implies \widetilde b([g\cdot_{\mathcal W} w_1]) \neq \widetilde b([g\cdot_{\mathcal W} w_2]) &&\text{according to Definition~\ref{def:eq}} \\
    &\implies h \circ \widetilde b([g\cdot_{\mathcal W} w_1]) \neq h \circ \widetilde b([g\cdot_{\mathcal W} w_2]) &&\text{since $h$ is injective} \\
    &\implies g \cdot_Z h \circ \widetilde b([w_1]) \neq g \cdot_Z h \circ \widetilde b ([w_2]) &&\text{equivariance property}
\end{align*}

On the other hand, since $\widetilde b([w_1]) = \widetilde b([w_2])$, then $g \cdot_Z h \circ \widetilde b([w_1]) = g \cdot_Z h \circ \widetilde b ([w_2])$. We have a contradiction and thus $\widetilde b$ is injective.

\end{proof}

From a prediction perspective this result is quite intuitive: for an injective encoder to allow accurate prediction of future observations $x'$ from $x$ and $g$, it is necessary for the observation to be unambiguous. From a disentanglement perspective this is a strong limitation, it means that all the features to disentangle have to be constantly observed by the agent. 

We use this result to justify the introduction of Assumption~\ref{hyp:inj} to derive our algorithm for the group decomposition. Since we have shown that LSB disentanglement is impossible when $\widetilde{b}$ is not injective, it is a necessary assumption to assume.

\section{Additional results}
\subsection{Prediction based Model Selection}\label{sec:res_appendix:selection}
For unsupervised disentanglement learning, choosing the model with the best disentanglement without knowing the ground-truth features is a crucial and often difficult task. Several solutions have been discussed in the state of the art to adress this issue for purely unsupervised methods~\citep{duan_unsupervised_2020, zhou_evaluating_2020, holtz_evaluating_2022}, they mostly rely on learning a batch of models with different hyperparameters or initialisation and selecting the ones sharing some defined properties. This model selection issue arises in our method at two different stages: the action clustering resulting of Step 2 and the representation learned by GMA-VAE.

As we are in a self-supervised framework, we investigate whether it is possible to select the model achieving the best disentanglement (measured here with the independence metric~\citep{painter_linear_2020}) solely based on the prediction error. Similarly, we aim to determine whether it is possible to identify which A-VAE model results in the correct action clustering in Step 2. To evaluate clustering quality, we use the Adjusted Rand Index (ARI)~\citep{Hubert1985ComparingP}, a metric to be maximized that equals 1 if and only if the predicted clustering exactly matches the ground-truth partition (up to a permutation).

For multiple hyperparameter configurations and random seeds, we plot for the COIL2 experiment the metric of interest as a function of the prediction error in Figure~\ref{fig:pointcloud}.

\begin{figure}[H]
     \centering
     \begin{subfigure}[b]{0.49\textwidth}
         \centering
         \caption{ARI of action clustering according to A-VAE prediction error}
         \includegraphics[width=\textwidth]{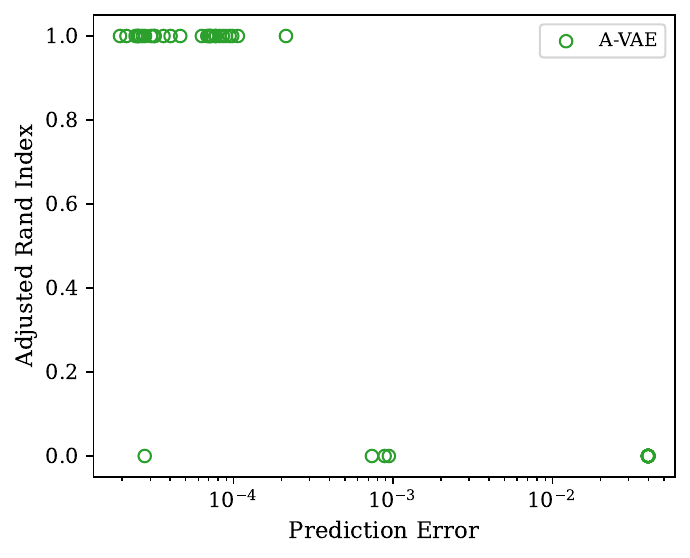}
     \end{subfigure}
     \begin{subfigure}[b]{0.49\textwidth}
         \centering
         \caption{Independence according to prediction error\\\vspace{1em}}
         \includegraphics[width=\textwidth]{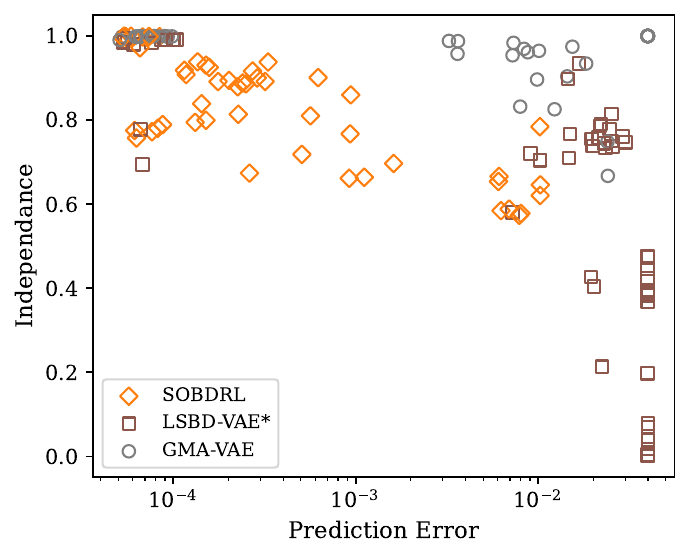}
     \end{subfigure}
     \caption{Metric of interest as a function of the prediction error}
     \label{fig:pointcloud}
\end{figure}

For A-VAE (left), we see that for almost all of the models, either the prediction performs well and the action clustering retrieves the correct partition, either the model cannot predict correctly and the action clustering retrieves a clustering only composed of singletons meaning that the ARI is equal to zero. However, few models out still has a good prediction error with a low ARI. It is therefore possible to train a batch of A-VAE models with different hyperparameters and initialisation seeds and take the action clustering that is retrieved by the majority of models achieving the best predictions.

For the disentangled representation learning algorithms (right), we can then notice that for the LSBD-VAE$^*$ and SOBDRL, they are few models that achieve a good prediction without being disentangled. However the model with the best prediction is always disentangled with a slight margin. For GMA-VAE, a good prediction always implies a disentanglement almost equal to 1.

\subsection{Assumption~\ref{hyp:dist} for other algorithms}\label{sec:hyp}

This section aims to demonstrate that, although not explicitly stated, related LSBD algorithms implicitly rely on Assumption~\ref{hyp:dist} (action disentanglement) to some extent.\\

First, Forward-VAE directly requires disentangled actions, as its action matrix parameterization is identical to GMA-VAE, with the vectors $\pi_k \in \{0,1\}^d$ provided as prior knowledge.\\

To evaluate the extent to which SOBDRL and LSBD-VAE$^*$ depend on this assumption, we modify the COIL2 experimental setup by progressively relaxing Assumption~\ref{hyp:dist}. In this modified setting, each action corresponds to an element of the product of $k \geq 1$ distinct direct factors. We then measure the degree of disentanglement in the learned representation using the Independence score, as a function of the entanglement level $k$. The results, shown in Figure~\ref{fig:hyp1}, indicate that the Independence score decreases as $k$ increases, and rapidly approaches the score obtained by A-VAE with fully entangled actions (represented by the green dotted line). These findings support the claim that Assumption~\ref{hyp:dist}, i.e., $k = 1$, is in fact necessary for these algorithms to consistently learn a disentangled representation.

\begin{figure}[H]
     \centering
      \begin{center}
        \includegraphics[width=0.75\textwidth]{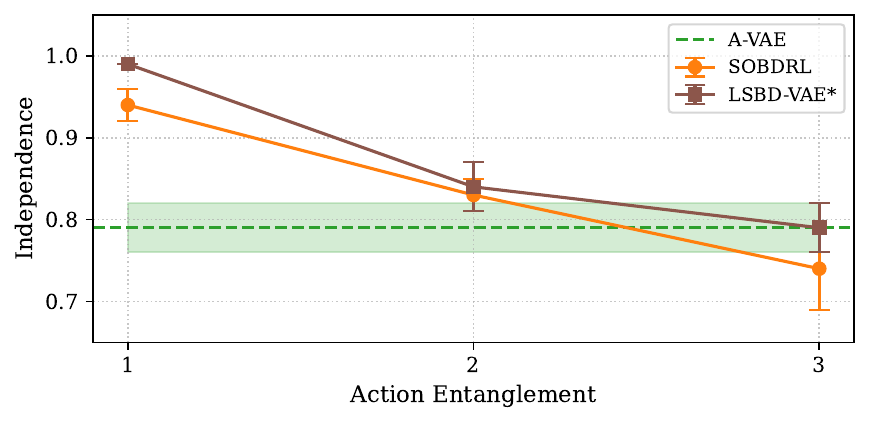}
      \end{center}
    \caption{Disentanglement according to action entanglement}
    \label{fig:hyp1}
\end{figure}

We also evaluated the disentanglement of those methods on the other metrics in Table~\ref{table:hyp1}. As we can see the disentanglement strictly decreases as actions are more entangled except for the MIG metric.

\begin{center}
\begin{table}[H]
\centering
\caption{Disentanglement with respect to action entanglement}
\makebox[0pt][c]{
\small
\input{tables/hyp1}}
\label{table:hyp1}
\end{table}
\end{center}

\subsection{Latent Manipulation}
Here we aim to show that the learned disentangled representation allow for latent manipulation. In the discrete case, we select two images of the 3DShapes dataset that we encode with GMA-VAE, then we construct several latent representations by mixing the different latent factors of the two images. Finally, we decode those new latent representations to obtain new images. The results are shown in Figure~\ref{fig:manip:3dshapes}. We can see that by mixing the different latent factors, we can control the different features of the decoded images.

For the continuous case, we select one image of MPI3D that we encode with GMA-VAE. For each feature (vertical angle and horizontal angle), we successively apply a rotation of a fixed angle to the corresponding latent factor while keeping the other latent factor unchanged. The decoded images are shown in Figure~\ref{fig:manip:mpi3d}. We can see that by rotating the latent factors, we can control the rotation of the object in the decoded images. 
\begin{figure}[htbp]
     \centering
     \begin{subfigure}[b]{0.25\textwidth}
         \centering
         \caption{Original images}
         \includegraphics[width=\textwidth]{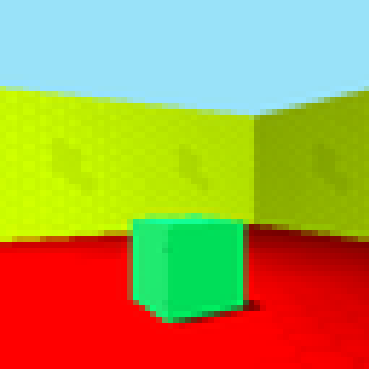}\\
         \raisebox{9.5ex}{\includegraphics[width=\textwidth]{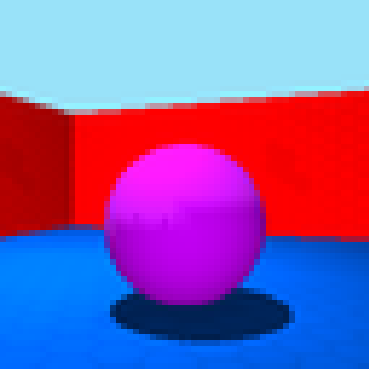}}
     \end{subfigure}
     \begin{subfigure}[b]{0.74\textwidth}
         \centering
         \caption{Decoded images from latent mix}
         \includegraphics[width=0.24\textwidth]{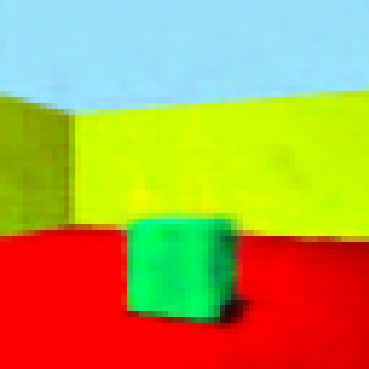}
         \includegraphics[width=0.24\textwidth]{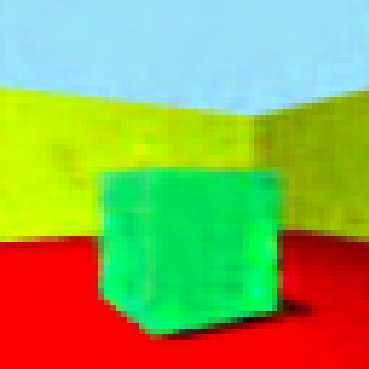}
         \includegraphics[width=0.24\textwidth]{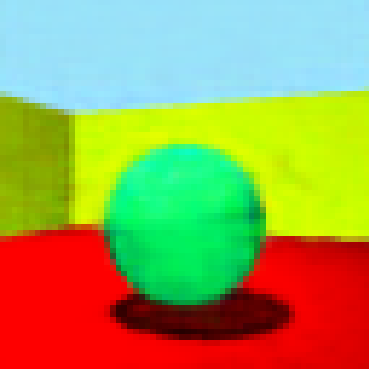}
         \includegraphics[width=0.24\textwidth]{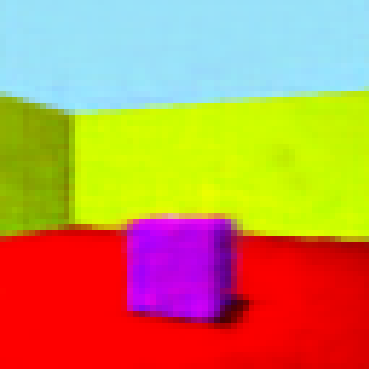}\\
         \includegraphics[width=0.24\textwidth]{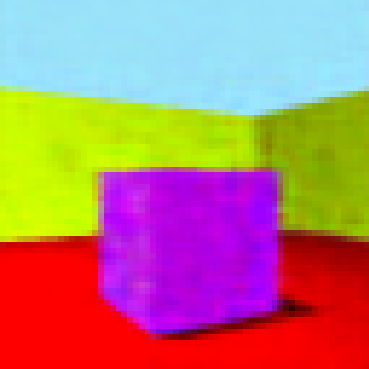}
         \includegraphics[width=0.24\textwidth]{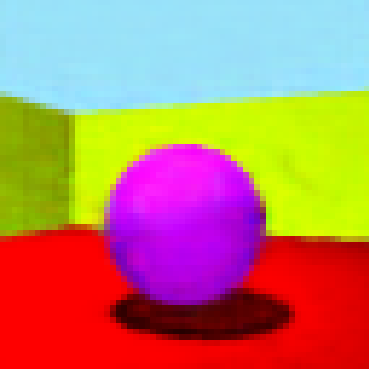}
         \includegraphics[width=0.24\textwidth]{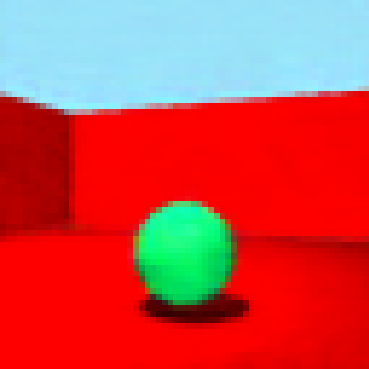}
         \includegraphics[width=0.24\textwidth]{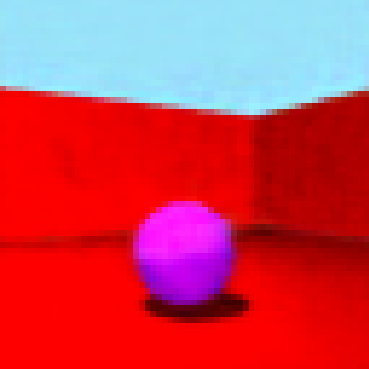}\\
         \includegraphics[width=0.24\textwidth]{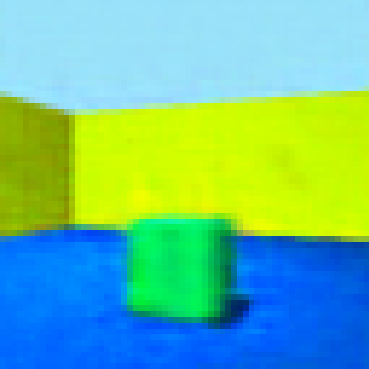}
         \includegraphics[width=0.24\textwidth]{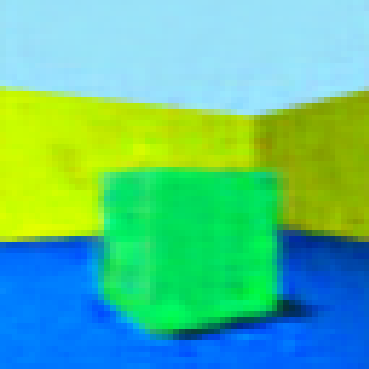}
         \includegraphics[width=0.24\textwidth]{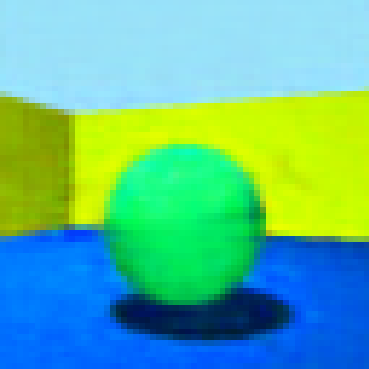}
         \includegraphics[width=0.24\textwidth]{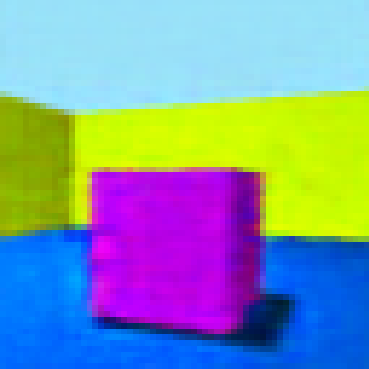}\\
         \includegraphics[width=0.24\textwidth]{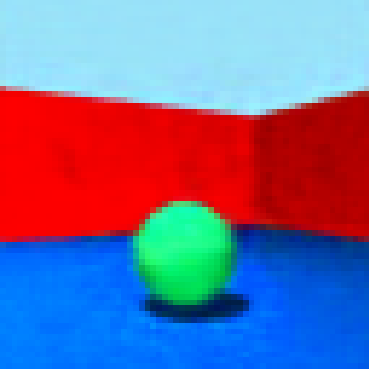}
         \includegraphics[width=0.24\textwidth]{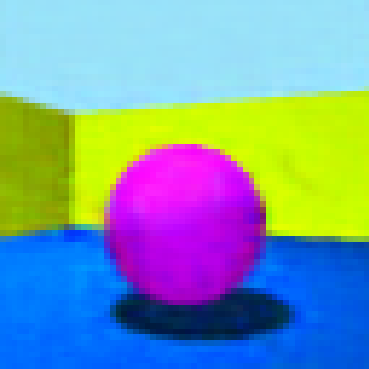}
         \includegraphics[width=0.24\textwidth]{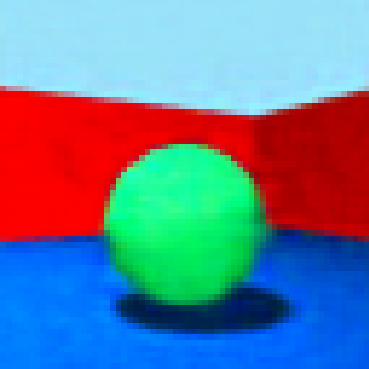}
         \includegraphics[width=0.24\textwidth]{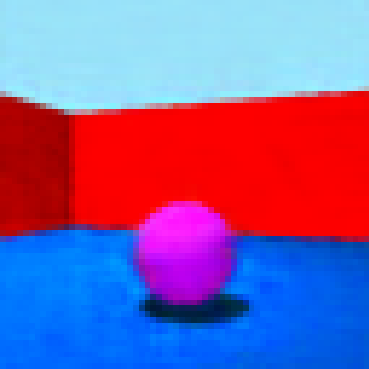}\\
     \end{subfigure}
     \caption{Decoding mixed latent representations of 3DShapes images}
     \label{fig:manip:3dshapes}
     \vspace{-0.5em}
\end{figure}

\begin{figure}[htbp]
    \centering
    \begin{subfigure}[b]{0.25\textwidth}
        \centering
        \caption{Original image}
        \raisebox{-13ex}{\includegraphics[width=\textwidth]{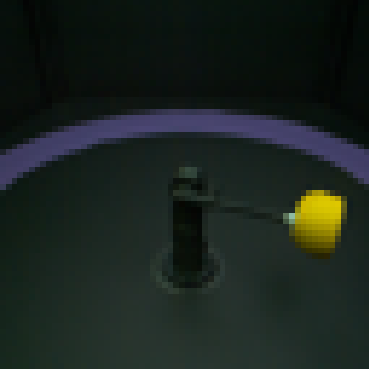}}
    \end{subfigure}
    \begin{minipage}[c]{0.74\textwidth}
        \begin{subfigure}[b]{\textwidth}
            \centering
            \caption{Decoded images from horizontal angle rotation}
            \includegraphics[width=0.19\textwidth]{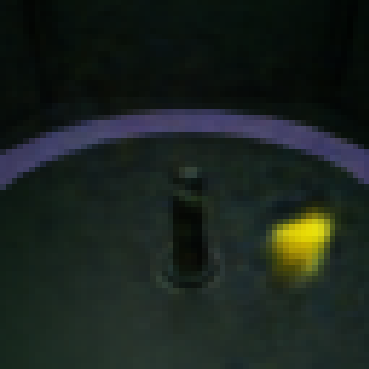}
            \includegraphics[width=0.19\textwidth]{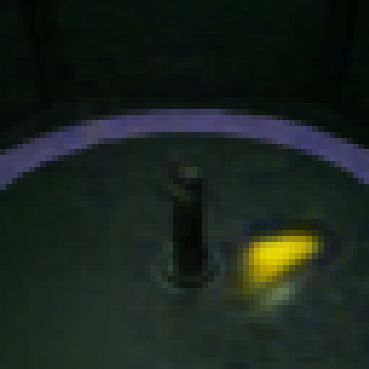}
            \includegraphics[width=0.19\textwidth]{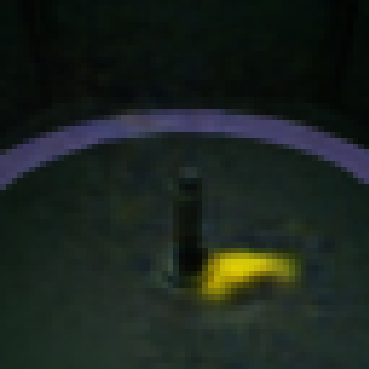}
            \includegraphics[width=0.19\textwidth]{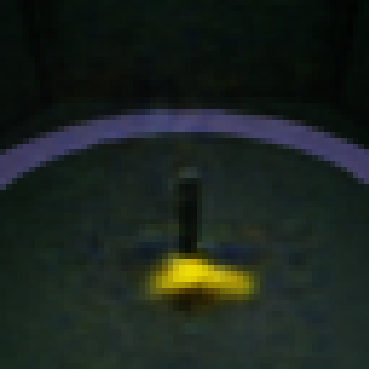}
            \includegraphics[width=0.19\textwidth]{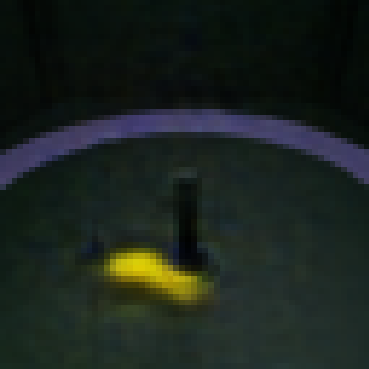}\\
            \includegraphics[width=0.19\textwidth]{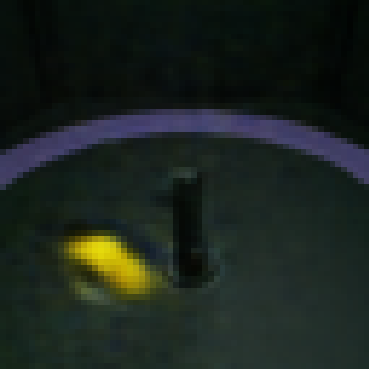}
            \includegraphics[width=0.19\textwidth]{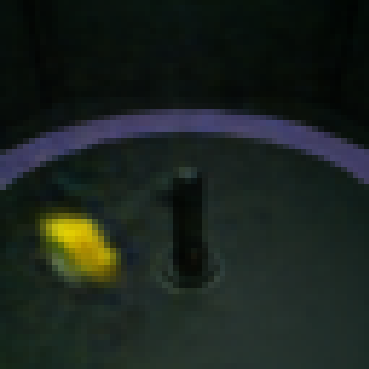}
            \includegraphics[width=0.19\textwidth]{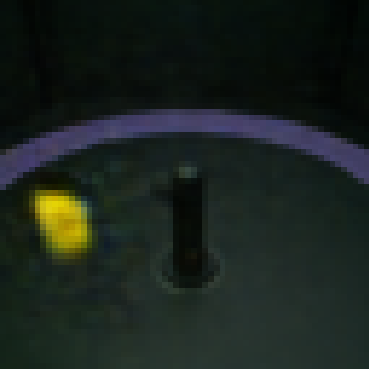}
            \includegraphics[width=0.19\textwidth]{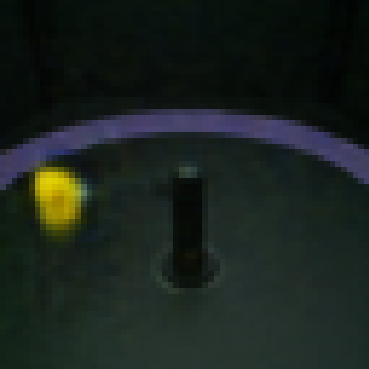}
            \includegraphics[width=0.19\textwidth]{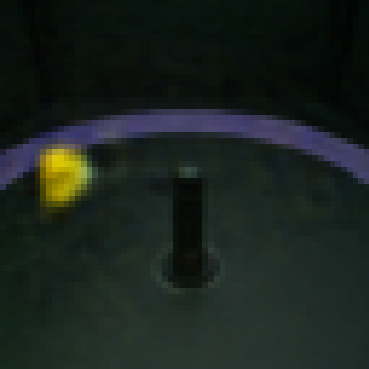}
        \end{subfigure}
        \begin{subfigure}[b]{\textwidth}
            \centering
            \caption{Decoded images from vertical angle rotation}
            \includegraphics[width=0.19\textwidth]{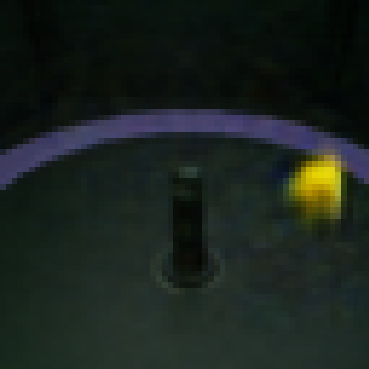}
            \includegraphics[width=0.19\textwidth]{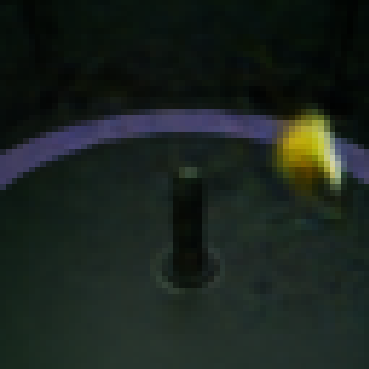}
            \includegraphics[width=0.19\textwidth]{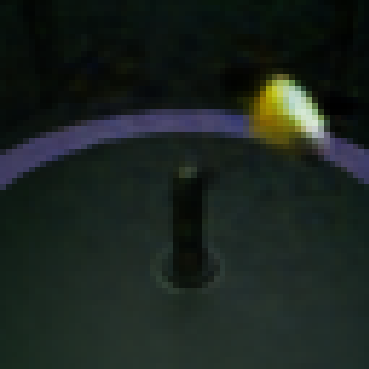}
            \includegraphics[width=0.19\textwidth]{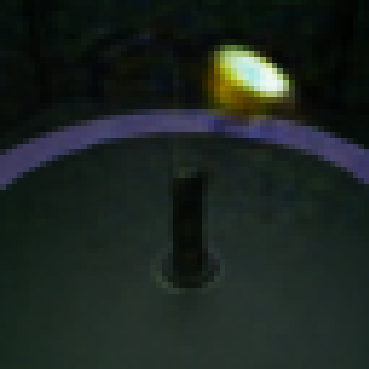}
            \includegraphics[width=0.19\textwidth]{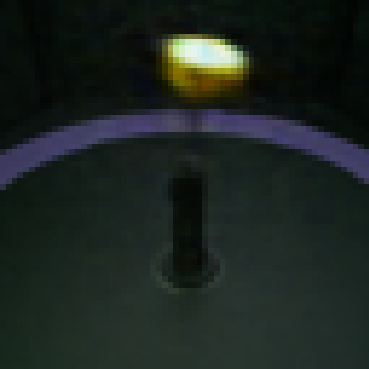}\\
            \includegraphics[width=0.19\textwidth]{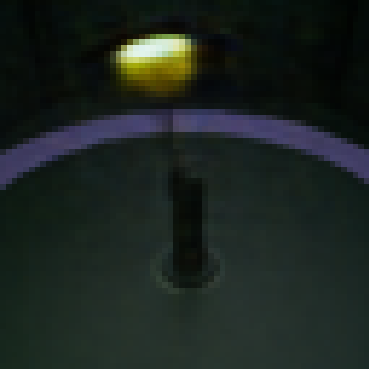}
            \includegraphics[width=0.19\textwidth]{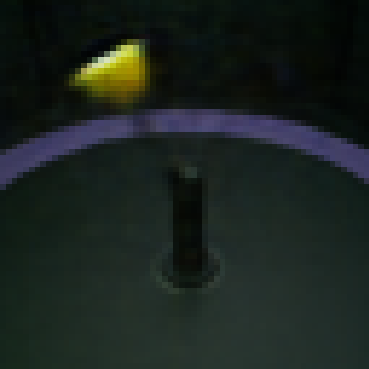}
            \includegraphics[width=0.19\textwidth]{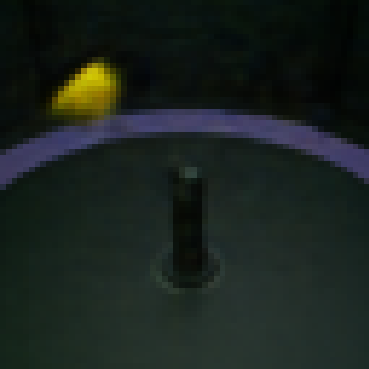}
            \includegraphics[width=0.19\textwidth]{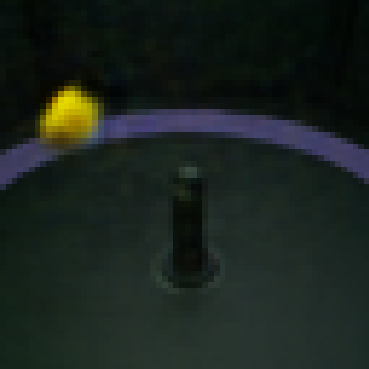}
            \includegraphics[width=0.19\textwidth]{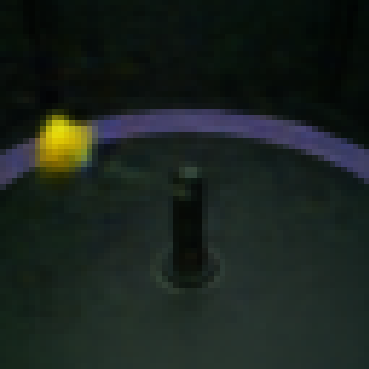}
            
        \end{subfigure}
    \end{minipage}
    \caption{Decoding rotated latent representations of MPI3D image}
    \label{fig:manip:mpi3d}
    \vspace{-0.5em}
\end{figure}

\subsection{Disentanglement metrics}\label{sec:res:dist}

We show all the disentanglement and prediction error results of the different methods on the different datasets in Table~\ref{table:flc} to~\ref{table:coil3_ood}, the values reported correspond to the mean over five seeds with the 68\% confidence interval. The random method corresponds to a randomly initialized neural network that was not trained. 
    \begin{table}[H]
    \centering
    \caption{Disentanglement for FlatLand with rotation colors}
    \makebox[0pt][c]{
    \scriptsize
    \input{tables/flc}}
    \label{table:flc}
    \end{table}

    \begin{table}[H]
    \centering
    \caption{Disentanglement for FlatLand with permutation colors}
    \makebox[0pt][c]{
    \scriptsize
    \input{tables/flp}}
    \label{table:flp}
    \end{table}
    
    \begin{table}[H]
    \centering
    \caption{Disentanglement for COIL2}
    \makebox[0pt][c]{
    \scriptsize
    \input{tables/coil2}}
    \label{table:coil2}
    \end{table}
    
    \begin{table}[H]
    \centering
    \caption{Disentanglement for COIL3}
    \makebox[0pt][c]{
    \scriptsize
    \input{tables/coil3}}
    \label{table:coil3}
    \end{table}
    
    \begin{table}[H]
    \centering
    \caption{Disentanglement for 3DShapes}
    \makebox[0pt][c]{
    \scriptsize
    \input{tables/3dshapes}}
    \label{table:3dshapes}
    \end{table}

    \begin{table}[H]
    \centering
    \caption{Disentanglement for MPI3D}
    \makebox[0pt][c]{
    \scriptsize
    \input{tables/mpi3d}}
    \label{table:mpi3d}
    \end{table}

    \begin{table}[H]
    \centering
    \caption{Disentanglement for noisy MPI3D}
    \makebox[0pt][c]{
    \scriptsize
    \input{tables/mpi3d_noisy}}
    \label{table:mpi3d_noisy}
    \end{table}

    \begin{table}[H]
    \centering
    \caption{Disentanglement for COIL2 in iid setting}
    \makebox[0pt][c]{
    \scriptsize
    \input{tables/coil2_iid}}
    \label{table:coil2_iid}
    \end{table}

    \begin{table}[H]
    \centering
    \caption{Disentanglement for COIL2 in ood setting}
    \makebox[0pt][c]{
    \scriptsize
    \input{tables/coil2_ood}}
    \label{table:coil2_ood}
    \end{table}

    \begin{table}[H]
    \centering
    \caption{Disentanglement for COIL3 in iid setting}
    \makebox[0pt][c]{
    \scriptsize
    \input{tables/coil3_iid}}
    \label{table:coil3_iid}
    \end{table}

    \begin{table}[H]
    \centering
    \caption{Disentanglement for COIL3 in ood setting}
    \makebox[0pt][c]{
    \scriptsize
    \input{tables/coil3_ood}}
    \label{table:coil3_ood}
    \end{table}

\subsection{$d_G$ matrices}\label{sec:dg}
Figure~\ref{fig:cluster} presents the $d_G$ matrices for five different environments. Each matrix is normalized such that the clustering threshold corresponds to a distance of $1$. Each black square represents available actions $\mathcal{G}_k$ associated with specific subgroups $G_k$. These blocks have pairwise distances strictly below $1$, whereas the distances between two actions belonging to different subgroups are strictly greater than $1$, resulting in a correct recovering of the ground-truth action partition. 

\begin{figure}[htbp]
     \centering
     \begin{subfigure}[b]{0.19\textwidth}
         \centering
         \caption{FLC}
         \includegraphics[width=\textwidth]{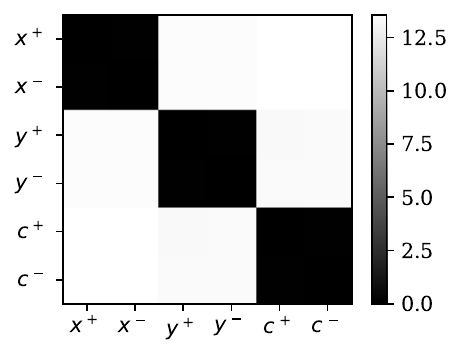}
     \end{subfigure}
     \begin{subfigure}[b]{0.19\textwidth}
         \centering
         \caption{FLP}
         \includegraphics[width=\textwidth]{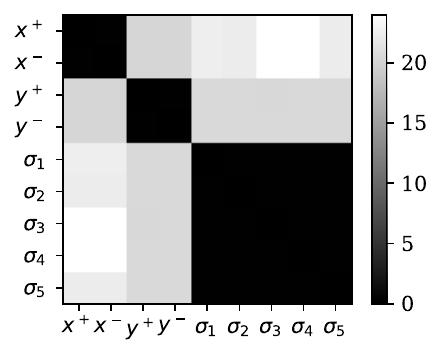}
     \end{subfigure}
     \begin{subfigure}[b]{0.19\textwidth}
         \centering
         \caption{COIL2}
         \includegraphics[width=\textwidth]{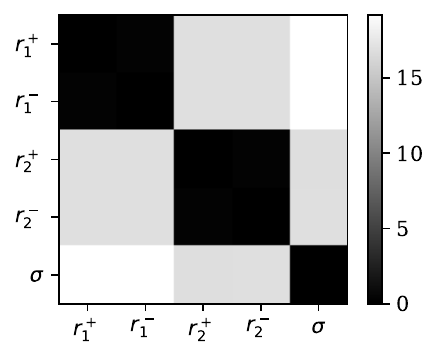}
     \end{subfigure}
     \begin{subfigure}[b]{0.19\textwidth}
         \centering
         \caption{COIL3}
         \includegraphics[width=\textwidth]{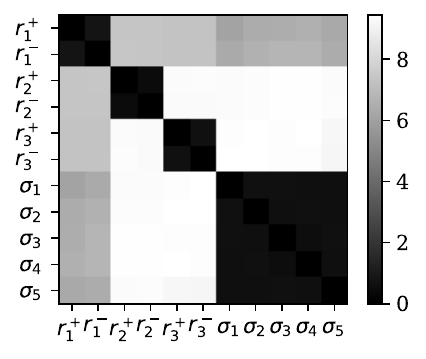}
     \end{subfigure}
     \begin{subfigure}[b]{0.19\textwidth}
         \centering
         \caption{3DShapes}
         \includegraphics[width=\textwidth]{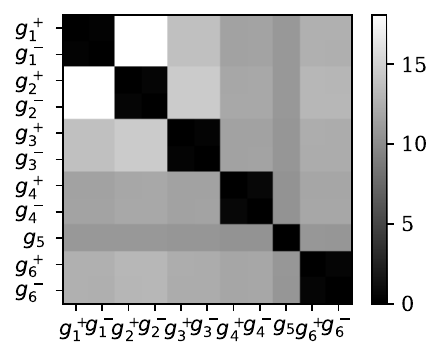}
     \end{subfigure}
     \caption{$d_G$ matrices averaged over 5 seeds, normalized so that the clustering threshold is 1.}
     \label{fig:cluster}
     \vspace{-0.5em}
\end{figure}

\section{Implementation details}
\subsection{Datasets}\label{sec:data}

Here, we describe the datasets used in this paper. A visual representation of the observation and the available actions are illustrated in Figure~\ref{fig:envs}.

Our implementation of Colored Flatland~\citep{flatland} consists of RGB images of size $64 \times 64 \times 3$ with a ball of radius $17$ pixels. The ball can occupy $5$ distinct positions along each axis, corresponding to $G_X = G_Y = \mathbb{Z}/5\mathbb{Z}$. In the cyclic color experiment (FLC), the base color is $[1, 0, 0]$, and the two available actions in $\mathcal{G}_C$ perform cyclic shifts of the active color channel in one direction. In the permutation color experiment (FLP), the base color is $[1/3, 2/3, 1]$, and each available action corresponds to a permutation of the RGB channels.

Our implementation of COIL~\citep{coil} consists of RGB images of size $64 \times 64n \times 3$, where $n$ is the number of objects present in the scene. The group actions, and consequently the number of possible orientations for each object, are specified in Table~\ref{table:envhyp1}. In this table we denote by $\sigma$ a permutation that permutes all objects, and by $r_i$ the rotation by one unit angle of the $i$-th object. The first two rows of Table~\ref{table:envhyp1} correspond to the datasets COIL2 and COIL3, which are used in the most of the experiments. The remaining rows describe additional environments used for action clustering experiments in Section~\ref{sec:expecluster}.

Our implementation of 3DShapes~\citep{3dshapes18} consists of RGB images of size $64 \times 64 \times 3$, showing a 3D object placed at the center of a colored room. Each original generative factor was sub-sampled by a factor of 2, resulting in the following cardinalities: wall hue: 5, object hue; 5, background hue: 5, object scale: 4, object shape: 2, and viewing angle: 7. For each generative factor $i$, we define a cyclic symmetry group $G_i = \mathbb{Z}/k_i\mathbb{Z}$ over its possible values, along with two available actions $\mathcal{G}_i = \{ g_i^-, g_i^+ \}$ corresponding to unit rotation in each direction.

Our implementation of MPI3D~\citep{gondal2019transfer} consists of RGB images of size $64 \times 64 \times 3$, showing a robotic arm manipulating a colored object. Among the original generative factors, we only kept on two of them: horizontal and vertical object rotation angles, each having 40 different values. In order to approximate continuous symmetry groups, the actions are represented by $g=(j,\theta)$ with $j\in\{0,1\}$ the axis of rotation and $\theta\in]0,2\pi[$ the rotation angle. In the noisy version of MPI3D, we add a Gaussian noise of standard deviation $2\pi/15$ on the rotation angle after applying the action.

\begin{figure}[htb]
     \centering
     \begin{subfigure}[b]{\textwidth}
         \centering
         \caption{COIL}
          \resizebox*{!}{0.14\textheight}{\input{tikz/expe_c.tikz}}
     \end{subfigure}\\
     \begin{subfigure}[b]{\textwidth}
         \centering
         \caption{Colored Flatland}
         \resizebox*{!}{0.1\textheight}{\input{tikz/expe_fl.tikz}}
     \end{subfigure}
     \begin{subfigure}[b]{\textwidth}
         \centering
         \caption{3DShapes}
         \resizebox*{!}{0.25\textheight}{\input{tikz/expe_3d.tikz}}
     \end{subfigure}
     \begin{subfigure}[b]{\textwidth}
         \centering
         \caption{MPI3D}
         \resizebox*{!}{0.1\textheight}{\input{tikz/expe_m.tikz}}
     \end{subfigure}
    \caption{Presentation of the environments}
    \label{fig:envs}

\end{figure}
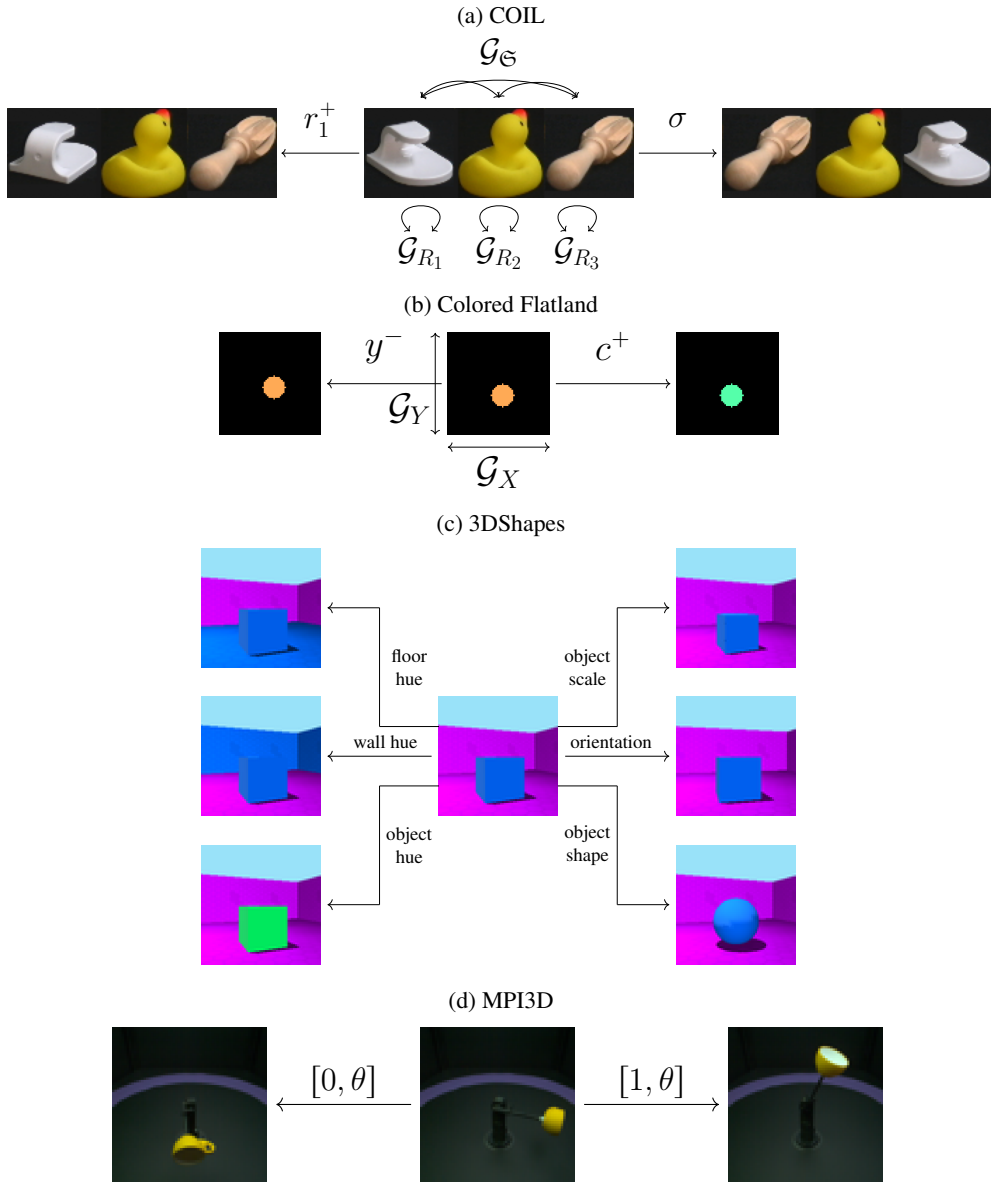

\begin{center}
\begin{table}[htb]
\centering
\caption{COIL Environments used}
\small
\makebox[0pt][c]{
\input{tables/expecluster}}
\label{table:envhyp1}
\end{table}
\end{center}

\subsection{Network architectures and hyperparameters}\label{sec:hyperparams}
For all the experiments, dataset and method, we used Adam~\citep{kingma_adam_2014} with the default parameters given by pytorch, a batch size of $16$ and the auto-encoder architecture given Table~\ref{table:nn}.
\begin{table}[H]
\centering
\caption{Auto-encoder architecture used for all methods and all datasets}
\makebox[0pt][c]{
\small
\input{tables/nn}}
\label{table:nn}
\end{table}

For discrete action, an action matrix $A_g$ of SOBDRL is parametrized by $d(d-1)$ scalars $\theta^g_{i,j}$, one angle for each pair of plane $(i,j)$, the action matrix is then constructed by multiplying all the rotations matrices$A_g=\prod_{i<j}R_{i,j}(\theta^g_{i,j})$. For LSBD-VAE, the block-diagonal actions matrices are already given as it is a supervised method. For LSBD-VAE$^*$ the only difference is that each block is learned with a $SO$ parametrisation similarly to SOBDRL. For our method, the action matrices are parametrized directly with the matrices coefficient, we used a tanH activation to ensure stability.

For continuous action, each action representation are neural networks of two hidden layer of 256 units with ReLU activation functions. SOBDRL action representation has an output of size $d(d-1)$ and use the same $SO$ parametrisation as for discrete action to construct the action matrix. GMA-VAE action representation has an output of size $d\times d$ and directly parametrize the action matrix coefficients. HAE action representation has as output a block-diagonal matrix of size $d\times d$ with the block shape given as prior knowledge, a matrix exponential is used on each block.

SOBDRL and HAE require multi-steps trajectories  $(x_t,g_t,\dots,x_{t+T-1},g_{t+T-1},x_{t+T})$ with $T>1$. To have a fair comparison, every experiment will use sequences with $T=5$, the other methods process independently each transition $(x_{t+k},g_{t+k},x_{t+k+1})$.

For A-VAE and GMA-VAE, we used a fixed latent noise of standard deviation $\sigma=0.1$. For the action clustering of Step 2 we used the threshold $\eta=\sigma$ and $M=2$. For the GMA-VAE masking, we force each sub-group to have at least one dimension assigned to it.

All hyperparameter details for each experiment are provided in the \texttt{config} folder of the GitHub repository.

\section{Use of Large Language Models}
LLMs were moderately used to help in literature reviewing and English writing.
\end{document}

%% file: math_commands.tex
\usepackage{amsmath,amsfonts,bm}

\def\eqref#1{equation~\ref{#1}}

\def\1{\bm{1}}

\DeclareMathAlphabet{\mathsfit}{\encodingdefault}{\sfdefault}{m}{sl}
\SetMathAlphabet{\mathsfit}{bold}{\encodingdefault}{\sfdefault}{bx}{n}



%% file: s.tikzstyles
\tikzstyle{none}=[]
\tikzstyle{circ}=[fill=white, draw=black, shape=circle, minimum size=0.5cm, inner sep=0pt]
\tikzstyle{center}=[align=center]

\tikzstyle{head}=[->]
\tikzstyle{dhead}=[<->]

%% file: tikz/expe_fl.tikz
\begin{tikzpicture}
	\begin{pgfonlayer}{nodelayer}
		\node [style=none] (0) at (4.5, 0) {\includegraphics[height=2cm]{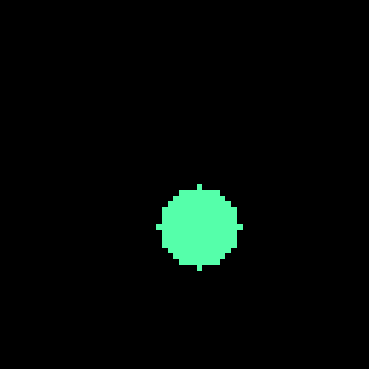}};
		\node [style=none] (1) at (-4.5, 0) {\includegraphics[height=2cm]{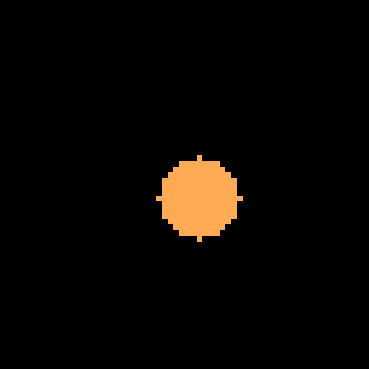}};
		\node [style=none] (2) at (0, 0) {\includegraphics[height=2cm]{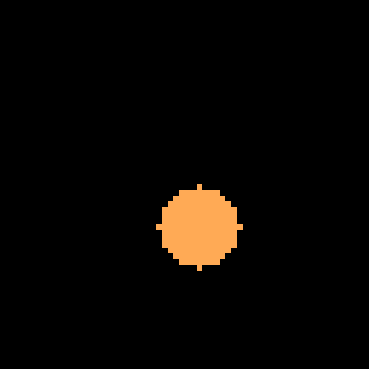}};
		\node [style=none] (3) at (-2.25, 0.75) {\huge $y^-$};
		\node [style=none] (4) at (2.25, 0.75) {\huge $c^+$};
		\node [style=none] (5) at (-1, -1.25) {};
		\node [style=none] (7) at (1, -1.25) {};
		\node [style=none] (8) at (0, -1.75) {\huge $\mathcal G_X$};
		\node [style=none] (9) at (-1.25, -1) {};
		\node [style=none] (10) at (-1.25, 1) {};
		\node [style=none] (11) at (-1.75, -0.5) {\huge $\mathcal G_Y$};
	\end{pgfonlayer}
	\begin{pgfonlayer}{edgelayer}
		\draw [style=head, in=360, out=180] (2.west) to (1.east);
		\draw [style=head] (2.east) to (0.west);
		\draw [style=dhead] (5.center) to (7.center);
		\draw [style=dhead] (10.center) to (9.center);
	\end{pgfonlayer}
\end{tikzpicture}

%% file: tikz/equiv.tikz
\begin{tikzpicture}
	\begin{pgfonlayer}{nodelayer}
		\node [style=none] (0) at (0, 0) {$G\times \mathcal Z$};
		\node [style=none] (1) at (3, 0) {$\mathcal Z$};
		\node [style=none] (2) at (3, 3) {$\mathcal W$};
		\node [style=none] (3) at (0, 3) {$G\times \mathcal W$};
		\node [style=none] (4) at (3.25, 1.5) {\small $f$};
		\node [style=none] (5) at (1.75, 3.25) {\small $\cdot_{\mathcal W}$};
		\node [style=none] (6) at (-0.7, 1.5) {\small $id_G\times f$};
		\node [style=none] (7) at (1.75, 0.25) {\small $\cdot_{\mathcal Z}$};
	\end{pgfonlayer}
	\begin{pgfonlayer}{edgelayer}
		\draw [style=head] (3.east) to (2.west);
		\draw [style=head] (2.south) to (1.north);
		\draw [style=head] (0.east) to (1.west);
		\draw [style=head] (3.south) to (0.north);
	\end{pgfonlayer}
\end{tikzpicture}

%% file: tikz/pgm.tikz
\begin{tikzpicture}
	\begin{pgfonlayer}{nodelayer}
		\node [style=circ] (0) at (0, 0) {$Z'$};
		\node [style=circ] (1) at ( 2,-0.35) {$X$};
		\node [style=circ] (3) at (-2,0) {$X'$};
		\node [style=none] (5) at (1, 1) {\small \textcolor{blue!70!black}{$p_{\psi,\phi}(z'\mid x, g)$}};
		\node [style=none] (6) at (-1,.5) {\small  \textcolor{green!50!black}{$p_\theta(x'\mid z')$}};
		\node [style=circ] (7) at (2,0.35) {$G$};
		\node [style=none] (8) at (-1,-.5) {\small \textcolor{red!70!black}{$q_\phi(z'\mid x')$}};
	\end{pgfonlayer}
	\begin{pgfonlayer}{edgelayer}
		\draw [style=head, draw=blue!70!black] (1) to (0);
		\draw [style=head, bend right=15, draw=red!70!black] (3) to (0);
		\draw [style=head, draw=blue!70!black] (7) to (0);
		\draw [style=head, bend right=15, draw=green!50!black] (0) to (3);
	\end{pgfonlayer}
\end{tikzpicture}

%% file: tikz/cong.tikz
\begin{tikzpicture}
  
  \draw[step=1cm, gray, thin] (0,0) grid (3,2);

  \coordinate (Ca) at (0.5,0.5); 
  \coordinate (Up) at (0.5,1.5); 
  \coordinate (Right) at (1.5,0.5); 

  \fill (Ca) circle (2pt);

  \draw[->, thick, bend right=40] (Ca) to node[left] {\huge $y^+$} (Up);

  \draw[->, thick, bend left=40] (Ca) to node[below] {\huge $x^+$} (Right);
  
  \node (a) at (5.5,1) {\Large (a)};

  \node at (1.5,-1) {$\scalebox{2}{$\cong$}$};
  
  
    \foreach \x in {4,...,10}
      \draw[gray, thin] (\x-5.5,-3) -- (\x-5.5,-2);
    \draw[gray, thin] (-1.5,-3) -- (4.5,-3);
    \draw[gray, thin] (-1.5,-2) -- (4.5,-2);
    
  \coordinate (Cb) at (-1,-2.5); 
  \coordinate (Right2) at (1,-2.5); 
  \coordinate (Right3) at (2,-2.5); 

  \fill (Cb) circle (2pt);

  \draw[->, thick, bend right=40] (Cb) to node[below] {\huge $2x^+$} (Right2);

  \draw[->, thick, bend left=40] (Cb) to node[above] {\huge $3x^+$} (Right3);

  \node (a) at (5.5,-2.5) {\Large (b)};
\end{tikzpicture}

%% file: tikz/pification.tikz
\begin{tikzpicture}
	\begin{pgfonlayer}{nodelayer}
		\node [style=none] (0) at (1.5,0) {\includegraphics[height=2cm]{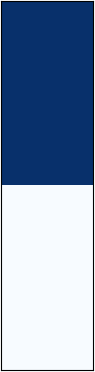}};
		\node [style=none] (1) at (4, 0) {\includegraphics[height=2cm]{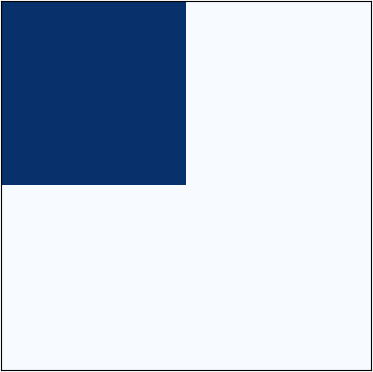}};
		\node [style=none] (2) at (4, 3) {\includegraphics[height=2cm]{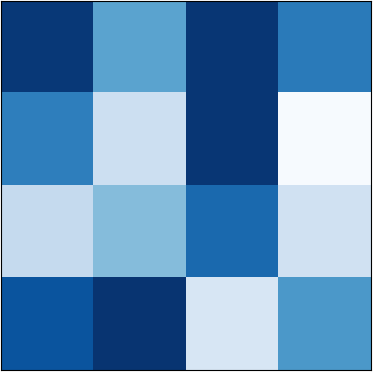}};
		\node [style=none] (3) at (8, 1.5) {\includegraphics[height=2cm]{img/pification/tilde}};
		\node [style=none] (4) at (1.5, -2) {\Huge$\pi_k$};
		\node [style=none] (5) at (4, -2) {\Huge$\pi_k\pi_k^\top$};
		\node [style=none] (6) at (4, 5) {\Huge$A$};
		\node [style=none] (7) at (10, 1.5) {\Huge$\widetilde A$};
        \node [style=none] (8) at (4, 1.5) {};
	\end{pgfonlayer}
	\begin{pgfonlayer}{edgelayer}
		\draw [style=head] (0.east) to (1.west);
		\draw [style=head] (8.east) to (3.west);
		\draw (1.north) to (2.south);
	\end{pgfonlayer}
\end{tikzpicture}

%% file: tables/iidood.tex
\begin{tabular}{cc|cc|cc} 
&&\multicolumn{2}{c}{iid}&\multicolumn{2}{|c}{ood}\\
&&COIL2&COIL3&COIL2&COIL3\\
\hline
\multicolumn{2}{c|}{LSBD-VAE}
   & \textbf{7.8e-5 / 7.9e-5}
   & \textbf{1.1e-4 / 1.1e-4}
   & \textbf{7.6e-5 / 7.6e-5}
   & \textbf{9.9e-5 / 9.9e-5}\\
\hline
\multicolumn{2}{c|}{LSBD-VAE$^*$}
   & \textbf{8.7e-5 / 8.8e-5}&
   & \textbf{8.8e-5 / 8.8e-5}&\\
   
\multirow{2}{*}{SOBDRL}&Disentangled
   & \textbf{1.7e-4 / 1.7e-4}&
   & \textbf{5.1e-5 / 5.1e-5}&\\
& Entangled
   & 1.7e-4 / 3.7e-3 
   & \textbf{2.5e-4 / 2.5e-4}
   & 5.7e-5 / 0.02\hspace{.9em}
   & 2.5e-4 / 0.01\hspace{.7em}\color{white}.\color{black}\\
   
\multicolumn{2}{c|}{GMA-VAE}
  & \textbf{6.1e-5 / 6.2e-5}
  & \textbf{1.1e-4 / 1.1e-4}
  & \textbf{6.2e-5 / 6.2e-5}
  & \textbf{1.1e-4 / 1.1e-4}\\
\multicolumn{2}{c|}{A-VAE}
  & \textbf{7.7e-5 / 7.8e-5}
  & 2.9e-4 / 8.7e-4
  & 6.7e-5 / 0.05\hspace{.9em}
  & 2.9e-4 / 0.05\hspace{.7em}\color{white}.\color{black}\\
\end{tabular}

%% file: tables/symbol.tex
\begin{tabular}{ll|ll}
  \hline
  \textbf{Symbol} & \textbf{Meaning} &
  \textbf{Symbol} & \textbf{Meaning} \\
  \hline
  $\mathcal W$
    & World space
  & $\mathcal Z_1\oplus\mathcal Z_2$
    & Direct sum of vector spaces\\ 

  $\mathcal X$
    & Observation space
    &$\mathcal A\hookrightarrow \mathcal B$
      & Injective map from $\mathcal A$ to $\mathcal B$\\ 

  $X$
    & Observation random variable
  &$\vvvert A\vvvert$
    & Spectral norm of matrix $A$ \\
  
  $\mathcal Z$
    & Latent space
  &$\mathbb Z/n\mathbb Z$
    & Cyclic group of order $n$\\
  
  $Z$
    & Latent random variable 
  &$\mathfrak S_n$
    & Symmetric group on $n$ elements\\
  
  $G$
    & Action Group
  &$GL(\mathcal V)$
    & General linear group on $\mathcal V$\\
  
  $\mathcal G\subset G$
    & Available action set
  &$SO(\mathcal V)$
    & Special orthogonal group on $\mathcal V$\\
  
  $b:\mathcal W\rightarrow\mathcal X$
    & Generative function
  &$\langle S\rangle$
    & Group generated by set $S$\\

  $h:\mathcal X\rightarrow\mathcal Z$
    & Encoder function
  &$\langle S\rangle_+$
    & Monoid generated by set $S$\\

  $f=h\circ b:\mathcal W\rightarrow\mathcal Z$
    & Latent generative function
  &$G^*$
    &$G$ without its identity element\\
  
  $\rho:G\rightarrow GL(\mathcal Z)$
    & Action representation\\
  $\cdot_{\mathcal Z}:G\times\mathcal Z\rightarrow\mathcal Z$
    & Group action on latent space\\
  $\cdot_{\mathcal W}:G\times\mathcal W\rightarrow\mathcal W$
    & Group action on world space
  \end{tabular}

%% file: tikz/infty.tikz
\begin{tikzpicture}
	\begin{pgfonlayer}{nodelayer}
		\node [draw, circle, minimum size=1cm, label=below:$B$] at (0,0) {};
		\node [style=none, label={[above,yshift=-5pt]:$z^*$}] at (0,0) {$+$};
		\node [style=none] (1) at (-2.5, 0) {$z_1$};
		\node [style=none] (x1) at (-2.2, 0) {$+$};
		\node [style=none] (2) at (2.5, 0) {$z_2$};
		\node [style=none] (x2) at (2.2, 0) {$+$};
		\node [style=none] (3) at (2.2, 1) {};
		\node [style=none] (4) at (1, .2) {$\Delta$};
	\end{pgfonlayer}
	\begin{pgfonlayer}{edgelayer}
		\draw [style=dhead] (x1.center) to (x2.center);
	\end{pgfonlayer}
\end{tikzpicture}

%% file: tikz/th3.tikz
\begin{tikzpicture}
	\begin{pgfonlayer}{nodelayer}
    	  \node[align=center] at (0,0) {
        \includegraphics[height=2cm]{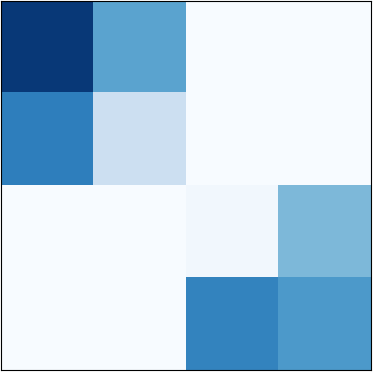} \\
        $\rho(g)$
      };
      
		\node at (1.35,0.2) [circle, fill, inner sep=1pt] {};

    	  \node[align=center] at (2,0) {
        \includegraphics[height=2cm]{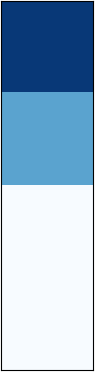} \\
        $z_1$
      };
      
		\node [style=none] (3) at (2.65, 0.2) {$=$};

    	  \node[align=center] at (4,0) {
        \includegraphics[height=2cm]{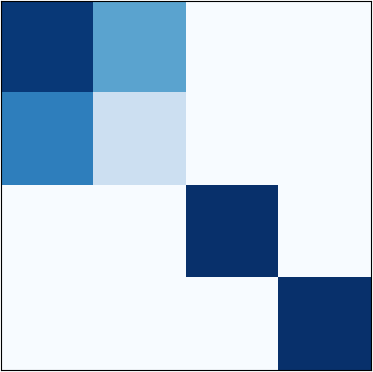} \\
        $\rho(g_1)$
      };
      
		\node at (5.35,0.2) [circle, fill, inner sep=1pt] {};

    	  \node[align=center] at (6,0) {
        \includegraphics[height=2cm]{img/pification/zk.png} \\
        $z_1$
      };
      
		\node [style=none] (7) at (7, 0.2) {\large$\in\mathcal Z_1$};
	\end{pgfonlayer}
\end{tikzpicture}

%% file: tables/hyp1.tex
\begin{tabular}{cc|cccccc} 
 &k&Beta-VAE&Inde& Mod& DCI&SAP& MIG\\
 \hline
 \multirow{3}{*}{LSBD-VAE$^*$}
 &1& 1.00 $\pm$ .00 & .99 $\pm$ .00 & 1.00 $\pm$ .00 & 1.00 $\pm$ .00 & .27 $\pm$ .04 & .06 $\pm$ .03\\
 &2& .99 $\pm$ .00 & .84 $\pm$ .03 & .83 $\pm$ .04 & .82 $\pm$ .05 & .14 $\pm$ .02 & .07 $\pm$ .03\\
 &3& .78 $\pm$ .05 & .69 $\pm$ .03 & .59 $\pm$ .05 & .62 $\pm$ .05 & .03 $\pm$ .01 & .04 $\pm$ .01\\
 \hline
 \multirow{3}{*}{SOBDRL}
 &1& .99 $\pm$ .01 & .94 $\pm$ .02 & .94 $\pm$ .04 & .88 $\pm$ .03 & .39 $\pm$ .04 & .20 $\pm$ .07\\
 &2& .94 $\pm$ .03 & .83 $\pm$ .02 & .81 $\pm$ .02 & .70 $\pm$ .06 & .30 $\pm$ .06 & .04 $\pm$ .03\\
 &3& .74 $\pm$ .09 & .74 $\pm$ .05 & .68 $\pm$ .05 & .48 $\pm$ .05 & .23 $\pm$ .06 & .19 $\pm$ .05\\
\end{tabular}

%% file: tables/flc.tex
\begin{tabular}{cc|cccccc|c} 
 &&$\uparrow$ Beta-VAE&$\uparrow$ Inde&$\uparrow$ Mod&$\uparrow$ DCI&$\uparrow$ SAP&$\uparrow$ MIG& $\downarrow$ Prediction\\
 \hline
\multirow{ 2}{*}{LSBD-VAE}
  & mean  & \textbf{1.00 $\pm$ .00} & .96 $\pm$ .00 & \textbf{1.00} $\pm$ .00 & .98 $\pm$ .01 & \textbf{.53 $\pm$ .05} & .10 $\pm$ .04
          &1.6e-09 $\pm$ 4.8e-10 \\
         
  & \cellcolor{gray!20} best  &  \cellcolor{gray!20}1.00 &  \cellcolor{gray!20}.96 &  \cellcolor{gray!20}1.00 &  \cellcolor{gray!20}1.00 &  \cellcolor{gray!20}.46 &  \cellcolor{gray!20}.23
          &  \cellcolor{gray!20}4.8e-10\\

\hline
\multirow{ 2}{*}{LSBD-VAE$^*$}
  & mean  & \textbf{1.00 $\pm$ .00} & .93 $\pm$ .02 & .90 $\pm$ .06 & .87 $\pm$ .07 & .40 $\pm$ .03 & .07 $\pm$ .03
          & 9.6e-10 $\pm$ 5.3e-10\\

  & \cellcolor{gray!20} best  & \cellcolor{gray!20} 1.00 & \cellcolor{gray!20} .97 & \cellcolor{gray!20} 1.00 & \cellcolor{gray!20} .98 & \cellcolor{gray!20} .53 & \cellcolor{gray!20} .02
          & \cellcolor{gray!20} 3.3e-9\\

\multirow{ 2}{*}{SOBDRL}
  & mean  & .93 $\pm$ .03 & .78 $\pm$ .02 & .85 $\pm$ .04 & .53 $\pm$ .09 & .28 $\pm$ .04 & .10 $\pm$ .02
          & 2.7e-5 $\pm$ 2.4e-5\\
         
  &\cellcolor{gray!20} best  &\cellcolor{gray!20} .99 &\cellcolor{gray!20} .84 &\cellcolor{gray!20} .77 &\cellcolor{gray!20} .56 &\cellcolor{gray!20} .44 &\cellcolor{gray!20} .02
          &\cellcolor{gray!20} 1.6e-8\\

\multirow{ 2}{*}{GMA-VAE}
  & mean  & \textbf{1.00 $\pm$ .00} & \textbf{1.00 $\pm$ .00} & \textbf{1.00 $\pm$ .00} & \textbf{1.00 $\pm$ .00} & \textbf{.54 $\pm$ .04} & .01 $\pm$ .00
          & 1.6e-10 $\pm$ 2.9e-11\\
         
  &\cellcolor{gray!20} best  &\cellcolor{gray!20} 1.00 &\cellcolor{gray!20} 1.00 &\cellcolor{gray!20} 1.00 &\cellcolor{gray!20} 1.00 &\cellcolor{gray!20} .51 &\cellcolor{gray!20} .00
          &\cellcolor{gray!20} 1.3e-10\\
          
\multirow{ 2}{*}{A-VAE}
  & mean  & .89 $\pm$ .05 & .84 $\pm$ .01 & .76 $\pm$ .01 & .44 $\pm$ .03 & .11 $\pm$ .02 & .05 $\pm$ .01
          & \textbf{6.6e-11 $\pm$ 1.7e-11}\\
         
  &\cellcolor{gray!20} best  &\cellcolor{gray!20} .95 &\cellcolor{gray!20} .82 &\cellcolor{gray!20} .77 &\cellcolor{gray!20} .48 &\cellcolor{gray!20} .03 &\cellcolor{gray!20} .05
          &\cellcolor{gray!20}3.0e-11 \\

 \hline
 \multicolumn{2}{c|}{$\beta$-VAE }& .68 $\pm$ .06 & .77 $\pm$ .01 & .88 $\pm$ .03 & .22 $\pm$ .05 & .20 $\pm$ .04 & .13 $\pm$ .03\\
  \multicolumn{2}{c|}{Factor-VAE}& .69 $\pm$ .07 & .69 $\pm$ .04 & .81 $\pm$ .03 & .24 $\pm$ .06 & .29 $\pm$ .06 & \textbf{.16 $\pm$ .05}\\
  \multicolumn{2}{c|}{DIP-VAE I}& .56 $\pm$ .03 & .67 $\pm$ .01 & .78 $\pm$ .01 & .12 $\pm$ .02 & .11 $\pm$ .01 & .04 $\pm$ .01\\
  \multicolumn{2}{c|}{DIP-VAE II}& .84 $\pm$ .04 & .62 $\pm$ .02 & .74 $\pm$ .00 & .17 $\pm$ .01 & .18 $\pm$ .04 & .05 $\pm$ .01\\
\cline{1-8}
 \multicolumn{2}{c|}{Random}& .41 $\pm$ .03 & .51 $\pm$ .01 & .61 $\pm$ .04 & .05 $\pm$ .01 & .05 $\pm$ .00 & .04 $\pm$ .01\\
\end{tabular}

%% file: tables/flp.tex
\begin{tabular}{cc|cccccc|c} 
 &&$\uparrow$ Beta-VAE&$\uparrow$ Inde&$\uparrow$ Mod&$\uparrow$ DCI&$\uparrow$ SAP&$\uparrow$ MIG& $\downarrow$ Prediction\\
 \hline
\multirow{ 2}{*}{LSBD-VAE}
  & mean  & \textbf{1.00 $\pm$ .00} & .97 $\pm$ .00 & \textbf{1.00 $\pm$ .00} & .95 $\pm$ .02 & .46 $\pm$ .04 & .04 $\pm$ .02
          & 4.1e-6 $\pm$ 7.3e-7\\
         
  &\cellcolor{gray!20} best  &\cellcolor{gray!20} 1.00 &\cellcolor{gray!20} .97 &\cellcolor{gray!20} 1.00 &\cellcolor{gray!20} .98 &\cellcolor{gray!20} .59 &\cellcolor{gray!20} .02
          &\cellcolor{gray!20} 2.6e-6 \\

\hline
\multirow{ 2}{*}{LSBD-VAE$^*$}
  & mean  & \textbf{1.00 $\pm$ .00} & .98 $\pm$ .00 & \textbf{1.00 $\pm$ .00} & \textbf{.99 $\pm$ .01} & .40 $\pm$ .06 & \textbf{.08 $\pm$ .02}
          & 3.3e-6 $\pm$ 2.8e-7\\
         
  &\cellcolor{gray!20} best  &\cellcolor{gray!20} 1.00 &\cellcolor{gray!20} .98 &\cellcolor{gray!20} 1.00 &\cellcolor{gray!20} 1.00 &\cellcolor{gray!20} .46 &\cellcolor{gray!20} .00
          & \cellcolor{gray!20}2.7e-6\\

\multirow{ 2}{*}{SOBDRL}
  & mean  & \textbf{1.00 $\pm$ .00} & .74 $\pm$ .01 & .79 $\pm$ .01 & .50 $\pm$ .02 & .25 $\pm$ .03 & \textbf{.08 $\pm$ .02}
          & 3.6e-3 $\pm$ 8.3e-7\\
         
  &\cellcolor{gray!20} best  &\cellcolor{gray!20} 1.00 &\cellcolor{gray!20} .76 &\cellcolor{gray!20} .80 &\cellcolor{gray!20} .43 &\cellcolor{gray!20} .16 &\cellcolor{gray!20} .12
          &\cellcolor{gray!20} 3.6e-3\\

\multirow{ 2}{*}{GMA-VAE}
  & mean  & \textbf{1.00 $\pm$ .00} & \textbf{.99 $\pm$ .00} & \textbf{1.00 $\pm$ .00} & \textbf{.99 $\pm$ .01} & \textbf{.51 $\pm$ .04} & \textbf{.08 $\pm$ .01}
          & 5.8e-6 $\pm$ 3.1e-6\\
         
  &\cellcolor{gray!20} best  &\cellcolor{gray!20} 1.00 &\cellcolor{gray!20} 1.00 &\cellcolor{gray!20} 1.00 &\cellcolor{gray!20} 1.00 &\cellcolor{gray!20} .59 &\cellcolor{gray!20} .11
          &\cellcolor{gray!20} 1.0e-6\\

\multirow{ 2}{*}{A-VAE}
  & mean  & .87 $\pm$ .06 & .89 $\pm$ .04 & .92 $\pm$ .04 & .47 $\pm$ .08 & .15 $\pm$ .05 & \textbf{.09 $\pm$ .01}
          & \textbf{1.1e-6 $\pm$ 1.9e-7}\\
         
  &\cellcolor{gray!20} best  &\cellcolor{gray!20} 1.00 &\cellcolor{gray!20} .95 &\cellcolor{gray!20} 1.00 &\cellcolor{gray!20} .47 &\cellcolor{gray!20} .28 &\cellcolor{gray!20} .07
          &\cellcolor{gray!20} 1.0e-6\\
 \hline
 \multicolumn{2}{c|}{$\beta$-VAE }& .64 $\pm$ .03 & .60 $\pm$ .03 & .80 $\pm$ .02 & .13 $\pm$ .04 & .12 $\pm$ .02 & .03 $\pm$ .01\\
  \multicolumn{2}{c|}{Factor-VAE}& .77 $\pm$ .05 & .61 $\pm$ .01 & .79 $\pm$ .02 & .13 $\pm$ .02 & .17 $\pm$ .04 & .05 $\pm$ .02\\
  \multicolumn{2}{c|}{DIP-VAE I}& .60 $\pm$ .06 & .71 $\pm$ .02 & .87 $\pm$ .02 & .11 $\pm$ .03 & .10 $\pm$ .04 & .06 $\pm$ .02\\
  \multicolumn{2}{c|}{DIP-VAE II}& .70 $\pm$ .05 & .71 $\pm$ .01 & .81 $\pm$ .01 & .11 $\pm$ .02 & .10 $\pm$ .02 & .04 $\pm$ .01\\
\cline{1-8}
 \multicolumn{2}{c|}{Random}& .46 $\pm$ .01 & .59 $\pm$ .01 & .68 $\pm$ .02 & .05 $\pm$ .01 & .09 $\pm$ .01 & .03 $\pm$ .01\\
\end{tabular}

%% file: tables/coil2.tex
\begin{tabular}{cc|cccccc|c} 
 &&$\uparrow$ Beta-VAE&$\uparrow$ Inde&$\uparrow$ Mod&$\uparrow$ DCI&$\uparrow$ SAP&$\uparrow$ MIG& $\downarrow$ Prediction\\
 \hline
\multirow{ 2}{*}{LSBD-VAE}
  & mean  & \textbf{1.00} $\pm$ .00 & .99 $\pm$ .00 & \textbf{1.00 $\pm$ .00} & .98 $\pm$ .01 & .31 $\pm$ .05 & .01 $\pm$ .01
          & 8.2e-5 $\pm$ 3.0e-6\\
         
  &\cellcolor{gray!20} best  &\cellcolor{gray!20} 1.00 &\cellcolor{gray!20} .99 &\cellcolor{gray!20} 1.00 &\cellcolor{gray!20} .92 &\cellcolor{gray!20} .43 &\cellcolor{gray!20} .00
          &\cellcolor{gray!20} 7.4e-5\\

\hline
\multirow{ 2}{*}{LSBD-VAE$^*$}
  & mean  & \textbf{1.00 $\pm$ .00} & .99 $\pm$ .00 & \textbf{1.00 $\pm$ .00} & .97 $\pm$ .02 & .25 $\pm$ .05 & .04 $\pm$ .03
          & 9.4e-5 $\pm$ 3.1e-6\\
         
  &\cellcolor{gray!20} best  &\cellcolor{gray!20} 1.00 &\cellcolor{gray!20} .99 &\cellcolor{gray!20} .98 &\cellcolor{gray!20} .91 &\cellcolor{gray!20} .45 &\cellcolor{gray!20} .00
          &\cellcolor{gray!20} 8.9e-5\\

\multirow{ 2}{*}{SOBDRL}
  & mean  & .99 $\pm$ .00 & .91 $\pm$ .05 & .90 $\pm$ .05 & .90 $\pm$ .05 & .52 $\pm$ .05 & .31 $\pm$ .06
          & \textbf{7.2e-5 $\pm$ 5.0e-6}\\
         
  &\cellcolor{gray!20} best  &\cellcolor{gray!20} 1.00 &\cellcolor{gray!20} 1.00 &\cellcolor{gray!20} .98 &\cellcolor{gray!20} 1.00 &\cellcolor{gray!20} .60 &\cellcolor{gray!20} .42
          &\cellcolor{gray!20} 5.4e-5\\

\multirow{ 2}{*}{GMA-VAE}
  & mean  & \textbf{1.00 $\pm$ .00} & \textbf{1.00 $\pm$ .00} & \textbf{1.00 $\pm$ .00} & \textbf{1.00 $\pm$ .00} & \textbf{.64 $\pm$ .04} & \textbf{.46 $\pm$ .02}
          & \textbf{7.4e-5 $\pm$ 5.5e-6}\\
         
  &\cellcolor{gray!20} best  &\cellcolor{gray!20} 1.00 &\cellcolor{gray!20} 1.00 &\cellcolor{gray!20} 1.00 &\cellcolor{gray!20} 1.00 &\cellcolor{gray!20} .71 &\cellcolor{gray!20} .48
          &\cellcolor{gray!20} 6.7e-5\\

\multirow{ 2}{*}{A-VAE}
  & mean  & .79 $\pm$ .03 & .79 $\pm$ .03 & .67 $\pm$ .05 & .58 $\pm$ .03 & .08 $\pm$ .02 & .09 $\pm$ .03
          & 1.0e-4 $\pm$ 2.5e-5\\
         
  &\cellcolor{gray!20} best  &\cellcolor{gray!20} .81 &\cellcolor{gray!20} .85 &\cellcolor{gray!20} .75 &\cellcolor{gray!20} .55 &\cellcolor{gray!20} .04 &\cellcolor{gray!20} .05
          & \cellcolor{gray!20}7.0e-5\\
 \hline
 \multicolumn{2}{c|}{$\beta$-VAE } & .67 $\pm$ .03 & .75 $\pm$ .01 & .83 $\pm$ .01 & .36 $\pm$ .01 & .30 $\pm$ .01 & .24 $\pm$ .04\\
  \multicolumn{2}{c|}{Factor-VAE}& .81 $\pm$ .09 & .78 $\pm$ .01 & .87 $\pm$ .01 & .50 $\pm$ .05 & .34 $\pm$ .02 & .23 $\pm$ .02\\
  \multicolumn{2}{c|}{DIP-VAE I}& .74 $\pm$ .01 & .71 $\pm$ .01 & .83 $\pm$ .02 & .41 $\pm$ .02 & .32 $\pm$ .02 & .22 $\pm$ .03\\
  \multicolumn{2}{c|}{DIP-VAE II}& .84 $\pm$ .02 & .75 $\pm$ .01 & .83 $\pm$ .03 & .42 $\pm$ .01 & .37 $\pm$ .04 & .16 $\pm$ .02\\
\cline{1-8}
 \multicolumn{2}{c|}{Random}& .58 $\pm$ .06 & .64 $\pm$ .02 & .83 $\pm$ .02 & .27 $\pm$ .03 & .11 $\pm$ .02 & .13 $\pm$ .02\\
\end{tabular}

%% file: tables/coil3.tex
\begin{tabular}{cc|cccccc|c} 
 &&$\uparrow$ Beta-VAE&$\uparrow$ Inde&$\uparrow$ Mod&$\uparrow$ DCI&$\uparrow$ SAP&$\uparrow$ MIG& $\downarrow$ Prediction\\
 \hline
\multirow{ 2}{*}{LSBD-VAE}
  & mean  & \textbf{1.00 $\pm$ .00} & .98 $\pm$ .00 & \textbf{1.00 $\pm$ .00} & .97 $\pm$ .01 & \textbf{.42 $\pm$ .04} & .08 $\pm$ .02
          & \textbf{1.2e-4 $\pm$ 5.4e-6}\\
         
  &\cellcolor{gray!20} best  &\cellcolor{gray!20} 1.00 &\cellcolor{gray!20} .98 &\cellcolor{gray!20} 1.00 &\cellcolor{gray!20} .94 &\cellcolor{gray!20} .40 &\cellcolor{gray!20} .12
          &\cellcolor{gray!20} 1.0e-04\\

\hline
\multirow{ 2}{*}{LSBD-VAE$^*$}
  & mean  & .86 $\pm$ .06 & .92 $\pm$ .02 & .83 $\pm$ .03 & .75 $\pm$ .03 & .21 $\pm$ .04 & .12 $\pm$ .02
          & 3.2e-3 $\pm$ 6.2e-4\\
         
  &\cellcolor{gray!20} best  &\cellcolor{gray!20} 1.00 &\cellcolor{gray!20} .96 &\cellcolor{gray!20} .88 &\cellcolor{gray!20} .86 &\cellcolor{gray!20} .28 &\cellcolor{gray!20} .13
          &\cellcolor{gray!20} 2.6e-04\\

\multirow{ 2}{*}{SOBDRL}
  & mean  & .86 $\pm$ .09 & .88 $\pm$ .03 & .82 $\pm$ .05 & .66 $\pm$ .07 & .26 $\pm$ .08 & .10 $\pm$ .02
          & 4.7e-4 $\pm$ 1.3e-4\\
         
  &\cellcolor{gray!20} best  &\cellcolor{gray!20} .96 &\cellcolor{gray!20} .91 &\cellcolor{gray!20} .87 &\cellcolor{gray!20} .80 &\cellcolor{gray!20} .42 &\cellcolor{gray!20} .12
          &\cellcolor{gray!20}2.6e-04 \\

\multirow{ 2}{*}{GMA-VAE}
  & mean  & \textbf{1.00 $\pm$ .00} & \textbf{1.00 $\pm$ .00} & \textbf{1.00 $\pm$ .00} & \textbf{1.00 $\pm$ .00} & .35 $\pm$ .03 & \textbf{.17 $\pm$ .02}
          & \textbf{1.1e-4 $\pm$ 3.7e-6}\\
         
  &\cellcolor{gray!20} best  &\cellcolor{gray!20} 1.00 &\cellcolor{gray!20} 1.00 &\cellcolor{gray!20} 1.00 &\cellcolor{gray!20} 1.00 &\cellcolor{gray!20} .34 &\cellcolor{gray!20} .13
          &\cellcolor{gray!20} 9.9e-05\\

\multirow{ 2}{*}{A-VAE}
  & mean  & .49 $\pm$ .01 & .80 $\pm$ .02 & .61 $\pm$ .01 & .15 $\pm$ .01 & .03 $\pm$ .01 & .04 $\pm$ .01
          & 2.8e-4 $\pm$ 1.9e-5\\
         
  &\cellcolor{gray!20} best  &\cellcolor{gray!20} .50 &\cellcolor{gray!20} .85 &\cellcolor{gray!20} .62 &\cellcolor{gray!20} .13 &\cellcolor{gray!20} .02 &\cellcolor{gray!20} .04
          &\cellcolor{gray!20} 2.0e-04\\
 \hline
 \multicolumn{2}{c|}{$\beta$-VAE }& .44 $\pm$ .05 & .82 $\pm$ .01 & .76 $\pm$ .02 & .10 $\pm$ .03 & .04 $\pm$ .01 & .08 $\pm$ .01\\
  \multicolumn{2}{c|}{Factor-VAE}& .74 $\pm$ .02 & .84 $\pm$ .01 & .80 $\pm$ .01 & .28 $\pm$ .02 & .11 $\pm$ .02 & .13 $\pm$ .03\\
  \multicolumn{2}{c|}{DIP-VAE I}& .41 $\pm$ .01 & .84 $\pm$ .01 & .78 $\pm$ .01 & .04 $\pm$ .00 & .06 $\pm$ .01 & .04 $\pm$ .00\\
  \multicolumn{2}{c|}{DIP-VAE II}& .49 $\pm$ .05 & .82 $\pm$ .01 & .75 $\pm$ .01 & .10 $\pm$ .01 & .08 $\pm$ .02 & .06 $\pm$ .01\\
\cline{1-8}
 \multicolumn{2}{c|}{Random}& .38 $\pm$ .02 & .65 $\pm$ .01 & .77 $\pm$ .01 & .05 $\pm$ .01 & .08 $\pm$ .02 & .05 $\pm$ .01\\
\end{tabular}

%% file: tables/3dshapes.tex
\begin{tabular}{cc|cccccc|c} 
 &&$\uparrow$ Beta-VAE&$\uparrow$ Inde&$\uparrow$ Mod&$\uparrow$ DCI&$\uparrow$ SAP&$\uparrow$ MIG& $\downarrow$ Prediction\\
 \hline
\multirow{ 2}{*}{LSBD-VAE}
  & mean  & \textbf{.98 $\pm$ .00} & .98 $\pm$ .00 &  \textbf{1.00 $\pm$ .00} & .99 $\pm$ .00 & .43 $\pm$ .03 & .07 $\pm$ .01
          & 7.6e-4 $\pm$ 2.2e-5\\
         
  &\cellcolor{gray!20} best  &\cellcolor{gray!20} .98 &\cellcolor{gray!20} .99 &\cellcolor{gray!20} 1.00 &\cellcolor{gray!20} 1.00 &\cellcolor{gray!20} .45 &\cellcolor{gray!20} .04
          &\cellcolor{gray!20} 6.8e-4\\

\hline
\multirow{ 2}{*}{LSBD-VAE$^*$}
  & mean  & .97 $\pm$ .00 & .97 $\pm$ .01 &  \textbf{1.00 $\pm$ .00} & .95 $\pm$ .02 & .49 $\pm$ .04 & .09 $\pm$ .02
          & 1.1e-3 $\pm$ 2.6e-4\\
         
  &\cellcolor{gray!20} best  &\cellcolor{gray!20} .97 &\cellcolor{gray!20} .98 &\cellcolor{gray!20} 1.00 &\cellcolor{gray!20} .98 &\cellcolor{gray!20} .59 &\cellcolor{gray!20} .03
          &\cellcolor{gray!20} 8.3e-4\\

\multirow{ 2}{*}{SOBDRL}
  & mean  & .91 $\pm$ .05 & .93 $\pm$ .02 & .96 $\pm$ .03 & .85 $\pm$ .07 & .41 $\pm$ .05 &  \textbf{.22 $\pm$ .05}
          & 2.0e-3 $\pm$ 8.7e-4\\
         
  &\cellcolor{gray!20} best  &\cellcolor{gray!20} .98 &\cellcolor{gray!20} .97 &\cellcolor{gray!20} 1.00 &\cellcolor{gray!20} .97 &\cellcolor{gray!20} .31 &\cellcolor{gray!20} .26
          &\cellcolor{gray!20} 9.2e-4\\

\multirow{ 2}{*}{GMA-VAE}
  & mean  & \textbf{.98 $\pm$ .00} &  \textbf{1.00 $\pm$ .00} &  \textbf{1.00 $\pm$ .00} &  \textbf{1.00 $\pm$ .00} &  \textbf{.56 $\pm$ .04} &  \textbf{.26 $\pm$ .02}
          &  \textbf{6.0e-4 $\pm$ 3.2e-5}\\
         
  &\cellcolor{gray!20} best  &\cellcolor{gray!20} .98 &\cellcolor{gray!20} 1.00 &\cellcolor{gray!20} 1.00 &\cellcolor{gray!20} 1.00 &\cellcolor{gray!20} .69 &\cellcolor{gray!20} .19
          &\cellcolor{gray!20} 5.1e-4\\

\multirow{ 2}{*}{A-VAE}
  & mean  & .35 $\pm$ .03 & .88 $\pm$ .01 & .75 $\pm$ .00 & .15 $\pm$ .02 & .06 $\pm$ .01 & .04 $\pm$ .01
          & 8.1e-4 $\pm$ 3.6e-5\\
         
  &\cellcolor{gray!20} best  &\cellcolor{gray!20} .42 &\cellcolor{gray!20} .90 &\cellcolor{gray!20} .74 &\cellcolor{gray!20} .18 &\cellcolor{gray!20} .07 &\cellcolor{gray!20} .02
          &\cellcolor{gray!20} 8.2e-4\\
 \hline
 \multicolumn{2}{c|}{$\beta$-VAE }& .52 $\pm$ .04 & .72 $\pm$ .01 & .74 $\pm$ .00 & .22 $\pm$ .04 & .14 $\pm$ .01 & .08 $\pm$ .01\\
  \multicolumn{2}{c|}{Factor-VAE}& .63 $\pm$ .03 & .74 $\pm$ .01 & .76 $\pm$ .01 & .35 $\pm$ .01 & .20 $\pm$ .05 & .11 $\pm$ .03\\
  \multicolumn{2}{c|}{DIP-VAE I}& .62 $\pm$ .02 & .71 $\pm$ .02 & .75 $\pm$ .01 & .28 $\pm$ .02 & .15 $\pm$ .01 & .12 $\pm$ .02\\
  \multicolumn{2}{c|}{DIP-VAE II}& .67 $\pm$ .03 & .69 $\pm$ .01 & .75 $\pm$ .01 & .37 $\pm$ .01 & .14 $\pm$ .01 & .22 $\pm$ .02\\
\cline{1-8}
 \multicolumn{2}{c|}{Random}& .31 $\pm$ .04 & .64 $\pm$ .01 & .70 $\pm$ .01 & .04 $\pm$ .00 & .04 $\pm$ .00 & .04 $\pm$ .00\\
\end{tabular}

%% file: tables/mpi3d.tex
\begin{tabular}{cc|cccccc|c} 
        &&$\uparrow$ Beta-VAE&$\uparrow$ Inde&$\uparrow$ Mod&$\uparrow$ DCI&$\uparrow$ SAP&$\uparrow$ MIG& $\downarrow$ Prediction\\
        \hline
       \multirow{ 2}{*}{SOBDRL}
         & mean   & \textbf{1.00 $\pm$ .00} & .93 $\pm$ .01 & \textbf{.47 $\pm$ .07} & .32 $\pm$ .02 & \textbf{.28 $\pm$ .05} & \textbf{.04 $\pm$ .02}
                 & 8.5e-4 $\pm$ 5.9e-5\\
                
         &\cellcolor{gray!20} best &\cellcolor{gray!20} 1.00 &\cellcolor{gray!20} .93 &\cellcolor{gray!20} .40 &\cellcolor{gray!20} .29 &\cellcolor{gray!20} .14 &\cellcolor{gray!20} .02
                 &\cellcolor{gray!20}  6.8e-4\\

       \multirow{ 2}{*}{HAE}
         & mean  & \textbf{1.00 $\pm$ .00} & .71 $\pm$ .06 & \textbf{.42 $\pm$ .07} & .19 $\pm$ .05 & \textbf{.29 $\pm$ .07} & .02 $\pm$ .00
                 & \textbf{4.6e-4 $\pm$ 2.4e-5}\\
                
         &\cellcolor{gray!20} best & \cellcolor{gray!20}1.00 &\cellcolor{gray!20} .96 &\cellcolor{gray!20} .40 &\cellcolor{gray!20} .39 &\cellcolor{gray!20} .54 &\cellcolor{gray!20} 2.6e-3
                 &\cellcolor{gray!20} 4.7e-4\\

       \multirow{ 2}{*}{GMA-VAE}
         & mean  & \textbf{1.00 $\pm$ .00} & \textbf{.97 $\pm$ .01} & \textbf{.49 $\pm$ .08} & \textbf{.53 $\pm$ .04} & .21 $\pm$ .04 & \textbf{.04 $\pm$ .01}
                 & 1.6e-3 $\pm$ 5.8e-4\\
                
         &\cellcolor{gray!20} best  &\cellcolor{gray!20} 1.00 &\cellcolor{gray!20} .99 &\cellcolor{gray!20} .40 &\cellcolor{gray!20} .58 &\cellcolor{gray!20} .15 &\cellcolor{gray!20} .06
                 &\cellcolor{gray!20} 3.5e-4\\
        \hline
        \multicolumn{2}{c|}{$\beta$-VAE }& .81 $\pm$ .03 & .53 $\pm$ .01 & \textbf{.41 $\pm$ .06} & .02 $\pm$ .00 & .08 $\pm$ .01 & .03 $\pm$ .00\\
        \cline{1-8}
        \multicolumn{2}{c|}{Random}& .63 $\pm$ .03 & .52 $\pm$ .01 & .36 $\pm$ .05 & 8.4e-3 $\pm$ 2.6e-3 & .06 $\pm$ .02 & .02 $\pm$ .00\\
       \end{tabular}

%% file: tables/mpi3d_noisy.tex
\begin{tabular}{cc|cccccc|c} 
        &&$\uparrow$ Beta-VAE&$\uparrow$ Inde&$\uparrow$ Mod&$\uparrow$ DCI&$\uparrow$ SAP&$\uparrow$ MIG& $\downarrow$ Prediction\\
        \hline
       \multirow{ 2}{*}{SOBDRL}
         & mean   & .99 $\pm$ .00 & .79 $\pm$ .05 & \textbf{.48 $\pm$ .07} & .20 $\pm$ .05 & .41 $\pm$ .07 & \textbf{.04 $\pm$ .01}
                 & 2.8e-3 $\pm$ 2.4e-4\\

        &\cellcolor{gray!20} best  & \cellcolor{gray!20} 1.00 & \cellcolor{gray!20} .90 & \cellcolor{gray!20} .79 &\cellcolor{gray!20}  .26 &\cellcolor{gray!20}  .45 &\cellcolor{gray!20}  .01
                &\cellcolor{gray!20}  3.3e-3\\

       \multirow{ 2}{*}{HAE}
         & mean  & \textbf{1.00 $\pm$ .00} & .79 $\pm$ .06 & \textbf{.47 $\pm$ .05} & .22 $\pm$ .05 & .32 $\pm$ .09 & \textbf{.03 $\pm$ .01}
                 & \textbf{1.7e-3 $\pm$ 3.1e-5}\\

        &\cellcolor{gray!20} best   &\cellcolor{gray!20}  1.00 &\cellcolor{gray!20}  .90 &\cellcolor{gray!20}  .40 &\cellcolor{gray!20}  .27 &\cellcolor{gray!20}  .45 &\cellcolor{gray!20}  .01
                &\cellcolor{gray!20}  1.7e-3\\

       \multirow{ 2}{*}{GMA-VAE}
         & mean  & \textbf{1.00 $\pm$ .00} & \textbf{.90 $\pm$ .00} & \textbf{.50 $\pm$ .08} & \textbf{.31 $\pm$ .02} & \textbf{.51 $\pm$ .04} & \textbf{.04 $\pm$ .01}
                 & \textbf{1.6e-3 $\pm$ 3.6e-5} \\
                
         &\cellcolor{gray!20} best & \cellcolor{gray!20} 1.00 &\cellcolor{gray!20}  .90 &\cellcolor{gray!20}  .84 &\cellcolor{gray!20}  .31 &\cellcolor{gray!20}  .64 &\cellcolor{gray!20}  .03
                 &\cellcolor{gray!20}  1.6e-3\\
        \hline
        \multicolumn{2}{c|}{$\beta$-VAE }& .81 $\pm$ .03 & .53 $\pm$ .01 & .41 $\pm$ .06 & .02 $\pm$ .00 & .08 $\pm$ .01 & \textbf{.03 $\pm$ .00}\\
        \cline{1-8}
        \multicolumn{2}{c|}{Random}& .63 $\pm$ .03 & .52 $\pm$ .01 & .36 $\pm$ .05 & 8.4e-3 $\pm$ 2.6e-3 & .06 $\pm$ .02 & .02 $\pm$ .00\\
       \end{tabular}

%% file: tables/coil2_iid.tex
\begin{tabular}{c|cccccc} 
     &$\uparrow$ Beta-VAE&$\uparrow$ Inde&$\uparrow$ Mod&$\uparrow$ DCI&$\uparrow$ SAP&$\uparrow$ MIG\\
    \hline
     LSBDVAE
     & \textbf{1.00 $\pm$ .00} & .99 $\pm$ .00 & .99 $\pm$ .00 & \textbf{.95 $\pm$ .02} & .31 $\pm$ .06 & .05 $\pm$ .03\\
   
   \hline
     LSBD-VAE$^*$ 
     & \textbf{1.00 $\pm$ .00} & .99 $\pm$ .00 & \textbf{1.00 $\pm$ .00} & \textbf{.97 $\pm$ .01} & .28 $\pm$ .05 & .09 $\pm$ .03\\

     SOBDRL  
     & .97 $\pm$ .02 & .90 $\pm$ .05 & .94 $\pm$ .04 & .88 $\pm$ .05 & \textbf{.44 $\pm$ .05} & .27 $\pm$ .05\\

     GMA-VAE  
     & \textbf{1.00 $\pm$ .00} & \textbf{1.00 $\pm$ .00} & \textbf{1.00 $\pm$ .00} & \textbf{.98 $\pm$ .01} & \textbf{.48 $\pm$ .04} & \textbf{.47 $\pm$ .01}\\

     A-VAE  
     & .76 $\pm$ .05 & .75 $\pm$ .02 & .71 $\pm$ .05 & .53 $\pm$ .03 & .06 $\pm$ .02 & .11 $\pm$ .02\\
   \end{tabular}

%% file: tables/coil2_ood.tex
\begin{tabular}{c|cccccc} 
     &$\uparrow$ Beta-VAE&$\uparrow$ Inde&$\uparrow$ Mod&$\uparrow$ DCI&$\uparrow$ SAP&$\uparrow$ MIG\\
    \hline
     LSBDVAE
     &\textbf{ 1.00 $\pm$ .00} & .99 $\pm$ .00 & \textbf{1.00 $\pm$ .00} & .98 $\pm$ .01 & .26 $\pm$ .05 & .04 $\pm$ .03 \\
   
   \hline
     LSBD-VAE$^*$ 
     & \textbf{1.00 $\pm$ .00} & .94 $\pm$ .04 & .98 $\pm$ .02 & \textbf{.95 $\pm$ .04} & .34 $\pm$ .03 & .10 $\pm$ .06\\

     SOBDRL  
     & \textbf{1.00 $\pm$ .00} & .88 $\pm$ .03 & .97 $\pm$ .01 & .92 $\pm$ .02 & \textbf{.53 $\pm$ .05} & .18 $\pm$ .07\\

     GMA-VAE  
     & .99 $\pm$ .00 & \textbf{1.00 $\pm$ .00} & \textbf{1.00 $\pm$ .00} & .98 $\pm$ .02 & \textbf{.47 $\pm$ .07} & \textbf{.39 $\pm$ .09}\\

     A-VAE  
     & .67 $\pm$ .03 & .76 $\pm$ .01 & .92 $\pm$ .02 & .53 $\pm$ .03 & .08 $\pm$ .02 & .06 $\pm$ .02\\
   \end{tabular}

%% file: tables/coil3_iid.tex
\begin{tabular}{c|cccccc} 
     &$\uparrow$ Beta-VAE&$\uparrow$ Inde&$\uparrow$ Mod&$\uparrow$ DCI&$\uparrow$ SAP&$\uparrow$ MIG\\
    \hline
     LSBDVAE
     & \textbf{1.00 $\pm$ .00} & .98 $\pm$ .00 & \textbf{1.00 $\pm$ .00} & .97 $\pm$ .01 & .38 $\pm$ .02 & .09 $\pm$ .01 \\
   
   \hline
     LSBD-VAE$^*$ 
     & .66 $\pm$ .07 & .86 $\pm$ .01 & .86 $\pm$ .01 & .28 $\pm$ .08 & .18 $\pm$ .04 & .11 $\pm$ .02\\

     SOBDRL  
     & .98 $\pm$ .01 & .93 $\pm$ .01 & .91 $\pm$ .02 & .76 $\pm$ .05 & \textbf{.36 $\pm$ .05} & \textbf{.16 $\pm$ .04}\\

     GMA-VAE  
     & \textbf{1.00 $\pm$ .00} & \textbf{1.00 $\pm$ .00} & \textbf{1.00 $\pm$ .00} & \textbf{1.00 $\pm$ .00} & .23 $\pm$ .04 & \textbf{.17 $\pm$ .03}\\

     A-VAE  
     & .48 $\pm$ .02 & .80 $\pm$ .01 & .61 $\pm$ .01 & .15 $\pm$ .02 & .01 $\pm$ .00 & .02 $\pm$ .00\\
   \end{tabular}

%% file: tables/coil3_ood.tex
\begin{tabular}{c|cccccc} 
     &$\uparrow$ Beta-VAE&$\uparrow$ Inde&$\uparrow$ Mod&$\uparrow$ DCI&$\uparrow$ SAP&$\uparrow$ MIG\\
    \hline
     LSBDVAE
     & \textbf{1.00 $\pm$ .00} & \textbf{.99 $\pm$ .00} & 1.00 $\pm$ .00 & \textbf{.99 $\pm$ .00} & \textbf{.43 $\pm$ .04} & .05 $\pm$ .02\\
   
   \hline
     LSBD-VAE$^*$ 
     & .71 $\pm$ .09 & .85 $\pm$ .02 & .89 $\pm$ .03 & .38 $\pm$ .12 & .20 $\pm$ .03 & .07 $\pm$ .03\\

     SOBDRL  
     & .97 $\pm$ .02 & .89 $\pm$ .03 & .97 $\pm$ .01 & .82 $\pm$ .06 & .33 $\pm$ .03 & .12 $\pm$ .02\\

     GMA-VAE  
     & \textbf{1.00 $\pm$ .00} & \textbf{.99 $\pm$ .00} & \textbf{1.00 $\pm$ .00} & \textbf{.99 $\pm$ .00} & .29 $\pm$ .03 & \textbf{.18 $\pm$ .03}\\

     A-VAE  
     & .55 $\pm$ .03 & .74 $\pm$ .03 & .78 $\pm$ .02 & .18 $\pm$ .01 & .04 $\pm$ .01 & .04 $\pm$ .01\\
   \end{tabular}

%% file: tikz/expe_c.tikz
\begin{tikzpicture}
	\begin{pgfonlayer}{nodelayer}
		\node [style=none] (0) at (8, 0) {\includegraphics[height=2cm]{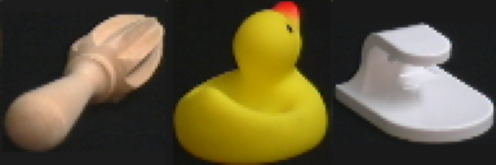}};
		\node [style=none] (1) at (-8, 0) {\includegraphics[height=2cm]{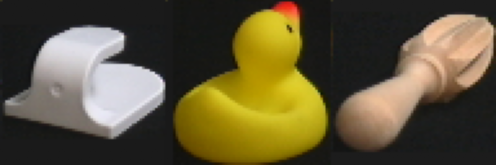}};
		\node [style=none] (2) at (0, 0) {\includegraphics[height=2cm]{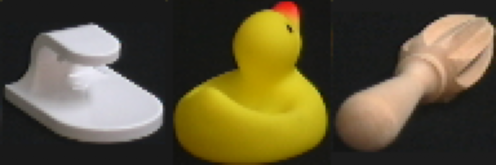}};
		\node [style=none] (3) at (-4, 0.75) {\huge $r_1^+$};
		\node [style=none] (4) at (4, 0.75) {\huge $\sigma$};
		\node [style=none] (5) at (-2, -1.75) {};
		\node [style=none] (6) at (-1.5, -1.75) {};
		\node [style=none] (8) at (-1.75, -2.25) {\huge $\mathcal G_{R_1}$};
		\node [style=none] (9) at (-0.25, -1.75) {};
		\node [style=none] (10) at (0.25, -1.75) {};
		\node [style=none] (11) at (0, -2.25) {\huge $\mathcal G_{R_2}$};
		\node [style=none] (12) at (1.5, -1.75) {};
		\node [style=none] (13) at (2, -1.75) {};
		\node [style=none] (14) at (1.75, -2.25) {\huge $\mathcal G_{R_3}$};
		\node [style=none] (15) at (-1.75, 1.25) {};
		\node [style=none] (16) at (0, 1.25) {};
		\node [style=none] (17) at (1.75, 1.25) {};
		\node [style=none] (18) at (0, 1.25) {};
		\node [style=none] (19) at (0, 2.25) {\huge $\mathcal G_{\mathfrak S}$};
	\end{pgfonlayer}
	\begin{pgfonlayer}{edgelayer}
		\draw [style=head, in=360, out=180] (2.west) to (1.east);
		\draw [style=head] (2.east) to (0.west);
		\draw [style=dhead, bend left=135, looseness=5.50] (5.center) to (6.center);
		\draw [style=dhead, bend left=135, looseness=5.50] (9.center) to (10.center);
		\draw [style=dhead, bend left=135, looseness=5.50] (12.center) to (13.center);
		\draw [style=dhead, bend left=45] (15.center) to (18.center);
		\draw [style=dhead, bend left=300, looseness=0.75] (17.center) to (18.center);
		\draw [style=dhead, bend right, looseness=0.75] (17.center) to (15.center);
	\end{pgfonlayer}
\end{tikzpicture}

%% file: tikz/expe_3d.tikz
\begin{tikzpicture}
	\begin{pgfonlayer}{nodelayer}
		\node [style=none] (1) at (-4, 0) {\includegraphics[height=2cm]{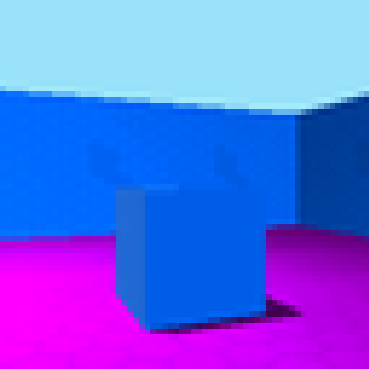}};
		\node [style=none] (2) at (0, 0) {\includegraphics[height=2cm]{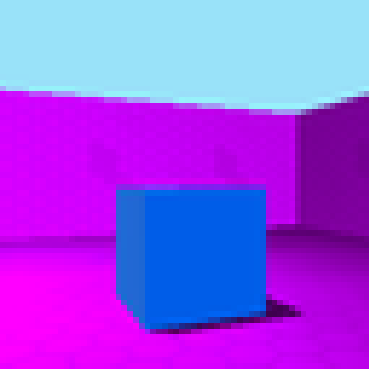}};
		\node [style=none] (11) at (-1, -0.5) {};
		\node [style=none] (12) at (-2, -0.5) {};
		\node [style=none] (13) at (-2, -2.5) {};
		\node [style=none] (14) at (-4, -2.5) {\includegraphics[height=2cm]{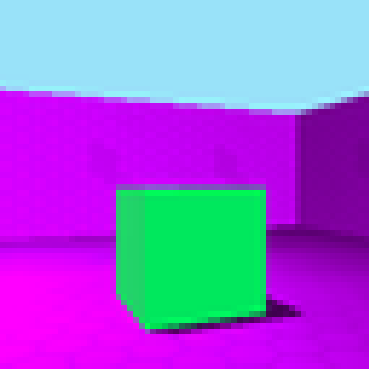}};
		\node [style=none] (15) at (-1, 0.5) {};
		\node [style=none] (16) at (-2, 0.5) {};
		\node [style=none] (17) at (-2, 2.5) {};
		\node [style=none] (18) at (-4, 2.5) {\includegraphics[height=2cm]{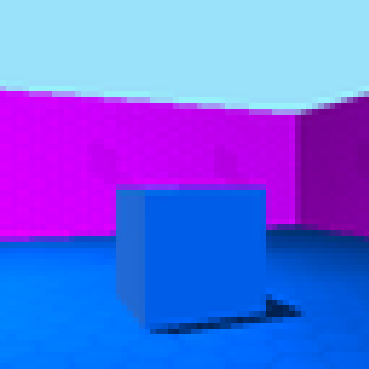}};
		\node [style=none] (19) at (4, 0) {\includegraphics[height=2cm]{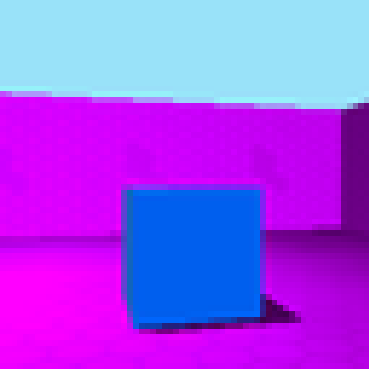}};
		\node [style=none] (20) at (1, -0.5) {};
		\node [style=none] (21) at (2, -0.5) {};
		\node [style=none] (22) at (2, -2.5) {};
		\node [style=none] (23) at (4, -2.5) {\includegraphics[height=2cm]{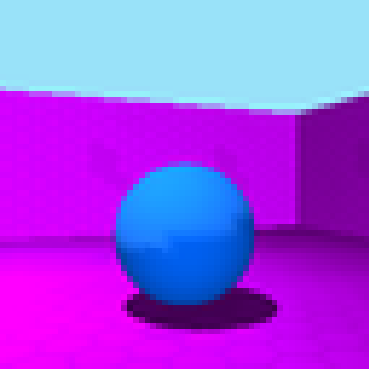}};
		\node [style=none] (24) at (1, 0.5) {};
		\node [style=none] (25) at (2, 0.5) {};
		\node [style=none] (26) at (2, 2.5) {};
		\node [style=none] (27) at (4, 2.5) {\includegraphics[height=2cm]{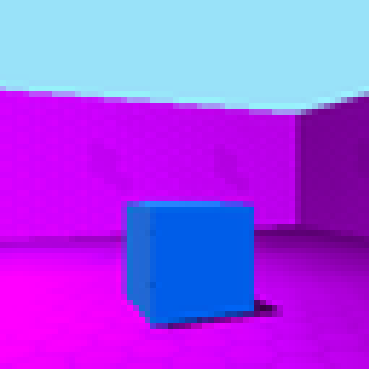}};
		\node [style=center] (28) at (-1.5, 1.5) {\small floor\\\small hue};
		\node [style=center] (29) at (-1.5, -1.5) {\small object \\\small hue};
		\node [style=center] (30) at (-1.9, 0.25) {\small wall hue};
		\node [style=center] (31) at (1.9, 0.25) {\small orientation};
		\node [style=center] (32) at (1.5, 1.5) {\small object\\\small scale};
		\node [style=center] (33) at (1.5, -1.5) {\small object\\\small shape};
	\end{pgfonlayer}
	\begin{pgfonlayer}{edgelayer}
		\draw [style=head, in=360, out=180] (2.west) to (1.east);
		\draw (11.center) to (12.center);
		\draw (12.center) to (13.center);
		\draw [style=head] (13.center) to (14.east);
		\draw (15.center) to (16.center);
		\draw (16.center) to (17.center);
		\draw [style=head] (17.center) to (18.east);
		\draw (20.center) to (21.center);
		\draw (21.center) to (22.center);
		\draw [style=head] (22.center) to (23.west);
		\draw (24.center) to (25.center);
		\draw (25.center) to (26.center);
		\draw [style=head] (26.center) to (27.west);
		\draw [style=head] (2.east) to (19.west);
	\end{pgfonlayer}
\end{tikzpicture}

%% file: tikz/expe_m.tikz
\begin{tikzpicture}
	\begin{pgfonlayer}{nodelayer}
		\node [style=none] (0) at (4, 0) {\includegraphics[height=2cm]{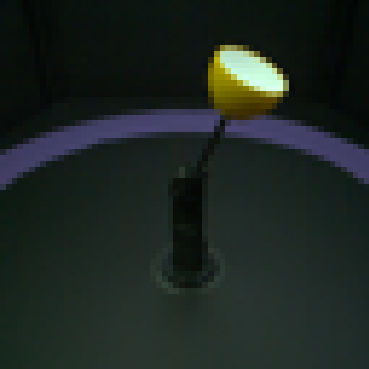}};
		\node [style=none] (1) at (-4, 0) {\includegraphics[height=2cm]{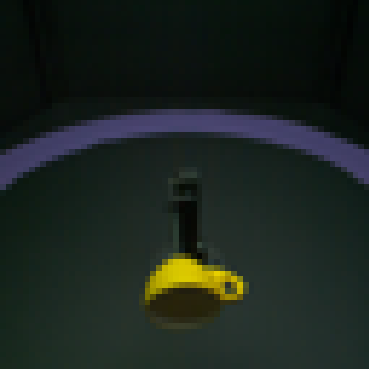}};
		\node [style=none] (2) at (0, 0) {\includegraphics[height=2cm]{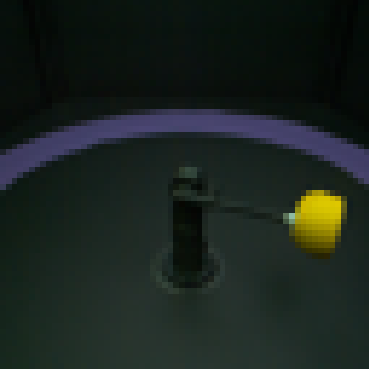}};
		\node [style=none] (3) at (-2, 0.3) {\large $[0,\theta]$};
		\node [style=none] (3) at (2, 0.3) {\large $[1,\theta]$};
	\end{pgfonlayer}
	\begin{pgfonlayer}{edgelayer}
		\draw [style=head, in=360, out=180] (2.west) to (1.east);
		\draw [style=head] (2.east) to (0.west);
	\end{pgfonlayer}
\end{tikzpicture}

%% file: tables/expecluster.tex
\begin{tabular}{c|cccccc}
$G$ & $\mathcal G_{\mathfrak S}$ & $\mathcal G_1$ & $\mathcal G_2$ & $\mathcal G_3$ & $\mathcal G_4$ & includes $e$ ? \\
\hline

$\mathfrak S_2\times\frac{\mathbb Z}{7\mathbb Z}\times\frac{\mathbb Z}{5\mathbb Z}$
  & $G_{\mathfrak S}^*$
  & $\{-r_1,r_1\}$
  & $\{-r_2,r_2\}$
  &
  &
  & No \\

$\mathfrak S_3\times\frac{\mathbb Z}{5\mathbb Z}\times\frac{\mathbb Z}{5\mathbb Z}\times\frac{\mathbb Z}{3\mathbb Z}$
  & $G_{\mathfrak S}^*$
  & $\{-r_1,r_1\}$
  & $\{-r_2,r_2\}$
  & $\{-r_3,r_3\}$
  &
  & No \\
  
$\mathfrak S_3\times\frac{\mathbb Z}{7\mathbb Z}\times \frac{\mathbb Z}{5\mathbb Z}\times\frac{\mathbb Z}{3\mathbb Z}$
  & $\{\sigma\}$
  & $\{r_1, 3r_1\}$
  & $\{3r_2, 4r_2\}$
  & $\{r_3\}$
  & 
  & Yes \\
  
$\mathfrak S_3\times\frac{\mathbb Z}{3\mathbb Z}\times \frac{\mathbb Z}{7\mathbb Z}\times\frac{\mathbb Z}{3\mathbb Z}$
  & $\{\sigma\}$
  & $\{2r_1, 6r_1\}$
  & $\{2r_2\}$
  & $\{r_3\}$
  & 
  & Yes\\
  
$\mathfrak S_4\times\frac{\mathbb Z}{7\mathbb Z}\times \frac{\mathbb Z}{5\mathbb Z}\times\frac{\mathbb Z}{3\mathbb Z}\times\frac{\mathbb Z}{3\mathbb Z}$
  & $\{\sigma,\sigma^{-1}\}$
  & $\{r_1, 3r_1\}$
  & $\{3r_2,4r_2\}$
  & $\{r_3\}$
  & $\{r_4\}$
  & No\\
\end{tabular}

%% file: tables/nn.tex
\begin{tabular}{ll} 
\hline
\multicolumn{2}{c}{ENCODER}\\
\hline

Input & Size $(x,\ y,\ c)$\\
Conv  & Channels: $32$, Kernel size: $8$, Stride: $4$, Padding: $2$, ReLU \\
Conv  & Channels: $64$, Kernel size: $8$, Stride: $4$, Padding: $2$, ReLU \\
Reshape&Flatten into $x/4\times y/4\times64$\\
Dense & Dimension: $256$, ReLU \\
Dense & Dimension: depends on $d$ and the noise parametrisation \\
\hline
\hline
\multicolumn{2}{c}{DECODER}\\
\hline

Input & Size $d$\\
Dense & Dimension: $256$, ReLU \\
Dense & Dimension: $x/4\times y/4\times64$, ReLU \\
Reshape&$(x/4,\ y/4,\ 64)$\\
ConvT & Channels: $32$, Kernel size: $8$, Stride: $4$, Padding: $2$, ReLU \\
ConvT & Channels: $c$, Kernel size: $8$, Stride: $4$, Padding: $2$, Sigmoid \\
\hline
\end{tabular}